\documentclass{article}




\usepackage[final,nonatbib]{neurips_2024}


\usepackage[utf8]{inputenc} 
\usepackage[T1]{fontenc}    
\usepackage{hyperref}       
\usepackage{url}            
\usepackage{booktabs}       
\usepackage{amsfonts}       
\usepackage{nicefrac}       
\usepackage{microtype}      
\usepackage{xcolor}         

\usepackage{amssymb}
\usepackage{amsmath,nccmath}
\usepackage{bbm}
\usepackage{multirow}
\usepackage{makecell}
\usepackage{amsthm}
\newtheorem{definition}{Definition}[section]
\newtheorem{theorem}{Theorem}[section]

\newtheorem{lemma}[theorem]{Lemma}

\usepackage{bigints}
\usepackage{graphicx}
\usepackage{enumitem}
\usepackage{tabu}

\usepackage{minitoc}

\newcommand{\colvec}[2][.8]{%
  \scalebox{#1}{%
    \renewcommand{\arraystretch}{.8}%
    $\begin{bmatrix}#2\end{bmatrix}$%
  }
}

\title{Identifiability Analysis of Linear ODE Systems with Hidden Confounders}

%

\author{%
  Yuanyuan~Wang\\
  The University of Melbourne\\
  \texttt{yuanyuanw2@student.unimelb.edu.au} \\
    \And
  Biwei~Huang \\
  University of California, San Diego\\
  \texttt{bih007@ucsd.edu} \\
    \And
  Wei~Huang \\
  The University of Melbourne\\
  \texttt{wei.huang@unimelb.edu.au} \\
    \And
  Xi~Geng \\
  The University of Melbourne\\
  \texttt{xi.geng@unimelb.edu.au} \\
    \And
  Mingming~Gong \thanks{Corresponding author.}\\
  The University of Melbourne\\
  \texttt{mingming.gong@unimelb.edu.au} \\
}

\begin{document}

\maketitle

\begin{abstract}
The identifiability analysis of linear Ordinary Differential Equation (ODE) systems is a necessary prerequisite for making reliable causal inferences about these systems. While identifiability has been well studied in scenarios where the system is fully observable, the conditions for identifiability remain unexplored when latent variables interact with the system. This paper aims to address this gap by presenting a systematic analysis of identifiability in linear ODE systems incorporating hidden confounders. Specifically, we investigate two cases of such systems. In the first case, latent confounders exhibit no causal relationships, yet their evolution adheres to specific functional forms, such as polynomial functions of time $t$. Subsequently, we extend this analysis to encompass scenarios where hidden confounders exhibit causal dependencies, with the causal structure of latent variables described by a Directed Acyclic Graph (DAG). The second case represents a more intricate variation of the first case, prompting a more comprehensive identifiability analysis. Accordingly, we conduct detailed identifiability analyses of the second system under various observation conditions, including both continuous and discrete observations from single or multiple trajectories. To validate our theoretical results, we perform a series of simulations, which support and substantiate our findings.
\end{abstract}
\section{Introduction}
Understanding the dynamics of systems governed by Ordinary Differential Equations (ODEs) is fundamental in various scientific disciplines, from physics \cite{bzhikhatlov2017research, koleva2010two, mandelzweig2001quasilinearization, zhong2019symplectic}, biology \cite{gratie2013ode, polynikis2009comparing, quach2007estimating, stadter2021benchmarking, su2021deep} to economics \cite{dockner2000differential, tsoularis2021some,  tu2012dynamical, weber2011optimal}. These ODE systems provide a natural framework for modeling causal relationships among system variables, enabling us to make reliable interpretations and interventions \cite{mooij2013ordinary, rubenstein2018deterministic, scholkopf2021toward}. Central to unraveling the causal mechanisms of such systems is the concept of identifiability analysis, which aims to uncover conditions under which system parameters can be uniquely determined from error-free observations. Identifiability is crucial for ensuring reliable parameter estimates, thereby guaranteeing reliable causal inferences about the system \cite{wang2024generator}. The motivation for our research on the identifiability analysis of ODE systems arises from the necessity of making reliable causal inferences about these systems.

Our research focuses on the homogeneous linear ODE system, represented as:
\begin{equation}\label{eq:ODE}
    \dot{\boldsymbol{x}}(t) = A \boldsymbol{x}(t)\,, \ \ \ 
    \boldsymbol{x}(0) = \boldsymbol{x}_0\,,
\end{equation}
where $t\in [0, \infty)$ denotes time, $\boldsymbol{x}(t)\in \mathbb{R}^d$ represents the system's state at time $t$, $\dot{\boldsymbol{x}}(t)$ denotes the first derivative of $\boldsymbol{x}(t)$ w.r.t. time, and $\boldsymbol{x}_0$ represents the initial condition of the system. The solution (trajectory) of the system, denoted as $\boldsymbol{x}(t; \boldsymbol{x}_0, A)$ for $t\in [0, \infty)$, is a single $d$-dimensional trajectory initialized with $\boldsymbol{x}_0$. 

Existing literature has extensively examined the identifiability of linear ODE systems under the assumption of complete observability, where all state variables are directly observable \cite{bellman1970structural, gargash1980necessary, glover1974parametrizations, grewal1976identifiability,  qiu2022identifiability, stanhope2014identifiability, wang2024identifiability}. Specifically, researchers have investigated identifiability of the ODE system \eqref{eq:ODE} from a single whole trajectory \cite{qiu2022identifiability, stanhope2014identifiability}, and extended analysis to discrete observations sampled from the trajectory \cite{wang2024identifiability}. However, practical scenarios often entail systems with latent variables, rendering them not entirely observable.  In this paper, we explore the identifiability analysis of this ODE system under latent confounders, particularly examining cases where no causal relationships exist from observable variables to latent variables, a commonly assumed condition in causality analysis with hidden variables \cite{cai2019triad,  chen2021fritl, huang2022latent, kaltenpoth2023causal, xie2020generalized, xie2022identification}.

In this paper, we focus on two scenarios:
\begin{enumerate}
    \item \textbf{Independent latent confounders:} Latent variables exhibit no causal relationships among themselves, leading to the following linear ODE system:\begin{equation}\label{eq:ODE1}
    \begin{bmatrix}
        \dot{\boldsymbol{x}}(t) \\
        \dot{\boldsymbol{z}}(t)
    \end{bmatrix}
    = \begin{bmatrix}
        A & B\\
        \boldsymbol{0} & \boldsymbol{0}
    \end{bmatrix}
    \begin{bmatrix}
        \boldsymbol{x}(t)\\
        \boldsymbol{z}(t)
    \end{bmatrix} +     \begin{bmatrix}
        \boldsymbol{0}\\
        \boldsymbol{f}(t)
    \end{bmatrix}\,, \ \ \
    \begin{bmatrix}
        \boldsymbol{x}(0)\\
        \boldsymbol{z}(0)
    \end{bmatrix}
    =\begin{bmatrix}
        \boldsymbol{x}_0\\
        \boldsymbol{z}_0
    \end{bmatrix}\,.
\end{equation}
    \item \textbf{Causally related latent confounders:} Latent variables exhibit causal relationships among themselves, specifically, they follow a DAG structure, represented as:
    \begin{equation}\label{eq:ODE2}
    \begin{bmatrix}
        \dot{\boldsymbol{x}}(t) \\
        \dot{\boldsymbol{z}}(t)
    \end{bmatrix}
    = \begin{bmatrix}
        A & B\\
        \boldsymbol{0} & G
    \end{bmatrix}
    \begin{bmatrix}
        \boldsymbol{x}(t)\\
        \boldsymbol{z}(t)
    \end{bmatrix}\,, \ \ \
    \begin{bmatrix}
        \boldsymbol{x}(0)\\
        \boldsymbol{z}(0)
    \end{bmatrix}
    =\begin{bmatrix}
        \boldsymbol{x}_0\\
        \boldsymbol{z}_0
    \end{bmatrix}\,.
\end{equation}
\end{enumerate}

In these two ODE systems, $\boldsymbol{x}(t) \in \mathbb{R}^d$ denotes the state of observable variables $\boldsymbol{x}= (x_1, x_2, \ldots, x_d)$, while $\boldsymbol{z}(t) \in \mathbb{R}^p$ denotes the state of latent variables $\boldsymbol{z} = (z_1, z_2, \ldots, z_p)$. Example causal structures of these two ODE systems are illustrated in Figure \ref{fig}. It is noteworthy that the structure may include cycles and self-loops within the observable variables. Additionally, two real-world examples are provided in Appendix \ref{app:real world examples}.
\begin{figure}[ht]
  \centering
  \includegraphics[width=0.95\textwidth]{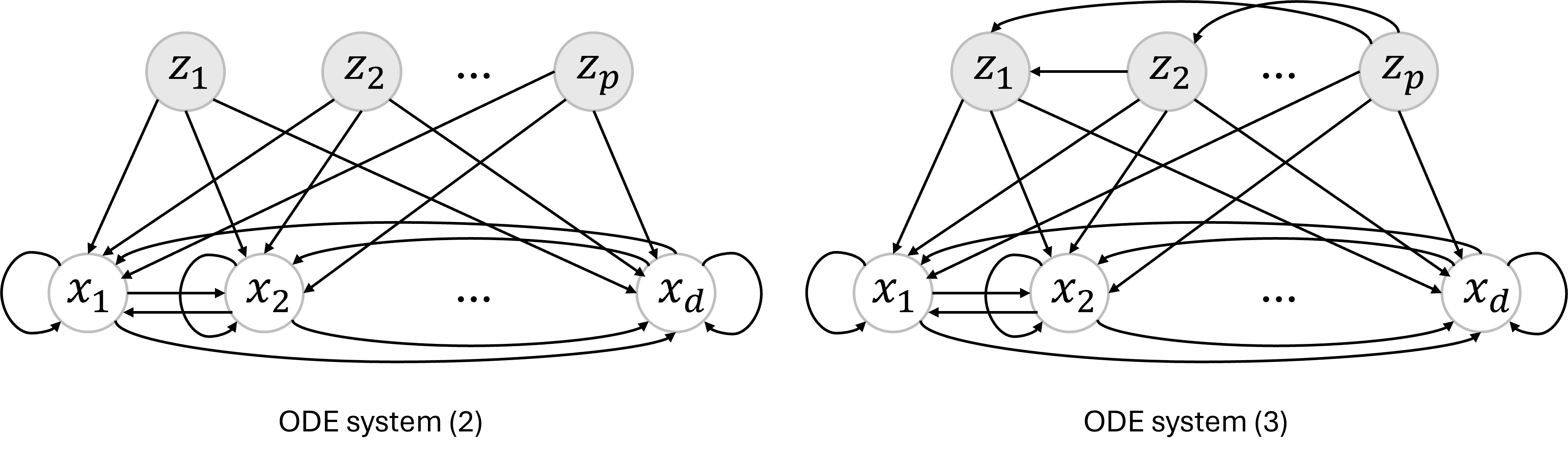}
  \caption{Example causal structures of the ODE system \eqref{eq:ODE1} and \eqref{eq:ODE2}.}
  \label{fig}
\end{figure}

This paper provides an identifiability analysis for the ODE system \eqref{eq:ODE1} under specific latent variable evolutions, such as polynomial functions of time $t$. Additionally, we conduct a systematic identifiability analysis of the ODE system \eqref{eq:ODE2} when the causal structure of the latent variables can be described by a DAG.

\section{Background}

\subsection{Causal interpretation of the ODE system}
When an ODE system describes the underlying causal mechanisms governing a dynamic system, it provides a natural framework for modeling causal relationships among system variables. The causal structure inherent in such systems can be directly read off \cite{mooij2013ordinary,scholkopf2021toward}. For instance, in the ODE system \eqref{eq:ODE}, where the $ij$-th entry of the parameter matrix $A$ is denoted as $A_{ij}$, the presence of $A_{ij} \neq 0$ signifies that the derivative of $x_i(t)$ is influenced by $x_j(t)$, thus indicating a causal link from $x_j$ to $x_i$. Here, $x_i$ denotes the $i$-th variable of the ODE system \eqref{eq:ODE}, and $x_i(t)$ represents its state at time $t$. Since the right hand side of the ODE system \eqref{eq:ODE} does not explicitly depend on time $t$, the causal structure of this ODE system is time-invariant.

An essential prerequisite for reliably inferring the causal structure and effects of an ODE system, for purposes of interpretation or intervention, is the identifiability analysis of such systems. To underscore this necessity, we provide an illustrative example. Consider the ODE system \eqref{eq:ODE2}. Set
\begin{equation*}
    \boldsymbol{x}_0 = \begin{bmatrix}
     1\\
     1
    \end{bmatrix}\,,\ \ 
    \boldsymbol{z}_0 = \begin{bmatrix}
     1\\
     1
    \end{bmatrix}\,, \ \ 
        B = \begin{bmatrix}
        1 & 1\\
        1 & 1
    \end{bmatrix}\,,\ \ 
    G = \begin{bmatrix}
        0 & 1\\
        0 & 0
    \end{bmatrix}\,,
\end{equation*}
\begin{equation*}
        A = \begin{bmatrix}
        1 & 0\\
        0 & 1
    \end{bmatrix}\,,\ \ \ 
    A' = \begin{bmatrix}
    0 & 1\\
    1 & 0
\end{bmatrix}\,, \ \ \ 
    M = \begin{bmatrix}
    A & B\\
    \boldsymbol{0} & G
    \end{bmatrix}\,, \ \ \ 
    M' = \begin{bmatrix}
    A' & B\\
    \boldsymbol{0} & G
    \end{bmatrix}\,.
\end{equation*}
Calculations reveal that the solutions (trajectory) of the ODE system \eqref{eq:ODE2} with parameter matrices $M$ or $M'$ are identical, i.e.,
\begin{equation*}
\begin{bmatrix}
    \boldsymbol{x}(t)\\
    \boldsymbol{z}(t)
\end{bmatrix}
     = e^{Mt}\begin{bmatrix}
    \boldsymbol{x}_0\\
    \boldsymbol{z}_0
\end{bmatrix}
     = e^{M't}\begin{bmatrix}
    \boldsymbol{x}_0\\
    \boldsymbol{z}_0
\end{bmatrix}\,.
\end{equation*}
This indicates that using observations sampled from this trajectory to estimate parameter matrix $M$ may end up yielding $M'$, which exhibits a fundamentally distinct causal relationship between $x_1$ and $x_2$, see Figure \ref{fig:example}. 
\begin{figure}[ht]
  \centering
  \includegraphics[width=0.6\textwidth]{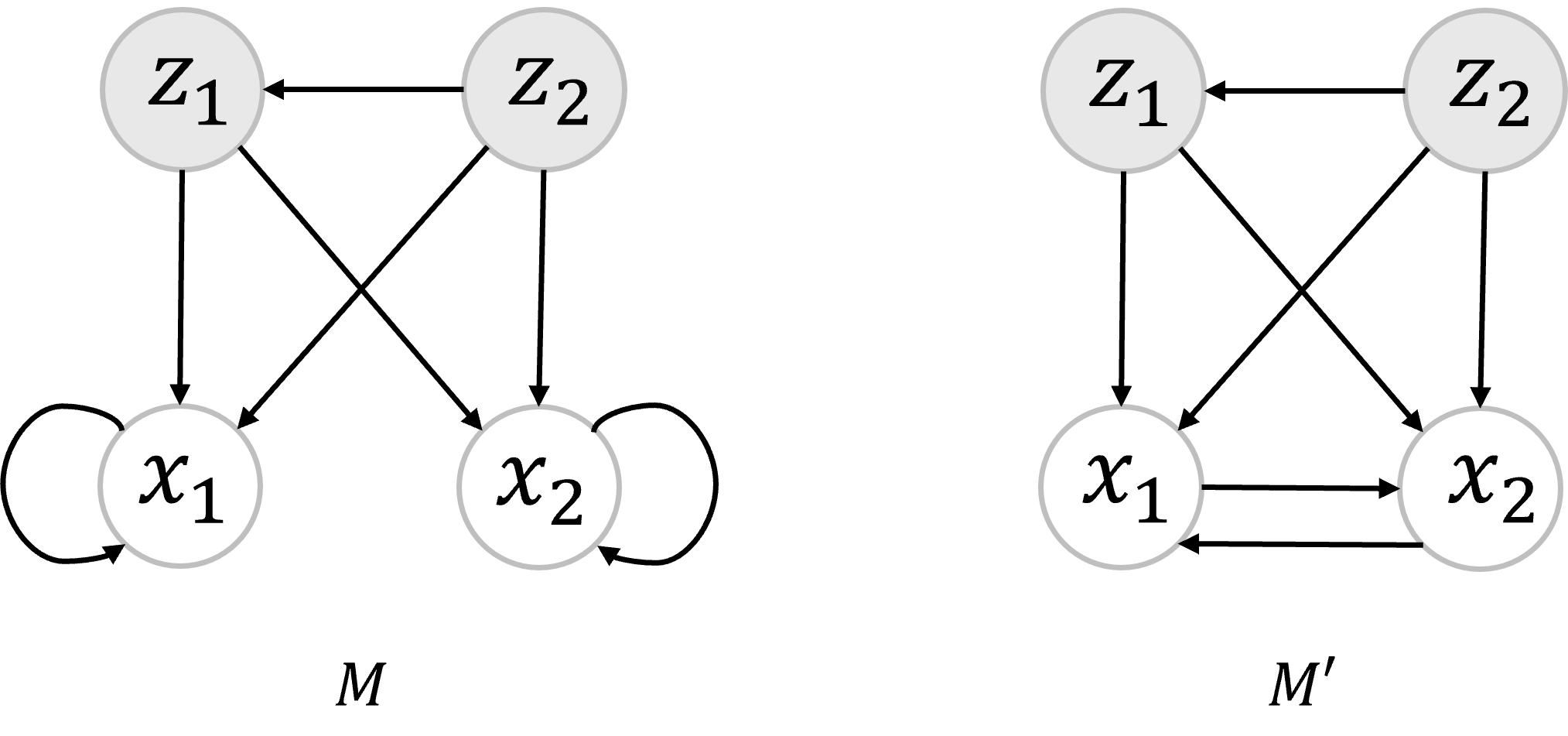}
  \caption{Causal structures of the ODE system \eqref{eq:ODE2} with parameter matrix $M$ and $M'$.}
  \label{fig:example}
\end{figure}
This discrepancy in parameter estimation, wherein $M'$ is obtained instead of the true underlying parameter matrix $M$, may lead to misleading interpretations and causal inferences, potentially influencing decision-making, particularly regarding interventions. For instance, intervention with $x_1(t) = 1$, under the true underlying parameter matrix $M$, yields the trajectory $x_2(t) = 4e^t-t-3$ (post-intervention), whereas under matrix $M'$, the trajectory becomes $x_2(t) = t^2/2 + 3t +1$ (post-intervention). Detailed calculations are provided in Appendix \ref{app:unidentifiable example}.

\subsection{Identifiability analysis of the linear ODE system (\ref{eq:ODE})}
The identifiability analysis of the ODE system \eqref{eq:ODE} has been well studied. Here, we present a fundamental definition and theorem essential for understanding identifiability in the ODE system \eqref{eq:ODE}. Denoting its solution as $\boldsymbol{x}(t; \boldsymbol{x}_0, A)$, it is noteworthy that the system is fully observable, without latent variables interacting with it. We present the identifiability definition and theorem as follows.

\begin{definition}\label{def:identifiability ODE}
For $\boldsymbol{x}_0 \in \mathbb{R}^d, A\in \mathbb{R}^{d\times d}$, the ODE system \eqref{eq:ODE} is said to be $(\boldsymbol{x}_0, A)$-identifiable, if for all $\boldsymbol{x}'_0 \in \mathbb{R}^d$ and all $A'\in \mathbb{R}^{d\times d}$, with
$(\boldsymbol{x}_0, A) \neq (\boldsymbol{x}'_0, A')$, it holds that $\boldsymbol{x}(\cdot; \boldsymbol{x}_0, A)\ {\neq}\boldsymbol{x}(\cdot; \boldsymbol{x}'_0, A') $.\footnote{$\boldsymbol{x}(\cdot; \boldsymbol{x}_0, A) = \{\boldsymbol{x}(t; \boldsymbol{x}_0, A): 0 \leqslant t < \infty\}$, this inequation means that there exists at least one $t\geqslant 0$ such that $\boldsymbol{x}(t; \boldsymbol{x}_0, A) \neq \boldsymbol{x}(t; \boldsymbol{x}'_0, A') $.} 
\end{definition}

\begin{lemma}\label{lemma:identifiability ODE}
For $\boldsymbol{x}_0 \in \mathbb{R}^d, A\in \mathbb{R}^{d\times d}$, the ODE system \eqref{eq:ODE} is $(\boldsymbol{x}_0, A)$-identifiable if and only if condition \textbf{A0} is satisfied.
\begin{enumerate}
    \item [\textbf{A0}] the set of vectors $\{\boldsymbol{x}_0, A\boldsymbol{x}_0, \ldots, A^{d-1}\boldsymbol{x}_0\}$ is linearly independent.
\end{enumerate}
\end{lemma}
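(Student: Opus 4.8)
The plan is to reduce the trajectory condition to an algebraic condition on matrix powers and then treat each implication separately. The unique solution of \eqref{eq:ODE} is $\boldsymbol{x}(t;\boldsymbol{x}_0,A)=e^{At}\boldsymbol{x}_0=\sum_{k\geqslant 0}\tfrac{t^k}{k!}A^k\boldsymbol{x}_0$, which is real-analytic in $t$. Hence two trajectories $\boldsymbol{x}(\cdot;\boldsymbol{x}_0,A)$ and $\boldsymbol{x}(\cdot;\boldsymbol{x}'_0,A')$ coincide on $[0,\infty)$ if and only if all Taylor coefficients at $t=0$ agree, i.e. $A^k\boldsymbol{x}_0=(A')^k\boldsymbol{x}'_0$ for every $k\geqslant 0$ (equivalently, differentiate the trajectory identity $k$ times and evaluate at $t=0$). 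The case $k=0$ already forces $\boldsymbol{x}'_0=\boldsymbol{x}_0$, so the entire question reduces to: does $A^k\boldsymbol{x}_0=(A')^k\boldsymbol{x}_0$ for all $k$ force $A=A'$?

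For the direction \textbf{A0} $\Rightarrow$ identifiable, assume $\{\boldsymbol{x}_0,A\boldsymbol{x}_0,\dots,A^{d-1}\boldsymbol{x}_0\}$ is linearly independent, hence a basis of $\mathbb{R}^d$. If the trajectories coincide, the reduction gives $\boldsymbol{x}'_0=\boldsymbol{x}_0$ and $A^k\boldsymbol{x}_0=(A')^k\boldsymbol{x}_0$ for all $k$. I would then show $A$ and $A'$ agree on each basis vector $A^k\boldsymbol{x}_0$ ($0\leqslant k\leqslant d-1$) through the telescoping identity $A(A^k\boldsymbol{x}_0)=A^{k+1}\boldsymbol{x}_0=(A')^{k+1}\boldsymbol{x}_0=A'\bigl((A')^k\boldsymbol{x}_0\bigr)=A'(A^k\boldsymbol{x}_0)$, where the last equality uses $(A')^k\boldsymbol{x}_0=A^k\boldsymbol{x}_0$. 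Agreement on a basis yields $A=A'$, so $(\boldsymbol{x}_0,A)=(\boldsymbol{x}'_0,A')$ and identifiability follows.

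For the converse I would argue the contrapositive. If \textbf{A0} fails, the Krylov subspace $\mathcal{K}:=\operatorname{span}\{A^k\boldsymbol{x}_0:k\geqslant 0\}$ coincides with $\operatorname{span}\{\boldsymbol{x}_0,\dots,A^{d-1}\boldsymbol{x}_0\}$ (by Cayley--Hamilton every higher power lies in this span) and has dimension $r<d$. Since $\mathcal{K}$ is $A$-invariant and contains $\boldsymbol{x}_0$, the trajectory $e^{At}\boldsymbol{x}_0$ depends only on the restriction $A|_{\mathcal{K}}$. I would pick a complement $\mathcal{W}$ with $\mathbb{R}^d=\mathcal{K}\oplus\mathcal{W}$ and $\dim\mathcal{W}=d-r>0$, choose a basis of $\mathcal{W}$ containing a fixed vector $\boldsymbol{w}$, and define $A'$ to equal $A$ on $\mathcal{K}$ and on every basis vector of $\mathcal{W}$ except $\boldsymbol{w}$, while setting $A'\boldsymbol{w}=A\boldsymbol{w}+\boldsymbol{w}$. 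Then $\mathcal{K}$ remains $A'$-invariant and $A'|_{\mathcal{K}}=A|_{\mathcal{K}}$, so $A^k\boldsymbol{x}_0=(A')^k\boldsymbol{x}_0$ for all $k$ and the trajectories coincide, yet $A'\neq A$; thus the system is not $(\boldsymbol{x}_0,A)$-identifiable.

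The routine parts are the analyticity reduction and the telescoping computation. The step demanding the most care is the converse construction: verifying that $\mathcal{K}$ is genuinely proper and $A$-invariant, so that modifying $A$ only off $\mathcal{K}$ preserves the entire trajectory, and confirming that the resulting $A'$ is a well-defined linear map distinct from $A$. I expect the main obstacle to be stating this construction cleanly enough that the $A'$-invariance of $\mathcal{K}$ is transparent, rather than any deep technical difficulty.
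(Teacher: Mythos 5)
Your proof is correct, but note that the paper itself offers no proof of this lemma to compare against: it is imported wholesale, adapted from \cite[Lemma 2]{wang2024identifiability}, which in turn traces back to Stanhope et al.\ \cite{stanhope2014identifiability}. Measured against that cited literature, your argument is the standard one and is sound in both directions. The reduction via analyticity to matching Taylor coefficients $A^k\boldsymbol{x}_0=(A')^k\boldsymbol{x}'_0$ is exactly the moment-matching mechanism that the present paper reuses in its own appendix proofs (e.g.\ the Krylov-type vectors $F^k\boldsymbol{y}_0$ in the proofs of Theorems \ref{thm:identifiability ODE1} and \ref{thm: identifiability ODE2 continuous}), and your telescoping step for the forward direction is airtight since agreement on the Krylov basis determines $A$. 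Your converse makes explicit what the paper only states geometrically after the lemma --- that \textbf{A0} is equivalent to $\boldsymbol{x}_0$ not lying in a proper $A$-invariant subspace: when \textbf{A0} fails, the Krylov space $\mathcal{K}$ is a proper $A$-invariant subspace (properness by linear dependence, invariance plus the span-stabilization by Cayley--Hamilton, both of which you verify), and perturbing $A$ only on a complement, via $A'\boldsymbol{w}=A\boldsymbol{w}+\boldsymbol{w}$, leaves $A'|_{\mathcal{K}}=A|_{\mathcal{K}}$ and hence the entire trajectory unchanged while $A'\neq A$. The one presentational refinement worth making is to note explicitly that this construction also covers the degenerate case $\boldsymbol{x}_0=\boldsymbol{0}$ (where $\mathcal{K}=\{\boldsymbol{0}\}$ and the whole of $\mathbb{R}^d$ serves as $\mathcal{W}$); the argument goes through verbatim, but flagging it preempts the only edge case a careful reader would probe.
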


Definition \ref{def:identifiability ODE} and Theorem \ref{lemma:identifiability ODE} are adapted from \cite[Definition 1]{wang2024identifiability} and \cite[Lemma2]{wang2024identifiability}. We use $\boldsymbol{x}_0'$ and $A'$ to distinguish other system parameters from the true system parameters $\boldsymbol{x}_0$ and $A$; $\boldsymbol{x}_0'$ and $A'$ can represent any $d$-dimensional initial conditions and any $d\times d$ parameter matrices, respectively. Here instead of describing a collective property of a set of systems, we describe an intrinsic property of a single system with parameters ($\boldsymbol{x}_0, A$). In practice, the aim is to ascertain whether the true underlying system parameter ($\boldsymbol{x}_0, A$) is uniquely determined by observations. Hence, ($\boldsymbol{x}_0, A$)-identifiability offers a more intuitive description of the identifiability of the ODE system from a practical perspective.

From a geometric perspective, condition \textbf{A0} stated in Lemma \ref{lemma:identifiability ODE} indicates that the initial condition $\boldsymbol{x}_0$ is not contained in an $A$-invariant \textbf{proper} subspace of $\mathbb{R}^d$. Intuitively, this means the trajectory of this system started from $\boldsymbol{x}_0$ spans the entire $d$-dimensional state space. That is, our observations cover information on all dimensions of the state space, thus rendering the identifiability of the system. Additionally, condition \textbf{A0} is generic, as noted in \cite{wang2024generator}, meaning that the set of system parameters violating this condition has Lebesgue measure zero. Thus, condition \textbf{A0} is satisfied for almost all combinations of $\boldsymbol{x}_0$ and $A$.
\section{Identifiability analysis of the linear ODE system (\ref{eq:ODE1})}\label{sec:ODE1}
In this section, we present the identifiability condition for the linear ODE system \eqref{eq:ODE1}.
We consider the function $\boldsymbol{f}(t)$ in \eqref{eq:ODE1} as a specific function of time $t$. Here we first define $\boldsymbol{f}(t)$ as a $r$-degree polynomial function of time $t$, expressed as follows: 
\begin{equation}\label{eq:f(t)}
    \boldsymbol{f}(t) = \sum_{k=0}^r \boldsymbol{v}_k t^k\,, \ \ \ \boldsymbol{v}_k \in \mathbb{R}^{p}\,.
\end{equation}
Simple calculations show that 
\begin{equation*}
    \boldsymbol{z}(t) =\sum_{k=0}^r \cfrac{\boldsymbol{v}_k}{k+1} t^{k+1} + \boldsymbol{z}_0\,.
\end{equation*}
Thus, 
\begin{equation}\label{eq:dxt1}
    \dot{\boldsymbol{x}}(t) = A \boldsymbol{x}(t) + B \boldsymbol{z}(t)\\
    = A \boldsymbol{x}(t) + \sum_{k=0}^{r} \cfrac{B\boldsymbol{v}_k}{k+1} t^{k+1} + B\boldsymbol{z}_0\,.
\end{equation}

We denote the unknown parameters of the ODE system \eqref{eq:ODE1} as $\boldsymbol{\theta}$, specifically, $\boldsymbol{\theta}:=(\boldsymbol{x}_0,\boldsymbol{z}_0, A, B, \{\boldsymbol{v}_k\}_0^r)$, where $\{\boldsymbol{v}_k\}_0^r$ denotes all the $\boldsymbol{v}_k$'s for $k = 0, \ldots, r$. Let $[\boldsymbol{x}^T(t; \boldsymbol{\theta}), \boldsymbol{z}^T(t; \boldsymbol{\theta})]^{T}$ denote the solution of the ODE system \eqref{eq:ODE1}. It is important to note that under our hidden variables setting, only $\boldsymbol{x}(t; \boldsymbol{\theta})$ is observable. Based on Equation \eqref{eq:dxt1}, we present the following identifiability definition.

\begin{definition}\label{def:identifiability ODE1}
For $\boldsymbol{x}_0 \in \mathbb{R}^d, \boldsymbol{z}_0 \in \mathbb{R}^p, A\in \mathbb{R}^{d\times d}, B\in \mathbb{R}^{d\times p}$ and $\{\boldsymbol{v}_k\}_0^r \in \mathbb{R}^{p}$, for all $\boldsymbol{x}'_0 \in \mathbb{R}^d$, all $\boldsymbol{z}'_0 \in \mathbb{R}^p$, all $A'\in \mathbb{R}^{d\times d}$, all $B' \in \mathbb{R}^{d\times p}$, and all $\{\boldsymbol{v}'_k\}_0^r \in \mathbb{R}^{p}$, we denote $\boldsymbol{\theta}':=(\boldsymbol{x}'_0,\boldsymbol{z}'_0, A', B', \{\boldsymbol{v}'_k\}_1^r)$, we say the ODE system \eqref{eq:ODE1} is
 $\boldsymbol{\theta}$-identifiable:
if $(\boldsymbol{x}_0, A, B\boldsymbol{z}_0, \{B\boldsymbol{v}_k\}_0^r) \neq (\boldsymbol{x}'_0, A', B'\boldsymbol{z}'_0, \{B'\boldsymbol{v}_k'\}_0^r)$, it holds that $\boldsymbol{x}(\cdot; \boldsymbol{\theta})\ {\neq} \ \boldsymbol{x}(\cdot; \boldsymbol{\theta}') $.
\end{definition}

In the ODE system \eqref{eq:ODE1}, where only variables $\boldsymbol{x}$ are observable, we will, with some terminological leniency, refer to $\boldsymbol{x}(\cdot; \boldsymbol{\theta})$ as the trajectory of the ODE system \eqref{eq:ODE1} with parameters $\boldsymbol{\theta}$. According to Definition \ref{def:identifiability ODE1}, if the ODE system \eqref{eq:ODE1} with a polynomial $\boldsymbol{f}(t)$ is $\boldsymbol{\theta}$-identifiable, then the trajectory of the system can uniquely determine the values of $(\boldsymbol{x}_0, A, B\boldsymbol{z}_0, \{B\boldsymbol{v}_k\}_0^r)$. This determination is sufficient to identify the causal relationships between observable variables $\boldsymbol{x}$ as described by Equation \eqref{eq:dxt1}. Consequently, one can safely intervene in the observable variables of the ODE system and make reliable causal inferences, despite the fact that matrix $B$ cannot be identified under this definition.

\begin{theorem}\label{thm:identifiability ODE1}
    For $\boldsymbol{x}_0 \in \mathbb{R}^d, \boldsymbol{z}_0 \in \mathbb{R}^p, A\in \mathbb{R}^{d\times d}, B\in \mathbb{R}^{d\times p}$, $\{\boldsymbol{v}_k\}_0^r \in \mathbb{R}^{p}$, the ODE system \eqref{eq:ODE1} is $\boldsymbol{\theta}$-identifiable if and only if assumption \textbf{A1} is satisfied.
    \begin{enumerate}
        \item [\textbf{A1}] the set of vectors $\{\boldsymbol{\beta}, A\boldsymbol{\beta}, \ldots, A^{d-1}\boldsymbol{\beta} \}$ is linearly independent, where $ \boldsymbol{\beta} = A^{r+1}(A\boldsymbol{x}_0 + B\boldsymbol{z}_0) + \sum_{j=0}^{r}j!A^{r-j} B \boldsymbol{v}_j$, and $j!$ denotes the factorial of $j$.
    \end{enumerate}
\end{theorem}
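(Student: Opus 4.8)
The plan is to reduce the problem to the fully observable case covered by Lemma \ref{lemma:identifiability ODE} by studying the Taylor coefficients of the observable trajectory at $t=0$. Since $\boldsymbol{f}(t)$ is a polynomial, the forcing term in \eqref{eq:dxt1} is a polynomial in $t$, so $\boldsymbol{x}(t;\boldsymbol{\theta})$ is real-analytic (indeed entire) in $t$; hence two trajectories coincide on $[0,\infty)$ if and only if all derivatives at $t=0$ coincide, and it suffices to compare the sequences $\{\boldsymbol{x}^{(n)}(0;\boldsymbol{\theta})\}_{n\geq 0}$. Writing $\boldsymbol{w}_{-1}:=B\boldsymbol{z}_0$ and $\boldsymbol{w}_j:=B\boldsymbol{v}_j$, differentiating \eqref{eq:dxt1} gives the recursion $\boldsymbol{x}^{(n+1)}(0)=A\,\boldsymbol{x}^{(n)}(0)+\boldsymbol{g}^{(n)}(0)$, where $\boldsymbol{g}^{(0)}(0)=\boldsymbol{w}_{-1}$, $\boldsymbol{g}^{(m)}(0)=(m-1)!\,\boldsymbol{w}_{m-1}$ for $1\leq m\leq r+1$, and $\boldsymbol{g}^{(m)}(0)=\boldsymbol{0}$ afterwards. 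Unrolling this yields the clean closed form $\boldsymbol{x}^{(n)}(0)=A^{\,n-r-2}\boldsymbol{\beta}$ for all $n\geq r+2$, where $\boldsymbol{\beta}$ is exactly the vector in \textbf{A1}; in particular $\boldsymbol{\beta}=\boldsymbol{x}^{(r+2)}(0)$ and the tail of the derivative sequence is precisely $\boldsymbol{\beta},A\boldsymbol{\beta},A^2\boldsymbol{\beta},\dots$. This is the key structural observation: the tail behaves exactly like the derivative sequence of the fully observable system $\dot{\boldsymbol{y}}=A\boldsymbol{y}$ with initial condition $\boldsymbol{\beta}$.

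For the ``if'' direction, assume \textbf{A1}. Then condition \textbf{A0} holds for the pair $(\boldsymbol{\beta},A)$, so Lemma \ref{lemma:identifiability ODE} makes the map $(\boldsymbol{\beta},A)\mapsto\{A^k\boldsymbol{\beta}\}_{k\geq 0}$ injective, and the observed tail determines both $A$ and $\boldsymbol{\beta}$. With $A$ recovered, I would determine the remaining effective parameters by a forward induction through the ``head'': $\boldsymbol{x}^{(0)}(0)=\boldsymbol{x}_0$ fixes $\boldsymbol{x}_0$, and each successive identity $\boldsymbol{x}^{(n)}(0)=A^n\boldsymbol{x}_0+A^{n-1}\boldsymbol{w}_{-1}+\sum_{j=0}^{n-2}j!\,A^{\,n-2-j}\boldsymbol{w}_j$ introduces exactly one new vector $\boldsymbol{w}_{n-2}$, determining $\boldsymbol{w}_{-1},\dots,\boldsymbol{w}_{r-1}$ in turn, with $\boldsymbol{w}_r$ recovered last from the definition of $\boldsymbol{\beta}$. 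This pins down $(\boldsymbol{x}_0,A,B\boldsymbol{z}_0,\{B\boldsymbol{v}_k\}_0^r)$ uniquely, which is $\boldsymbol{\theta}$-identifiability.

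For the converse I would prove the contrapositive by an explicit construction. If \textbf{A1} fails, then $\boldsymbol{\beta}$ lies in the proper $A$-invariant cyclic subspace $\mathcal{K}=\operatorname{span}\{\boldsymbol{\beta},A\boldsymbol{\beta},\dots,A^{d-1}\boldsymbol{\beta}\}$ of dimension $m<d$. I would pick $A'\neq A$ agreeing with $A$ on $\mathcal{K}$ but differing on a complement; since $\mathcal{K}$ is $A$-invariant and contains $\boldsymbol{\beta}$, an easy induction gives $(A')^k\boldsymbol{\beta}=A^k\boldsymbol{\beta}$ for all $k$, so the tail is unchanged. Keeping $\boldsymbol{x}_0'=\boldsymbol{x}_0$, I then solve the head equations successively for compensating vectors $\boldsymbol{w}_{-1}',\dots,\boldsymbol{w}_{r-1}'$ and choose $\boldsymbol{w}_r'$ last so that the primed analogue of $\boldsymbol{\beta}$ again equals $\boldsymbol{\beta}$. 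By construction every derivative at $0$ agrees, so the trajectories coincide, while $A'\neq A$ forces the effective-parameter tuples to differ, contradicting identifiability.

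The step I expect to be the main obstacle is this converse construction, specifically verifying that the compensating vectors $\boldsymbol{w}_{-1}',\dots,\boldsymbol{w}_r'$ are genuinely \emph{realizable}, i.e. expressible as $B'\boldsymbol{z}_0'$ and $B'\boldsymbol{v}_k'$ for admissible $B'\in\mathbb{R}^{d\times p}$, $\boldsymbol{z}_0',\boldsymbol{v}_k'\in\mathbb{R}^p$. I would handle this by taking the required vectors as columns of $B'$ and the $\boldsymbol{z}_0',\boldsymbol{v}_k'$ as standard basis vectors, and by choosing the perturbation $A-A'$ so that the resulting primed vectors remain within a $p$-dimensional subspace; checking this compatibility, together with confirming that the simultaneous head/tail matching is consistent, is the delicate part of the argument.
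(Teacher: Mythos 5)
Your proposal is correct, and while its core algebra coincides with the paper's, it is organized along a genuinely different route. The paper augments the state to $\boldsymbol{y}(t)=[\boldsymbol{x}^T(t),1,t,\ldots,t^{r+1}]^T$, which obeys a homogeneous system $\dot{\boldsymbol{y}}=F\boldsymbol{y}$, applies Lemma \ref{lemma:identifiability ODE} to the pair $(\boldsymbol{y}_0,F)$, and checks condition \textbf{A0} there by a block-rank computation on the Krylov matrix $H=[\boldsymbol{y}_0\ F\boldsymbol{y}_0\ \cdots\ F^{d+r+1}\boldsymbol{y}_0]$, using $\operatorname{rank}(H)=\operatorname{rank}(H_{12})+\operatorname{rank}(H_{21})$ with $H_{12}=[\boldsymbol{\beta}\ A\boldsymbol{\beta}\ \cdots\ A^{d-1}\boldsymbol{\beta}]$. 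Your Taylor-coefficient sequence is exactly the top block of these Krylov vectors ($\boldsymbol{x}^{(k)}(0)$ is the first $d$ entries of $F^k\boldsymbol{y}_0$), so your head/tail split mirrors the paper's $H_{11},H_{21}$ versus $H_{12}$. Where you genuinely diverge, and in fact add something, is the necessity direction: when \textbf{A1} fails, Lemma \ref{lemma:identifiability ODE} applied to the augmented system only yields a competitor $(\boldsymbol{y}'_0,F')$ with $F'$ an \emph{arbitrary} $(d+r+2)\times(d+r+2)$ matrix, which need not have the structured form (polynomial bottom rows, top-right columns of the form $B'\boldsymbol{z}'_0$, $B'\boldsymbol{v}'_k/(k+1)$) and hence need not come from an admissible $\boldsymbol{\theta}'$; the paper passes over this step in one sentence, whereas your contrapositive construction supplies exactly the missing structured competitor. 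Your "if" direction is complete as written: equal trajectories give equal derivative sequences, Lemma \ref{lemma:identifiability ODE} applied to $(\boldsymbol{\beta},A)$ recovers $A$ and $\boldsymbol{\beta}$ from the tail $\boldsymbol{x}^{(n)}(0)=A^{n-r-2}\boldsymbol{\beta}$, $n\geqslant r+2$, and the head recursion then recovers $\boldsymbol{x}_0,\boldsymbol{w}_{-1},\ldots,\boldsymbol{w}_r$ one at a time.

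The step you flag as delicate does close, but not via your first suggestion: taking the $r+2$ compensating vectors $\boldsymbol{w}'_{-1},\ldots,\boldsymbol{w}'_r$ directly as columns of $B'$ fails whenever $p<r+2$, since $\operatorname{rank}(B')\leqslant p$. The correct choice is the one you hint at second, made precise as a rank-one perturbation: set $A'=A-\boldsymbol{u}\boldsymbol{\phi}^T$, where $\boldsymbol{\phi}\neq\boldsymbol{0}$ annihilates the proper $A$-invariant subspace $\mathcal{K}=\operatorname{span}\{\boldsymbol{\beta},A\boldsymbol{\beta},\ldots,A^{d-1}\boldsymbol{\beta}\}$ and $\boldsymbol{u}$ is a nonzero vector in $\operatorname{col}(B)$ (any nonzero $\boldsymbol{u}$ if $B=\boldsymbol{0}$). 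Since $A'$ agrees with $A$ on $\mathcal{K}$, the subspace $\mathcal{K}$ is $A'$-invariant and $(A')^k\boldsymbol{\beta}=A^k\boldsymbol{\beta}$ for all $k$, so the tail is preserved; and each head equation forces $\boldsymbol{w}'_j=\boldsymbol{w}_j+c_j\,\boldsymbol{\phi}\bigl(\boldsymbol{x}^{(j+1)}(0)\bigr)\boldsymbol{u}$ with $c_{-1}=1$ and $c_j=1/j!$, so every compensating vector lies in $\operatorname{col}(B)+\operatorname{span}\{\boldsymbol{u}\}$, a subspace of dimension at most $\max(1,\operatorname{rank} B)\leqslant p$. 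All $r+2$ vectors are therefore simultaneously realizable: let the columns of $B'$ be a basis of this subspace padded with zeros, and take $\boldsymbol{z}'_0,\boldsymbol{v}'_k$ to be the corresponding coordinate vectors. The consistency you worried about is automatic, because the head equation at order $r+2$ \emph{is} the definition of $\boldsymbol{\beta}$, so defining $\boldsymbol{w}'_r$ by it forces $\boldsymbol{\beta}'=\boldsymbol{\beta}$; then all derivatives at $0$ match, analyticity gives equality of trajectories, and $A'\neq A$ violates Definition \ref{def:identifiability ODE1}. With this patch your argument is a complete and self-contained proof of both directions.
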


The proof of Theorem \ref{thm:identifiability ODE1} can be found in Appendix \ref{proof: ODE1}. Condition \textbf{A1} is both sufficient and necessary, indicating, from a geometric perspective, that the vector $\boldsymbol{\beta}$ is not contained in an $A$-invariant proper subspace of $\mathbb{R}^d$ \cite[Lemma 3.1]{stanhope2014identifiability}.

The key point of the proof is the introduction of an augmented state $\boldsymbol{y}(t) = [\boldsymbol{x}^T(t), 1, t, t^2, \dots, t^{r+1}]^{T}$ with a corresponding ODE system:
\begin{equation}\label{eq:new ODE y}
\begin{split}
    \dot{\boldsymbol{y}}(t) &= \underbrace{\begin{bmatrix}
        A & B\boldsymbol{z}_0 & B\boldsymbol{v}_0 & \ldots & B\boldsymbol{v}_{r-1}/r & B\boldsymbol{v}_r/(r+1)\\
        \boldsymbol{0}_d & 0 & 0 & \ldots & 0 & 0\\
        \boldsymbol{0}_d & 1 & 0 & \ldots & 0 & 0\\
        \vdots & \vdots & \vdots & \ddots & \vdots & \vdots \\
        \boldsymbol{0}_d & 0 & 0 & \ldots & r+1 & 0
    \end{bmatrix}}_\textrm{denoted as $F$} \boldsymbol{y}(t)  \,,  \\
    \boldsymbol{y}(0) &= [\boldsymbol{x}_0^T, 1, 0, \ldots, 0]^T := \boldsymbol{y}_0\,,
\end{split}
\end{equation}
where $\boldsymbol{0}_d$ is a $d$-dimensional zero row vector, and matrix $F \in \mathbb{R}^{(d+r+2)\times (d+r+2)}$. The ODE system \eqref{eq:new ODE y} is a homogeneous linear ODE system analogous to \eqref{eq:ODE} but with fully observable variables $\boldsymbol{y}$. In other words, we transform our system of interest, \eqref{eq:ODE1},  which includes hidden confounders, into a fully observable ODE system \eqref{eq:new ODE y}. This allows us to leverage existing identifiability results for homogeneous linear ODE systems, specifically Lemma \ref{lemma:identifiability ODE}, to derive the identifiability condition for the ODE system \eqref{eq:ODE1}. 

Based on this approach, if the state of the hidden variables $\boldsymbol{z}(t)$, as determined by the function $\boldsymbol{f}(t)$ in the ODE system \eqref{eq:ODE1}, can be described by some linear combinations of observable functions of time $t$, then the identifiability condition of the ODE system \eqref{eq:ODE1} can be derived. For an illustration, in the Appendix \ref{app:more cases of ODE1}, we provide identifiability conditions for the ODE system \eqref{eq:ODE1} when $\boldsymbol{f}(t) = \boldsymbol{v}e^t$ and $\boldsymbol{f}(t) = \boldsymbol{v}_1 sin(t) + \boldsymbol{v}_2 cos(t)$. While we do not enumerate all functions $\boldsymbol{f}(t)$ that meet this condition, our primary objective is to demonstrate a method for deriving the identifiability condition for the ODE \eqref{eq:ODE1} when the evolution of its hidden variables conforms to certain specific functions. Researchers can apply this approach to find appropriate functions $\boldsymbol{f}(t)$ according to their specific requirements.
\section{Identifiability analysis of the linear ODE system (\ref{eq:ODE2})}
In this section, we extend the identifiability analysis to linear ODE systems with causally related latent confounders. Specifically, we assume that the causal structure of latent variables satisfies the following latent DAG assumption.


\indent\hspace{0.6cm}\textit{\textbf{Latent DAG}: the causal structure of latent variables can be described by a DAG. }

The DAG assumption is commonly employed in causality studies \cite{cai2019triad, chen2021fritl, kaltenpoth2023causal, pearl2009causality, verma1990causal, xie2020generalized}. Under the latent DAG assumption, the matrix $G$ can be permuted to be a strictly upper triangular matrix, i.e., an upper triangular matrix with zeros along the main diagonal \cite{chen2021fritl, hoyer2008estimation}. Without loss of generality, we set $G$ as a strictly upper triangular matrix.

Since $G$ is a strictly upper triangular matrix, by the Cayley–Hamilton theorem \cite{straubing1983combinatorial}, $G$ is a nilpotent matrix with an index $\leqslant p$. Consequently, $G^k = 0$ for all $k\geqslant p$.
 
Based on \cite{stanhope2014identifiability, teschl2012ordinary}, the solution of $\boldsymbol{z}(t)$ can be expressed as:
\begin{equation*}
    \boldsymbol{z}(t) = e^{Gt}\boldsymbol{z}_0
    = \sum_{k=0}^\infty \cfrac{G^k\boldsymbol{z}_0}{k!}t^k
    = \sum_{k=0}^{p-1} \cfrac{G^k\boldsymbol{z}_0}{k!}t^k\,.
\end{equation*}
Thus,
\begin{equation}\label{eq:dxt}
    \dot{\boldsymbol{x}}(t) = A \boldsymbol{x}(t) + B \boldsymbol{z}(t)\\
    = A \boldsymbol{x}(t) + \sum_{k=0}^{p-1} \cfrac{BG^k\boldsymbol{z}_0}{k!}t^k\,.
\end{equation}
We observe that Equation \eqref{eq:dxt} has the same function form as Equation \eqref{eq:dxt1}, but with different coefficients (system parameters) for the polynomial of time $t$. Therefore, the ODE system \eqref{eq:ODE2} under the latent DAG assumption can be considered a more complex version of the ODE system \eqref{eq:ODE1} when $\boldsymbol{f}(t)$ follows a polynomial function of time $t$. Since the ODE system \eqref{eq:ODE2} incorporates causally related latent confounders, which is a more interesting and practical case, we will provide a more comprehensive identifiability analysis of the ODE system \eqref{eq:ODE2}. The derived identifiability results can be easily generated to the case of the ODE system \eqref{eq:ODE1}.

\subsection{Identifiability condition from a single whole trajectory}

We denote the unknown parameters of the ODE system \eqref{eq:ODE2} as $\boldsymbol{\eta}$, that is, $\boldsymbol{\eta}:=(\boldsymbol{x}_0,\boldsymbol{z}_0, A, B, G)$. We further denote the solution of the ODE system \eqref{eq:ODE2} as $[\boldsymbol{x}^T(t; \boldsymbol{\eta}), \boldsymbol{z}^T(t; \boldsymbol{\eta})]^{T}$; note that under our latent variables setting, only $\boldsymbol{x}(t; \boldsymbol{\eta})$ is observable. Thus, based on Equation \eqref{eq:dxt}, we present the following identifiability definition.

\begin{definition}\label{def: identifiability ODE2 continuous}
For $\boldsymbol{x}_0 \in \mathbb{R}^d, \boldsymbol{z}_0 \in \mathbb{R}^p, A\in \mathbb{R}^{d\times d}, B\in \mathbb{R}^{d\times p}$ and $G \in \mathbb{R}^{p\times p}$, under the latent DAG assumption, for all $\boldsymbol{x}'_0 \in \mathbb{R}^d$, all $\boldsymbol{z}'_0 \in \mathbb{R}^p$, all $A'\in \mathbb{R}^{d\times d}$, all $B' \in \mathbb{R}^{d\times p}$, and all $G' \in \mathbb{R}^{p\times p}$, we denote $\boldsymbol{\eta}' := (\boldsymbol{x}'_0,\boldsymbol{z}'_0, A', B', G') $, we say the ODE system \eqref{eq:ODE2} is $\boldsymbol{\eta}$-identifiable:
if $(\boldsymbol{x}_0, A, B\boldsymbol{z}_0, BG\boldsymbol{z}_0, \ldots, BG^{p-1}\boldsymbol{z}_0) \neq (\boldsymbol{x}'_0, A', B'\boldsymbol{z}'_0, B'G'\boldsymbol{z}'_0, \ldots, B'G'^{p-1}\boldsymbol{z}'_0)$, it holds that $\boldsymbol{x}(\cdot; \boldsymbol{\eta})\ $${\neq}$ $\  \boldsymbol{x}(\cdot; \boldsymbol{\eta}') $.
\end{definition}

Similar to the case of the ODE system \eqref{eq:ODE1}, we refer to $\boldsymbol{x}(\cdot;\boldsymbol{\eta})$ as the trajectory of the ODE system \eqref{eq:ODE2} with parameters $\boldsymbol{\eta}$. Definition \ref{def: identifiability ODE2 continuous} defines the identifiability of the ODE system \eqref{eq:ODE2} from a single whole trajectory $\boldsymbol{x}(\cdot;\boldsymbol{\eta})$.
Once the ODE system \eqref{eq:ODE2} is $\boldsymbol{\eta}$-identifiable, the causal relationships among the observable variables $\boldsymbol{x}$ can be determined through Equation \eqref{eq:dxt}. We then establish the condition for the identifiability of the ODE system \eqref{eq:ODE2} based on Definition \ref{def: identifiability ODE2 continuous}.

\begin{theorem}\label{thm: identifiability ODE2 continuous}
    For $\boldsymbol{x}_0 \in \mathbb{R}^d, \boldsymbol{z}_0 \in \mathbb{R}^p, A\in \mathbb{R}^{d\times d}, B\in \mathbb{R}^{d\times p}$ and $G \in \mathbb{R}^{p\times p}$, under the latent DAG assumption, the ODE system \eqref{eq:ODE2} is $\boldsymbol{\eta}$-identifiable if and only if assumption \textbf{B1} is satisfied.
\begin{enumerate}
    \item [\textbf{B1}:] the set of vectors $\{\boldsymbol{\gamma}, A\boldsymbol{\gamma}, \ldots, A^{d-1}\boldsymbol{\gamma} \}$ is linearly independent, where $ \boldsymbol{\gamma} = A^{p}\boldsymbol{x}_0 + \sum_{j=0}^{p-1}A^{p-1-j} B G^j \boldsymbol{z}_0$.
\end{enumerate}
\end{theorem}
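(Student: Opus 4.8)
The plan is to mirror the augmentation strategy already used for Theorem~\ref{thm:identifiability ODE1} and reduce the problem to the fully observable case handled by Lemma~\ref{lemma:identifiability ODE}. Writing $\boldsymbol{c}_k := BG^k\boldsymbol{z}_0$, Equation~\eqref{eq:dxt} shows the observed dynamics are $\dot{\boldsymbol{x}}(t) = A\boldsymbol{x}(t) + \sum_{k=0}^{p-1}\boldsymbol{c}_k t^k/k!$, a linear ODE driven by a degree-$(p-1)$ polynomial (the truncation coming from $G^p=\boldsymbol{0}$). I would introduce the augmented state $\boldsymbol{y}(t) = [\boldsymbol{x}^T(t), t^{p-1}/(p-1)!, \ldots, t, 1]^T \in \mathbb{R}^{d+p}$, whose lower $p$ coordinates $w_{p-1},\ldots,w_0$ satisfy $\dot w_k = w_{k-1}$ and $\dot w_0 = 0$. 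Then $\boldsymbol{y}$ solves a homogeneous system $\dot{\boldsymbol{y}} = F\boldsymbol{y}$ of the form \eqref{eq:ODE}, with $F = \begin{bmatrix} A & C \\ \boldsymbol{0} & N\end{bmatrix}$, where $C = [\boldsymbol{c}_{p-1}, \ldots, \boldsymbol{c}_0]$, $N$ is the fixed nilpotent shift, and $\boldsymbol{y}_0 = [\boldsymbol{x}_0^T, 0,\ldots,0,1]^T$. The key point is that the lower $p$ coordinates of $\boldsymbol{y}(t)$ are the same fixed functions of $t$ for every parameter choice, so observing $\boldsymbol{x}(\cdot)$ is equivalent to observing $\boldsymbol{y}(\cdot)$, and the free entries of $(\boldsymbol{y}_0, F)$ are precisely the tuple $(\boldsymbol{x}_0, A, \{\boldsymbol{c}_k\}_0^{p-1})$ appearing in Definition~\ref{def: identifiability ODE2 continuous}.

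The heart of the argument is a Krylov computation for $F$. I would show by induction that for $m = 0,\ldots,p-1$ the lower block of $F^m\boldsymbol{y}_0$ equals the standard basis vector $e_{p-m}$ of $\mathbb{R}^p$, while for $m \geq p$ one has $F^m\boldsymbol{y}_0 = [(A^{m-p}\boldsymbol{\gamma})^T, \boldsymbol{0}]^T$, where $\boldsymbol{\gamma} = A^p\boldsymbol{x}_0 + \sum_{j=0}^{p-1}A^{p-1-j}\boldsymbol{c}_j$ is exactly the vector in \textbf{B1} (it is nothing but $\boldsymbol{x}^{(p)}(0)$). Feeding the $d+p$ vectors $\{\boldsymbol{y}_0, F\boldsymbol{y}_0,\ldots,F^{d+p-1}\boldsymbol{y}_0\}$ into \textbf{A0}, the distinct basis vectors $e_p,\ldots,e_1$ in the lower blocks kill the first $p$ coefficients of any dependence relation, leaving $\sum_{\ell=0}^{d-1}c_{p+\ell}A^\ell\boldsymbol{\gamma}=\boldsymbol{0}$. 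Hence \textbf{A0} for $(\boldsymbol{y}_0,F)$ holds if and only if $\{\boldsymbol{\gamma}, A\boldsymbol{\gamma},\ldots,A^{d-1}\boldsymbol{\gamma}\}$ is independent, i.e.\ \textbf{B1}.

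It then remains to connect $\boldsymbol{\eta}$-identifiability to \textbf{A0}. The forward direction is immediate: if \textbf{A0} holds, Lemma~\ref{lemma:identifiability ODE} makes the augmented system identifiable against all competitors, so any $\boldsymbol{\eta}'$ with a different tuple yields a different $(\boldsymbol{y}_0', F')$ (same fixed blocks), hence a different $\boldsymbol{y}$-trajectory; since the lower blocks are identical, this forces $\boldsymbol{x}(\cdot;\boldsymbol{\eta}) \neq \boldsymbol{x}(\cdot;\boldsymbol{\eta}')$. The reverse direction is the main obstacle, because Definition~\ref{def: identifiability ODE2 continuous} only requires distinguishing \emph{structured} competitors, whereas the failure of \textbf{A0} supplied by Lemma~\ref{lemma:identifiability ODE} gives an \emph{arbitrary} competing $(\boldsymbol{y}_0', F')$. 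I would instead argue directly by contrapositive: if \textbf{B1} fails, then $(\boldsymbol{\gamma}, A)$ violates \textbf{A0}, so by Lemma~\ref{lemma:identifiability ODE} there is $(\boldsymbol{\gamma}', A') \neq (\boldsymbol{\gamma}, A)$ with $A'^\ell\boldsymbol{\gamma}' = A^\ell\boldsymbol{\gamma}$ for all $\ell$. I would then build a genuine competitor by setting $\boldsymbol{x}_0' = \boldsymbol{x}_0$, solving the triangular recursion $\boldsymbol{x}^{(m)}(0) = A'\boldsymbol{x}^{(m-1)}(0) + \boldsymbol{c}_{m-1}'$ for $\boldsymbol{c}_0',\ldots,\boldsymbol{c}_{p-2}'$ to match the first $p$ derivatives, and $\boldsymbol{c}_{p-1}' = \boldsymbol{\gamma}' - A'\boldsymbol{x}^{(p-1)}(0)$ to match the tail $A'^{m-p}\boldsymbol{\gamma}' = A^{m-p}\boldsymbol{\gamma}$ for $m \geq p$; every derivative $\boldsymbol{x}^{(m)}(0)$ then agrees, so the two analytic trajectories coincide.

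The final and only genuinely new ingredient beyond Theorem~\ref{thm:identifiability ODE1} is \emph{realizability}: the prescribed vectors $\{\boldsymbol{c}_k'\}$ must arise as $B'G'^k\boldsymbol{z}_0'$ for some strictly upper triangular $G'$. This I would settle by taking $G'$ to be the nilpotent shift and $\boldsymbol{z}_0'$ the basis vector for which $\{G'^k\boldsymbol{z}_0'\}_{k=0}^{p-1}$ is a basis of $\mathbb{R}^p$, and choosing the columns of $B'$ to send these to the required $\boldsymbol{c}_k'$. Since $(\boldsymbol{\gamma}', A') \neq (\boldsymbol{\gamma}, A)$ forces the tuple of $\boldsymbol{\eta}'$ to differ from that of $\boldsymbol{\eta}$ (either in $A$, or, when $A'=A$, in $\boldsymbol{c}_{p-1}$), this exhibits a valid non-identifying competitor and completes the proof.
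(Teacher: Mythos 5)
Your proposal is correct, and its core is the same reduction the paper uses: augment $\boldsymbol{x}(t)$ with the monomials of degree $\le p-1$, observe that the augmented state solves a homogeneous $(d+p)$-dimensional system $\dot{\boldsymbol{y}}=F\boldsymbol{y}$ whose free entries are exactly the tuple $(\boldsymbol{x}_0,A,B\boldsymbol{z}_0,\ldots,BG^{p-1}\boldsymbol{z}_0)$ of Definition~\ref{def: identifiability ODE2 continuous}, and run the Krylov computation $F^m\boldsymbol{y}_0$, whose block-triangular structure reduces condition \textbf{A0} for $(\boldsymbol{y}_0,F)$ to \textbf{B1} (your basis-vector bookkeeping in the lower block is the paper's rank identity $\mathrm{rank}(H)=\mathrm{rank}(H_{12})+\mathrm{rank}(H_{21})$ in different clothing; your ordering/scaling of the monomials is cosmetic). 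Where you genuinely depart from, and improve on, the paper is the necessity direction. The paper's proof establishes $(\boldsymbol{y}_0,F)$-identifiability $\Leftrightarrow$ \textbf{B1} and then simply asserts the ``only if'' for $\boldsymbol{\eta}$-identifiability; as you correctly point out, this is not automatic, because a failure of \textbf{A0} supplied by Lemma~\ref{lemma:identifiability ODE} yields an \emph{arbitrary} competitor $(\boldsymbol{y}_0',F')$ that need not have the fixed lower blocks, let alone top-right columns realizable as $B'G'^j\boldsymbol{z}_0'$ with $G'$ strictly upper triangular. Your contrapositive construction closes this gap cleanly: from $\neg$\textbf{B1} you get $A'\neq A$ agreeing with $A$ on the Krylov subspace of $\boldsymbol{\gamma}$, you solve the triangular recursion $\boldsymbol{x}^{(m)}(0)=A'\boldsymbol{x}^{(m-1)}(0)+\boldsymbol{c}'_{m-1}$ to match all derivatives at $0$ (analyticity then gives equal trajectories), and you realize the prescribed $\{\boldsymbol{c}'_k\}$ inside the model class via the nilpotent shift $G'$ and a cyclic $\boldsymbol{z}_0'$, with $B'$ chosen on the resulting basis. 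One tiny remark: your hedge ``when $A'=A$, in $\boldsymbol{c}_{p-1}$'' is vacuous, since trajectory equality at $t=0$ forces $\boldsymbol{\gamma}'=\boldsymbol{\gamma}$ and hence $A'\neq A$; the competitor always differs in $A$, which already violates Definition~\ref{def: identifiability ODE2 continuous}. Net effect: your argument proves the full equivalence claimed in Theorem~\ref{thm: identifiability ODE2 continuous}, whereas the paper's written proof, taken literally, only carries the sufficiency half across the structured/unstructured divide.
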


The proof of Theorem \ref{thm: identifiability ODE2 continuous} can be found in Appendix \ref{proof: ODE2 continuous}. Condition \textbf{B1} is both sufficient and necessary, and from a geometric perspective, it indicates that the vector $\boldsymbol{\gamma}$ is not contained in an $A$-invariant proper subspace of $\mathbb{R}^d$ \cite[Lemma 3.1]{stanhope2014identifiability}.

\subsection{Identifiability condition from discrete observations sampled from a single trajectory}

In practice, we often have access only to a sequence of discrete observations sampled from a trajectory rather than knowing the whole trajectory. Therefore, we also derive the identifiability conditions under the scenario where only discrete observations from a trajectory are available. Firstly, we extend the identifiability definition of the ODE system \eqref{eq:ODE2} as follows.

\begin{definition}\label{def: identifiability ODE2 discrete}
For $\boldsymbol{x}_0 \in \mathbb{R}^d, \boldsymbol{z}_0 \in \mathbb{R}^p, A\in \mathbb{R}^{d\times d}, B\in \mathbb{R}^{d\times p}$ and $G \in \mathbb{R}^{p\times p}$. For any $n \geqslant 1$, let $t_j, j = 1, \ldots, n$ be any $n$ time points and $\boldsymbol{x}_j : = \boldsymbol{x}(t_j; \boldsymbol{\eta})$ be the error-free observation of the trajectory $\boldsymbol{x}(\cdot; \boldsymbol{\eta})$ at time $t_j$. Under the latent DAG assumption, we say the ODE system \eqref{eq:ODE2} is $\boldsymbol{\eta}$-identifiable from $\boldsymbol{x}_1, \ldots, \boldsymbol{x}_n$, if for all $\boldsymbol{x}'_0 \in \mathbb{R}^d$, all $\boldsymbol{z}'_0 \in \mathbb{R}^p$, all $A'\in \mathbb{R}^{d\times d}$, all $B' \in \mathbb{R}^{d\times p}$, and all $G' \in \mathbb{R}^{p\times p}$ with $(\boldsymbol{x}_0, A, B\boldsymbol{z}_0, BG\boldsymbol{z}_0, \ldots, BG^{p-1}\boldsymbol{z}_0) \neq (\boldsymbol{x}'_0, A', B'\boldsymbol{z}'_0, B'G'\boldsymbol{z}'_0, \ldots, B'G'^{p-1}\boldsymbol{z}'_0)$, it holds that $\exists j \in \{1, \ldots, n\}$ such that $\boldsymbol{x}(t_j; \boldsymbol{\eta}){\neq} \boldsymbol{x}(t_j; \boldsymbol{\eta}')$.
\end{definition}

Definition \ref{def: identifiability ODE2 discrete} defines the identifiability of the ODE system \eqref{eq:ODE2} from $n$ observations sampled from the trajectory $\boldsymbol{x}(\cdot;\boldsymbol{\eta})$. Then we establish the condition for the identifiability of the ODE system \eqref{eq:ODE2} from discrete observations based on Definition \ref{def: identifiability ODE2 discrete}.
\begin{theorem}\label{thm: identifiability ODE2 discrete}
For $\boldsymbol{x}_0 \in \mathbb{R}^d, \boldsymbol{z}_0 \in \mathbb{R}^p, A\in \mathbb{R}^{d\times d}, B\in \mathbb{R}^{d\times p}$ and $G \in \mathbb{R}^{p\times p}$. We define new observation $\boldsymbol{y}_j := [\boldsymbol{x}_j^T, 1, t_j, t_j^2,\ldots, t_j^{p-1}]^T\in \mathbb{R}^{d+p}$, for $j = 1, \ldots, n$. Under the latent DAG assumption, the ODE system \eqref{eq:ODE2} is $\boldsymbol{\eta}$-identifiable from discrete observations $\boldsymbol{x}_1, \ldots, \boldsymbol{x}_n$, if and only if assumption \textbf{C1} is satisfied.
\begin{itemize}
    \item [\textbf{C1}:] there exists $(d+p)$ $\boldsymbol{y}_j$'s with indices denoting as $\{j_1, j_2, \ldots, j_{d+p}\} \subseteq \{1, 2, \ldots, n\}$, such that the set of vectors $\{\boldsymbol{y}_{j_1}, \boldsymbol{y}_{j_2}, \ldots, \boldsymbol{y}_{j_{d+p}}\}$ is linearly independent.
\end{itemize}
\end{theorem}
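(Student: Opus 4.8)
The plan is to reduce the partially observed system \eqref{eq:ODE2} to a fully observable homogeneous linear ODE by the same augmentation used for Theorem \ref{thm: identifiability ODE2 continuous}, and then to invoke the known identifiability characterization for fully observable linear ODE systems under discrete sampling \cite{wang2024identifiability}. Recall from \eqref{eq:dxt} that $\boldsymbol{z}(t)=\sum_{k=0}^{p-1}\tfrac{G^k\boldsymbol{z}_0}{k!}t^k$ is a polynomial of degree at most $p-1$ in $t$, so the observable coordinate obeys $\dot{\boldsymbol{x}}(t)=A\boldsymbol{x}(t)+\sum_{k=0}^{p-1}\tfrac{BG^k\boldsymbol{z}_0}{k!}t^k$. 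First I would introduce the augmented state $\boldsymbol{y}(t)=[\boldsymbol{x}^T(t),1,t,\ldots,t^{p-1}]^T\in\mathbb{R}^{d+p}$, which satisfies an autonomous system $\dot{\boldsymbol{y}}(t)=F\boldsymbol{y}(t)$ with $\boldsymbol{y}(0)=[\boldsymbol{x}_0^T,1,0,\ldots,0]^T$, exactly as in \eqref{eq:new ODE y}: the top-left $d\times d$ block of $F$ is $A$, the top-right block collects the coupling vectors $BG^k\boldsymbol{z}_0/k!$, and the bottom-right $p\times p$ block is the fixed nilpotent companion block that generates $1,t,\ldots,t^{p-1}$. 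The crucial observation is that the vectors $\boldsymbol{y}_j$ defined in the statement are precisely the samples $\boldsymbol{y}(t_j)$ of this augmented trajectory.

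Next I would make the equivalence between the two identifiability notions precise. Because every admissible parameter choice $\boldsymbol{\eta}'$ produces an augmented matrix $F'$ and initial state $\boldsymbol{y}'_0$ sharing the same fixed time-block and the same last $p$ entries $[1,0,\ldots,0]^T$, the assignment $(\boldsymbol{x}_0,A,B\boldsymbol{z}_0,BG\boldsymbol{z}_0,\ldots,BG^{p-1}\boldsymbol{z}_0)\mapsto(\boldsymbol{y}_0,F)$ is a bijection onto augmented systems of this structured form. Moreover, since the last $p$ coordinates of $\boldsymbol{y}(t_j)$ and of any competitor $\boldsymbol{y}'(t_j)$ are the identical known values $[1,t_j,\ldots,t_j^{p-1}]^T$, one has $\boldsymbol{x}(t_j;\boldsymbol{\eta})=\boldsymbol{x}(t_j;\boldsymbol{\eta}')$ for all $j$ if and only if $\boldsymbol{y}(t_j)=\boldsymbol{y}'(t_j)$ for all $j$. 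Combining these two facts, $\boldsymbol{\eta}$-identifiability of \eqref{eq:ODE2} from $\boldsymbol{x}_1,\ldots,\boldsymbol{x}_n$ is equivalent to identifiability of the fully observable augmented system from its samples $\boldsymbol{y}_1,\ldots,\boldsymbol{y}_n$.

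With the reduction in place, I would apply the discrete-sampling identifiability result for fully observable homogeneous linear ODE systems \cite{wang2024identifiability}: a system of dimension $m$ is identifiable from its samples if and only if those samples contain $m$ linearly independent vectors. Taking $m=d+p$, this is exactly condition \textbf{C1}, which settles both directions. For sufficiency one shows that $d+p$ independent samples force any competing structured pair $(\boldsymbol{y}'_0,F')$ agreeing on all $t_j$ to coincide with $(\boldsymbol{y}_0,F)$; for necessity one shows that if the samples span only a proper subspace then a genuinely different competitor reproducing every $\boldsymbol{y}_j$ can be constructed, violating identifiability.

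The step I expect to be the main obstacle is the sufficiency half of the fully observable discrete-sampling lemma, i.e.\ showing that linear independence of $d+p$ samples at arbitrary, not necessarily equally spaced, times $t_j$ actually pins down the matrix $F$ rather than merely the span of the trajectory. This is delicate because finitely many point values of the analytic map $t\mapsto e^{Ft}\boldsymbol{y}_0$ do not by themselves determine $F$; one must exploit that both the true and competing trajectories solve $(d+p)$-dimensional linear ODEs sharing the fixed polynomial time-block, so that their difference is an exponential-polynomial whose admissible number of zeros is controlled and hence forced to vanish once \textbf{C1} holds. Verifying that the spanning condition is exactly what eliminates such spurious competitors, and checking that the structured bijection interacts correctly with the cited lemma, is where the real work lies; the augmentation and the bijection of the first two steps are then routine bookkeeping.
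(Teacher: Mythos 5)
Your first two steps coincide with the paper's own argument: Appendix \ref{proof: ODE2 discrete} reuses exactly the augmentation $\boldsymbol{y}(t)=[\boldsymbol{x}^T(t),1,t,\ldots,t^{p-1}]^T$ from the proof of Theorem \ref{thm: identifiability ODE2 continuous}, reduces \eqref{eq:ODE2} to the fully observable system $\dot{\boldsymbol{y}}=F\boldsymbol{y}$, and identifies the $\boldsymbol{y}_j$ in \textbf{C1} with samples of the augmented trajectory. Where you diverge is the final ingredient: the paper does not invoke a discrete-sampling characterization from \cite{wang2024identifiability}; it chains Lemma \ref{lemma:trajectory} (whole-trajectory $(\boldsymbol{y}_0,F)$-identifiability iff the trajectory is not confined to a proper subspace) with Lemma \ref{lemma:discrete} (non-confinement iff \emph{there exist} times at which $d+p$ samples are linearly independent), and then simply asserts identifiability from the given discrete observations. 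Note that the existential quantifier in Lemma \ref{lemma:discrete} ranges over all of $[0,\infty)$, not over the fixed $t_j$'s of Definition \ref{def: identifiability ODE2 discrete}, so the paper's proof also never actually supplies the step you flag as the main obstacle: it passes from a whole-trajectory notion of identifiability to distinguishability at the prescribed sample times without argument.

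Your instinct that this step is the crux is therefore exactly right, but your sketched repair does not work, and the gap is real rather than merely delicate. The difference of two structured trajectories is an exponential polynomial, and such functions can vanish on an infinite arithmetic progression, so no zero-counting bound controlled only by the spanning condition \textbf{C1} can force it to vanish identically. Concretely, take $d=2$, $p=1$ (so $G=G'=0$ under the latent DAG assumption), $B=B'=\boldsymbol{0}$, $\boldsymbol{x}_0=(1,0)^T$, sample times $t_j=j$ for $j=1,\ldots,n$ with $n\geqslant 3$, and let $J=\bigl[\begin{smallmatrix}0&-1\\1&0\end{smallmatrix}\bigr]$, $A=\tfrac{\pi}{2}J$, $A'=\tfrac{5\pi}{2}J$. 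Then $e^{At_j}=e^{A't_j}$ for every integer $j$ (the rotations differ by full turns), so $\boldsymbol{x}(t_j;\boldsymbol{\eta})=\boldsymbol{x}(t_j;\boldsymbol{\eta}')$ for all $j$ even though the tuples in Definition \ref{def: identifiability ODE2 discrete} differ ($A\neq A'$, $B\boldsymbol{z}_0=B'\boldsymbol{z}'_0=\boldsymbol{0}$); yet $\boldsymbol{y}_1=(0,1,1)^T$, $\boldsymbol{y}_2=(-1,0,1)^T$, $\boldsymbol{y}_3=(0,-1,1)^T$ are linearly independent, so \textbf{C1} holds with $d+p=3$. This is the standard sampling-aliasing phenomenon that, in the discrete-observation literature (e.g., Section 6 of \cite{stanhope2014identifiability}), forces additional hypotheses guaranteeing uniqueness of the matrix logarithm or restricting the sampling times; without such hypotheses the sufficiency half of the lemma you wish to cite is unavailable at this level of generality. (Your necessity direction is also only gestured at, but the decisive failure is sufficiency.) In short: your proposal faithfully reproduces the paper's route, and the missing piece you identify is precisely the piece the paper's own proof elides by conflating the existential time points of Lemma \ref{lemma:discrete} with the given observation times.
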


The proof of Theorem \ref{thm: identifiability ODE2 discrete} can be found in Appendix \ref{proof: ODE2 discrete}.  Condition \textbf{C1} is both sufficient and necessary. This theorem states that as long as there are $d+p$ observations $\boldsymbol{x}_j$'s such that the corresponding augmented new observations $\boldsymbol{y}_j$'s are linearly independent, the ODE system \eqref{eq:ODE2} is $\boldsymbol{\eta}$-identifiable from these discrete observations. Under the latent DAG assumption, we can transfer the ODE system \eqref{eq:ODE2}, which includes hidden confounders, into a $(d+p)$-dimensional fully observable ODE system \eqref{eq:ODE} through the augmented state $\boldsymbol{y}(t)$. Condition \textbf{C1} indicates that our observations span the entire $(d+p)$-dimensional state space, thus rendering the system identifiable.

Both Definition \ref{def: identifiability ODE2 continuous} and Definition \ref{def: identifiability ODE2 discrete} define the identifiability of the ODE system \eqref{eq:ODE2} to some extent of the unknown parameters. In other words, given the available observations, under Definition \ref{def: identifiability ODE2 continuous} and Definition \ref{def: identifiability ODE2 discrete}, one can only identify the values of $(\boldsymbol{x}_0, A, B\boldsymbol{z}_0, BG\boldsymbol{z}_0, \ldots, BG^{p-1}\boldsymbol{z}_0)$, but not the values of $(\boldsymbol{z}_0, B, G)$. Based on Equation \eqref{eq:dxt}, this level of identifiability is sufficient to identify the causal relationships between observable variables $\boldsymbol{x}$, enabling safe intervention on the observable variables with reliable causal inferences. However, in scenarios where practitioners can intervene in the latent variables and require inferring the causal effects of the intervened system, identifying the matrices $B$ and $G$ becomes essential for reliable causal references. For instance, in chemical kinetics, where the evolution of chemical concentrations over time can often be modeled by an ODE system \cite{burnham2008inference, craciun2005multiple}, some chemicals may not be measurable during the reaction, rendering them latent variables. Nonetheless, practitioners can intervene in these latent variables by setting specific initial concentrations. Therefore, we provide an identifiability analysis of the linear ODE system \eqref{eq:ODE2} when practitioners can control the initial condition of the latent variables: $\boldsymbol{z}_0$.

\subsection{Identifiability condition from \textit{p} controllable whole trajectories}\label{subsec:iden p continous}
Assuming the initial condition of the latent variables $\boldsymbol{z}_0$ is controllable, which means that the values of $\boldsymbol{z}_0$ can be treated as given values, we denote it as $\boldsymbol{z}_0^*$. 
In the following, we provide the identifiability condition of the ODE system \eqref{eq:ODE2} when we are given $p$ initial conditions $\boldsymbol{z}_0^*$, denoting as $\boldsymbol{z}_0^{*i}$. We first present the definition.
\begin{definition}\label{def: identifiability ODE2 more}
Given $\boldsymbol{z}_0^{*i} \in \mathbb{R}^p$ for $i=1, \ldots, p$, for $\boldsymbol{x}_0 \in \mathbb{R}^d,  A\in \mathbb{R}^{d\times d}, B\in \mathbb{R}^{d\times p}$ and $G \in \mathbb{R}^{p\times p}$, under the latent DAG assumption, for all $\boldsymbol{x}'_0 \in \mathbb{R}^d$, all $A'\in \mathbb{R}^{d\times d}$, all $B' \in \mathbb{R}^{d\times p}$, and all $G' \in \mathbb{R}^{p\times p}$, we denote $\boldsymbol{\eta}_i := (\boldsymbol{x}_0,\boldsymbol{z}_0^{*i}, A, B, G) $ and $\boldsymbol{\eta}'_i := (\boldsymbol{x}'_0,\boldsymbol{z}_0^{*i}, A', B', G') $, we say the ODE system \eqref{eq:ODE2} is $\{\boldsymbol{\eta}_i\}_1^p$-identifiable:
if $(\boldsymbol{x}_0, A, B, G) \neq (\boldsymbol{x}', A', B', G')$, it holds that $\exists i$ such that $\boldsymbol{x}(\cdot; \boldsymbol{\eta}_i)\ $${\neq}$ $\  \boldsymbol{x}(\cdot; \boldsymbol{\eta}'_i) $.
\end{definition}

Definition \ref{def: identifiability ODE2 more} defines the identifiability of the ODE system \eqref{eq:ODE2} from $p$ whole trajectories $\boldsymbol{x}(\cdot;\boldsymbol{\eta}_i)$ with $i=1, \ldots, p$, and under this definition, matrix $B$ and $G$ are also identifiable. Based on this definition, we provide the identifiability condition.

\begin{theorem}\label{thm: identifiability ODE2 more}
Given $\boldsymbol{z}_0^{*i} \in \mathbb{R}^p$ for $i=1, \ldots, p$, for $\boldsymbol{x}_0 \in \mathbb{R}^d,  A\in \mathbb{R}^{d\times d}, B\in \mathbb{R}^{d\times p}$ and $G \in \mathbb{R}^{p\times p}$, under the latent DAG assumption, the ODE system \eqref{eq:ODE2} is $\{\boldsymbol{\eta}_i\}_1^p$-identifiable if assumptions $\textbf{B}_2$, $\textbf{B}_3$ and $\textbf{B}_4$ are all satisfied.
\begin{itemize}
    \item [\textbf{B2}:] each $\boldsymbol{z}_0^{*i}$ for $i=1, \ldots, p$, satisfies assumption \textbf{B1}. That is, if we set $ \boldsymbol{\gamma}_i = A^{p}\boldsymbol{x}_0 + \sum_{j=0}^{p-1}A^{p-1-j} B G^j \boldsymbol{z}_0^{*i}$, then the set of vectors $\{\boldsymbol{\gamma}_i, A\boldsymbol{\gamma}_i, \ldots, A^{d-1}\boldsymbol{\gamma}_i \}$ is linearly independent for all $i=1, \ldots, p$.
    \item [\textbf{B3}:] the set of vectors $\{\boldsymbol{z}_0^{*1}, \boldsymbol{z}_0^{*2}, \ldots, \boldsymbol{z}_0^{*p} \}$ is linearly independent.
    \item [\textbf{B4}:] the matrix composed by vertically stack the matrices $\{B, BG, \ldots, BG^{p-1}\}$ has rank $p$.
\end{itemize}
\end{theorem}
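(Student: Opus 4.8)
The plan is to reduce the problem to the single-trajectory result of Theorem~\ref{thm: identifiability ODE2 continuous} and then strip away the controlled initial conditions and the matrix powers using the linear-algebraic conditions \textbf{B3} and \textbf{B4}. I argue by the contrapositive of Definition~\ref{def: identifiability ODE2 more}: I assume an alternative parameter set $(\boldsymbol{x}'_0, A', B', G')$ produces the same observable trajectory as $(\boldsymbol{x}_0, A, B, G)$ for every one of the $p$ controlled initial conditions, i.e.\ $\boldsymbol{x}(\cdot;\boldsymbol{\eta}_i) = \boldsymbol{x}(\cdot;\boldsymbol{\eta}'_i)$ for all $i=1,\ldots,p$, and I aim to conclude $(\boldsymbol{x}_0, A, B, G) = (\boldsymbol{x}'_0, A', B', G')$.

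First I fix $i$. Assumption \textbf{B2} states that $\boldsymbol{z}_0^{*i}$ satisfies \textbf{B1}, so Theorem~\ref{thm: identifiability ODE2 continuous} guarantees that the system with parameters $\boldsymbol{\eta}_i$ is $\boldsymbol{\eta}_i$-identifiable. Hence, by Definition~\ref{def: identifiability ODE2 continuous}, the coincidence of trajectories forces equality of the identifiable block, namely $(\boldsymbol{x}_0, A, B\boldsymbol{z}_0^{*i}, \ldots, BG^{p-1}\boldsymbol{z}_0^{*i}) = (\boldsymbol{x}'_0, A', B'\boldsymbol{z}_0^{*i}, \ldots, B'G'^{p-1}\boldsymbol{z}_0^{*i})$. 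In particular this yields $\boldsymbol{x}_0 = \boldsymbol{x}'_0$ and $A = A'$ at once (these are shared across all $i$), together with the moment equalities $BG^k \boldsymbol{z}_0^{*i} = B'G'^k\boldsymbol{z}_0^{*i}$ for every $k = 0, \ldots, p-1$ and every $i = 1, \ldots, p$.

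Next I collect the controlled initial conditions into the matrix $Z := [\boldsymbol{z}_0^{*1}, \ldots, \boldsymbol{z}_0^{*p}] \in \mathbb{R}^{p\times p}$. The moment equalities then read $BG^k Z = B'G'^k Z$ for each $k$, and assumption \textbf{B3} makes $Z$ invertible, so right-multiplication by $Z^{-1}$ gives the clean matrix identities $BG^k = B'G'^k$ for $k = 0, \ldots, p-1$. Taking $k=0$ already identifies $B = B'$.

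It remains to separate $G$ from $G'$, and this is where I expect the only genuine work. I stack the identities into the $(dp)\times p$ matrix $\mathcal{O} := [B^T, (BG)^T, \ldots, (BG^{p-1})^T]^T$, which by the equalities above is common to both systems. Right-multiplication by $G$ shifts every block of the true system upward, $\mathcal{O}G = [(BG)^T, (BG^2)^T, \ldots, (BG^p)^T]^T$, and since the latent DAG assumption makes $G$ strictly upper triangular, hence nilpotent with $G^p = 0$, the bottom block $BG^p$ vanishes. Carrying out the identical computation for the primed system and using $B'G'^k = BG^k$ for $k \le p-1$ together with $G'^p = 0$ yields the very same stack, so $\mathcal{O}G' = \mathcal{O}G$, i.e.\ $\mathcal{O}(G - G') = 0$. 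Assumption \textbf{B4}, that $\mathcal{O}$ has rank $p$ and hence full column rank, then forces $G = G'$. Combined with $\boldsymbol{x}_0=\boldsymbol{x}'_0$, $A=A'$, $B=B'$ this establishes $(\boldsymbol{x}_0, A, B, G) = (\boldsymbol{x}'_0, A', B', G')$ and proves the claimed identifiability. The main obstacle is precisely this final decoupling: the observations reveal only the products $BG^k$, so recovering $G$ uniquely requires exploiting both the nilpotency of $G$ and $G'$ (to kill the otherwise unobserved block $BG^p$) and the rank condition \textbf{B4} (to make the map $X \mapsto \mathcal{O}X$ injective); neither ingredient alone suffices.
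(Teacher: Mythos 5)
Your proof is correct and follows essentially the same route as the paper's: assumption \textbf{B2} plus Theorem~4.2 delivers the tuples $(\boldsymbol{x}_0, A, B\boldsymbol{z}_0^{*i}, \ldots, BG^{p-1}\boldsymbol{z}_0^{*i})$, \textbf{B3} inverts $Z$ to extract $B$ and the products $BG^k$, and \textbf{B4} combined with nilpotency (which kills the unobserved block $BG^p$) pins down $G$ from the stacked matrix. The only cosmetic difference is that the paper recovers $G$ explicitly as $W_p^{-1}V_p$ from $p$ linearly independent rows, whereas you argue injectivity via $\mathcal{O}(G-G')=0$ and full column rank --- equivalent formulations --- and your explicit appeal to $G'^p=0$ merely makes visible a step the paper leaves implicit under the latent DAG assumption.
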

The proof of Theorem \ref{thm: identifiability ODE2 more} can be found in Appendix \ref{proof: ODE2 more}. Assumption $\textbf{B2}$ ensures that the ODE system \eqref{eq:ODE2} is $\boldsymbol{\eta}_i$-identifiable for all $i=1, \ldots, p$. Consequently, $(\boldsymbol{x}_0, A, B\boldsymbol{z}_0^{*i}, BG\boldsymbol{z}_0^{*i}, \ldots, BG^{p-1}\boldsymbol{z}_0^{*i} )$ for all $i=1, \ldots, p$ is identifiable. Then, under assumption \textbf{B3}, the identifiability of matrix $B$ is established. To identify matrix $G$, assumption \textbf{B4} is required. While the ability to control the initial condition of the latent variables may appear strict, it is a reasonable assumption in our context. This is because identifying matrices $B$ and $G$ is necessary only when practitioners can intervene in the latent variables, thereby allowing control over their initial conditions. An alternative approach to identifying $B$ and $G$ involves intervening in the initial condition of each latent variable $z_i$ independently, rather than controlling the initial condition of all latent variables $\boldsymbol{z}$ simultaneously. This method draws inspiration from the "genetic single-node intervention" proposed by \cite{squires2023linear}, where one can intervene at each latent node individually. Further details of this method can be found in Appendix \ref{app:alternative approach}.

\subsection{Identifiability condition from discrete observations sampled from \textit{p} controllable trajectories}
We also extend the identifiability analysis of the ODE system \eqref{eq:ODE2} to cases where only discrete observations from $p$ controllable trajectories are available.
\begin{definition}\label{def: identifiability ODE2 discrete more}
Given $\boldsymbol{z}_0^{*i} \in \mathbb{R}^p$ for $i=1, \ldots, p$, for $\boldsymbol{x}_0 \in \mathbb{R}^d, A\in \mathbb{R}^{d\times d}, B\in \mathbb{R}^{d\times p}$ and $G \in \mathbb{R}^{p\times p}$. For any $n \geqslant 1$, let $t_j, j = 1, \ldots, n$ be any $n$ time points and $\boldsymbol{x}_{ij} : = \boldsymbol{x}(t_j; \boldsymbol{\eta}_i)$ be the error-free observation of the trajectory $\boldsymbol{x}(\cdot; \boldsymbol{\eta}_i)$ at time $t_j$. Under the latent DAG assumption, we say the ODE system \eqref{eq:ODE2} is $\{\boldsymbol{\eta}_i\}_1^p$-identifiable from $\boldsymbol{x}_{i1}, \ldots, \boldsymbol{x}_{in}$, $i = 1, \ldots, p$, if for all $\boldsymbol{x}'_0 \in \mathbb{R}^d$, all $A'\in \mathbb{R}^{d\times d}$, all $B' \in \mathbb{R}^{d\times p}$, and all $G' \in \mathbb{R}^{p\times p}$ with $(\boldsymbol{x}_0, A, B, G) \neq (\boldsymbol{x}'_0, A', B', G')$, it holds that $\exists i \in \{1, \ldots, p\}$ and $j \in \{1, \ldots, n\}$ such that $\boldsymbol{x}(t_j; \boldsymbol{\eta}_i){\neq} \boldsymbol{x}(t_j; \boldsymbol{\eta}'_i)$.
\end{definition}
Based on Definition \ref{def: identifiability ODE2 discrete more} we present the identifiability condition.
\begin{theorem}\label{thm: identifiability ODE2 discrete more}
Given $\boldsymbol{z}_0^{*i} \in \mathbb{R}^p$ for $i=1, \ldots, p$, for $\boldsymbol{x}_0 \in \mathbb{R}^d,  A\in \mathbb{R}^{d\times d}, B\in \mathbb{R}^{d\times p}$ and $G \in \mathbb{R}^{p\times p}$. We define new observation $\boldsymbol{y}_{ij} := [\boldsymbol{x}_{ij}^T, 1, t_j, t_j^2,\ldots, t_j^{p-1}]^T\in \mathbb{R}^{d+p}$, for $i = 1, \ldots, p$ and $j = 1, \ldots, n$. Under the latent DAG assumption, the ODE system \eqref{eq:ODE2} is $\{\boldsymbol{\eta}_i\}_1^p$-identifiable from discrete observations $\boldsymbol{x}_{i1}, \ldots, \boldsymbol{x}_{in}$, $i=1, \ldots, p$, if assumptions \textbf{C2}, \textbf{B3} and \textbf{B4} are all satisfied.
\begin{itemize}
    \item [\textbf{C2}:] for each $i \in \{1, \ldots, p\}$ there exists $(d+p)$ $\boldsymbol{y}_{ij}$'s with indexes denoting as $\{j_{i1}, j_{i2}, \ldots, j_{i,d+p}\} \subseteq \{1, 2, \ldots, n\}$, such that the set of vectors $\{\boldsymbol{y}_{ij_{i1}}, \boldsymbol{y}_{ij_{i2}}, \ldots, \boldsymbol{y}_{ij_{i,d+p}}\}$ is linearly independent.
\end{itemize}
\end{theorem}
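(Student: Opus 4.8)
The plan is to reduce the claim to two ingredients already established: the per-trajectory discrete identifiability of Theorem \ref{thm: identifiability ODE2 discrete}, and the linear-algebraic ``stitching'' argument that underlies Theorem \ref{thm: identifiability ODE2 more}. Since the theorem asserts only sufficiency, I would argue by the contrapositive of Definition \ref{def: identifiability ODE2 discrete more}: take any alternative parameters $(\boldsymbol{x}'_0, A', B', G')$ and suppose, toward establishing identifiability, that $\boldsymbol{x}(t_j;\boldsymbol{\eta}'_i)=\boldsymbol{x}(t_j;\boldsymbol{\eta}_i)=\boldsymbol{x}_{ij}$ for every $i\in\{1,\dots,p\}$ and every $j\in\{1,\dots,n\}$; the goal is then to deduce $(\boldsymbol{x}_0,A,B,G)=(\boldsymbol{x}'_0,A',B',G')$, so that no genuinely distinct parameters can reproduce all the sampled data.

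First I would fix an index $i$ and look only at the single trajectory driven by $\boldsymbol{z}_0^{*i}$. Because the observations $\boldsymbol{x}_{ij}$ and the sampling times $t_j$ are shared by the true and the alternative systems, the augmented vectors $\boldsymbol{y}_{ij}=[\boldsymbol{x}_{ij}^T,1,t_j,\dots,t_j^{p-1}]^T$ are identical under both parameterizations, so condition \textbf{C2} is exactly condition \textbf{C1} of Theorem \ref{thm: identifiability ODE2 discrete} applied to trajectory $i$. Invoking that theorem, agreement of the observations forces agreement of the reduced parameters, i.e. $\boldsymbol{x}_0=\boldsymbol{x}'_0$, $A=A'$, and $BG^k\boldsymbol{z}_0^{*i}=B'G'^k\boldsymbol{z}_0^{*i}$ for all $k=0,\dots,p-1$. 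Carrying this out for every $i$ yields these equalities simultaneously for all $p$ controllable initial conditions.

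The remaining step recovers $B$ and $G$ and is identical to the combining argument behind Theorem \ref{thm: identifiability ODE2 more}. Collecting the columns into $Z=[\boldsymbol{z}_0^{*1}\,\cdots\,\boldsymbol{z}_0^{*p}]$, the relations $BG^kZ=B'G'^kZ$ together with the invertibility of $Z$ guaranteed by \textbf{B3} give $BG^k=B'G'^k$ as matrices for every $k=0,\dots,p-1$; the case $k=0$ already yields $B=B'$. To recover $G$, I would stack these matrices into $\mathcal{B}=[B^T,(BG)^T,\dots,(BG^{p-1})^T]^T$ and note that, under the latent DAG assumption, both $G$ and $G'$ are nilpotent with $G^p=(G')^p=0$; hence $\mathcal{B}G$ and the corresponding stack for the alternative system agree block by block, giving $\mathcal{B}(G-G')=0$. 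The full-column-rank hypothesis \textbf{B4} then forces $G=G'$, completing the identification and hence the proof of identifiability.

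I expect the proof itself to be largely routine, since the discrete-sampling difficulty is fully absorbed by Theorem \ref{thm: identifiability ODE2 discrete}; the delicate points are mostly bookkeeping. The first is verifying that \textbf{C2} legitimately triggers the per-trajectory theorem for the \emph{alternative} system as well, which rests on the observation that matching data produces identical $\boldsymbol{y}_{ij}$ regardless of parameterization. The second, and the step I would watch most carefully, is the recovery of $G$: the block-by-block cancellation in the stacked identity relies on the alternative $G'$ also being nilpotent of index at most $p$, so the latent DAG assumption must be invoked for $\boldsymbol{\eta}'_i$ and not merely for the true system; without it the terminal block $B'(G')^p$ need not vanish, the stacked identity would fail to align, and \textbf{B4} alone would not pin down $G$.
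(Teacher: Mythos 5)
Your proposal is correct and follows essentially the same route as the paper: condition \textbf{C2} triggers Theorem \ref{thm: identifiability ODE2 discrete} trajectory-by-trajectory to identify $(\boldsymbol{x}_0, A, B\boldsymbol{z}_0^{*i}, \ldots, BG^{p-1}\boldsymbol{z}_0^{*i})$ for each $i$, and then the \textbf{B3}/\textbf{B4} combination argument from the proof of Theorem \ref{thm: identifiability ODE2 more} recovers $B$ and $G$. The only cosmetic difference is in the last step: the paper extracts an invertible $p\times p$ row submatrix $W_p$ of the stacked matrix and writes $G = W_p^{-1}V_p$, whereas you use the equivalent full-column-rank formulation $\mathcal{B}(G-G')=0$; your observation that the latent DAG assumption must apply to the alternative system too, so that $B'(G')^p=0$ makes the terminal block vanish, is a subtlety the paper's argument relies on implicitly as well.
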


The proof of Theorem \ref{thm: identifiability ODE2 discrete more} can be found in Appendix \ref{proof: ODE2 discrete more}. Assumption \textbf{C2} ensures that the ODE system \eqref{eq:ODE2} is $\boldsymbol{\eta}_i$-identifiable from discrete observations $\boldsymbol{x}_{i1}, \ldots, \boldsymbol{x}_{in}$ for all $i=1, \ldots, p$. As in Subsection \ref{subsec:iden p continous}, under assumptions \textbf{B3} and \textbf{B4}, the matrices $B$ and $G$ are also identifiable.
\section{Simulations}\label{sec:simulations}
To evaluate the validity of the identifiability conditions established in Section 3 and 4, we present the results of simulations. As previously indicated, the ODE system \eqref{eq:ODE2} can be treated as a more intricate version of the ODE system \eqref{eq:ODE1}; hence, our simulation experiments are centered on the former. 

\textbf{Simulation design.} We conduct four sets of simulations, which include one identifiable case and one unidentifiable case for both the $\boldsymbol{\eta}$-identifiable check and the $\{\boldsymbol{\eta}_i\}_1^p$-identifiable check. The dimensions of both observable variables, $d$, and latent variables, $p$, are set to 3. The true underlying parameters of the systems are provided below. Observations are simulated from the true ODE systems for each case, with $n$ equally-spaced observations generated from the time interval $[0,1]$ for each trajectory, and we only keep the values of the observable variables $\boldsymbol{x}$.
\begin{equation*}
\begin{split}
      &A = \begin{bmatrix}
        2 & -2 & 1\\
        1 & 1 & -1\\
        1 & 0 & 2
    \end{bmatrix}\,, \ \ 
    B = \begin{bmatrix}
        -2 & -2 & 2\\
        0 & -1 & -2\\
        -1 & -1 & -2
    \end{bmatrix}\,, \ \ 
    G = \begin{bmatrix}
        0 & 2 & 1\\
        0 & 0 & -2\\
        0 & 0 & 0
    \end{bmatrix}\,, \ \
    A' = \begin{bmatrix}
        1 & 0 & 0\\
        0 & 1 & 0\\
        0 & 0 & 1
    \end{bmatrix}\,,  \\
    & \boldsymbol{x}_0 = \begin{bmatrix}
        -1\\
        1\\
        1
    \end{bmatrix}\,, \ \ \
    \boldsymbol{z}_0 = \begin{bmatrix}
        1\\
        -2\\
        -1
    \end{bmatrix}\,, \ \ \ 
    \boldsymbol{z}_0^{*1} = \begin{bmatrix}
        1\\
        0\\
        0
    \end{bmatrix}\,, \ \ \ 
    \boldsymbol{z}_0^{*2} = \begin{bmatrix}
        0\\
        1\\
        0
    \end{bmatrix}\,, \ \ \
    \boldsymbol{z}_0^{*3} = \begin{bmatrix}
        0\\
        0\\
        1
    \end{bmatrix}\,.\\
    & \boldsymbol{\eta}\mbox{-identifiable: } \boldsymbol{\eta} = (\boldsymbol{x}_0, \boldsymbol{z}_0, A, B, G), \mbox{ unidentifiable: } \boldsymbol{\eta} = (\boldsymbol{x}_0, \boldsymbol{z}_0, A', B, G)\,.\\
    & \{\boldsymbol{\eta}_i\}_1^p\mbox{-identifiable: } \boldsymbol{\eta}_i = (\boldsymbol{x}_0, \boldsymbol{z}_0^{*i}, A, B, G), \mbox{ unidentifiable: } \boldsymbol{\eta}_i = (\boldsymbol{x}_0, \boldsymbol{z}_0^{*i}, A', B, G), i =1, 2,  3\,.\\
\end{split}
\end{equation*}
\textbf{Parameter estimation.} The Nonlinear Least Squares (NLS) method is employed for parameter estimation, a widely used technique for estimating parameters in nonlinear regression models, including ODEs \cite{bock1983recent, jennrich1969asymptotic, xue2010sieve}. The \textit{"least\_squares"} function from the \textit{"scipy.optimize"} Python module, with default hyperparameter settings, is utilized for implementation. Given that the NLS loss function for our simulation is non-convex, parameter initialization is performed near the true values to promote convergence to the global minimum. Specifically, for the $\boldsymbol{\eta}$-(un)identifiable cases, initial parameter values are set to the true parameters plus a random value drawn from a uniform distribution $U(-0.1, 0.1)$ for each replication. For $\{\boldsymbol{\eta}_i\}_1^p$-(un)identifiable cases, initial parameter values are set to the true values plus a random value from $U(-0.3, 0.3)$.

\textbf{Evaluation metric.} Mean Squared Error (MSE) is adopted as the metric to assess the accuracy of the parameter estimator. To ensure the reliability of the estimation results, 100 independent random replications are run for each configuration, and we report the mean and variance of the squared error.

\textbf{Results analysis.} Table \ref{table1} and Table \ref{table2} present the simulation results for the $\boldsymbol{\eta}$-(un)identifiable cases and the  $\{\boldsymbol{\eta}_i\}_1^p$-(un)identifiable cases, respectively. According to Definition \ref{def: identifiability ODE2 continuous} and Definition \ref{def: identifiability ODE2 more}, for the $\boldsymbol{\eta}$-(un)identifiable cases, the identifiability of $(\boldsymbol{x}_0, A, B\boldsymbol{z}_0, BG\boldsymbol{z}_0, BG^2\boldsymbol{z}_0)$ needs to be checked, while for the  $\{\boldsymbol{\eta}_i\}_1^p$-(un)identifiable cases, we need to check the identifiability of $(\boldsymbol{x}_0, A, B, G)$. Since $\boldsymbol{x}_0$ is consistently identifiable (with MSE less than  $1.00$E-$10$) across all (un)identifiable cases, its results are not presented. 

In both Tables, for identifiable cases, as the number of samples $n$ increases, the MSEs for all parameters of interest decrease and approach zero. However, in the unidenfiable cases, where the identifiability condition \textbf{B1}/\textbf{B2} stated in Theorem \ref{thm: identifiability ODE2 continuous}/\ref{thm: identifiability ODE2 more} is unmet, the MSEs for certain parameters remain high irrespective of sample size. These results offer strong empirical support for the validity of the identifiability conditions outlined in Theorem \ref{thm: identifiability ODE2 continuous} and Theorem \ref{thm: identifiability ODE2 more}. It is noteworthy that in the  $\{\boldsymbol{\eta}_i\}_1^p$ case, where observations are sampled from $p=3$ controllable trajectories, remarkably accurate parameter estimates can be obtained even with a limited number of samples.

\begin{table}[htbp]
    \caption{MSEs of the $\boldsymbol{\eta}$-(un)identifiable cases of the ODE \eqref{eq:ODE2}}
    \label{table1}
    \centering
    \begin{tabu}{ccccccccc}
    \toprule
    \multicolumn{1}{c}{\multirow{2}{*}{$\boldsymbol{n}$}} & \multicolumn{4}{c}{\textbf{Identifiable}} & \multicolumn{4}{c}{\textbf{Unidentifiable}}\\
    \cmidrule(lr){2-5} \cmidrule(l){6-9}
       & $A$  & $B\boldsymbol{z}_0$  & $BG\boldsymbol{z}_0$  & $BG^2\boldsymbol{z}_0$ & $A$ &  $B\boldsymbol{z}_0$  & $BG\boldsymbol{z}_0$  & $BG^2\boldsymbol{z}_0$\\
    \midrule
    \addlinespace
    \multirow{2}{*}{10}   & 6.00\text{E-}05 & 0.0004  & 0.0044 & 0.0007 & 0.0994 & 0.0494 & 0.9185 & 0.6482 \\
    \rowfont{\scriptsize}   & ($\pm$5.40\text{E-}08) & ($\pm$3.45\text{E-}06) & ($\pm$0.0004) & ($\pm$3.91\text{E-}06)  & ($\pm$0.0157) & ($\pm$0.1243) & ($\pm$8.3148) & ($\pm$1.4306) \\
    \addlinespace
    \addlinespace
    \multirow{2}{*}{100}   & 4.15\text{E-}05 & 0.0003  & 0.0029 & 0.0005 & 0.0372 & 0.0174 & 0.3517 & 0.5767 \\
    \rowfont{\scriptsize}   & ($\pm$1.62\text{E-}08) & ($\pm$8.52\text{E-}07) & ($\pm$9.42\text{E-}05) & ($\pm$2.90\text{E-}06)  & ($\pm$0.0032) & ($\pm$0.0087) & ($\pm$0.3460) & ($\pm$1.4055) \\
    \addlinespace
    \addlinespace
    \multirow{2}{*}{500}   & 2.65\text{E-}05 & 0.0002  & 0.0019 & 0.0002 & 0.0461 & 0.1071 & 0.5783 & 0.3648 \\
    \rowfont{\scriptsize}   & ($\pm$8.71\text{E-}09) & ($\pm$4.38\text{E-}07) & ($\pm$4.84\text{E-}05) & ($\pm$8.38\text{E-}07)  & ($\pm$0.0099) & ($\pm$0.1768) & ($\pm$2.5747) & ($\pm$0.4507) \\
    \addlinespace
    \bottomrule
    \end{tabu}
\end{table}

\begin{table}[htbp]
    \caption{MSEs of the  $\{\boldsymbol{\eta}_i\}_1^p$-(un)identifiable cases of the ODE \eqref{eq:ODE2}}
    \label{table2}
    \centering
    \begin{tabu}{ccccccc}
    \toprule
    \multicolumn{1}{c}{\multirow{2}{*}{$\boldsymbol{n}$}} & \multicolumn{3}{c}{\textbf{Identifiable}} & \multicolumn{3}{c}{\textbf{Unidentifiable}}\\
    \cmidrule(lr){2-4} \cmidrule(l){5-7}
       & $A$  & $B$ & $G$ &  $A$ &  $B$  & $G$ \\
    \midrule
    \addlinespace
    \multirow{2}{*}{10}   & 5.83\text{E-}22 & 2.85\text{E-}21  & 2.27\text{E-}21 & 0.6349 & 0.1913 & 0.0044  \\
    \rowfont{\footnotesize}   & ($\pm$7.41\text{E-}42) & ($\pm$2.75\text{E-}40) & ($\pm$5.69\text{E-}41) & ($\pm$0.7464)  & ($\pm$0.0686) & ($\pm$0.0011)  \\
    \addlinespace
    \addlinespace
    \multirow{2}{*}{30}   & 1.50\text{E-}22 & 7.80\text{E-}22  & 5.76\text{E-}22 & 0.6169 & 0.1850 & 0.0045 \\
    \rowfont{\footnotesize}   & ($\pm$3.23\text{E-}43) & ($\pm$1.14\text{E-}41) & ($\pm$5.28\text{E-}42) & ($\pm$0.7194)  & ($\pm$0.0657) & ($\pm$0.0007)  \\
    \addlinespace
    \addlinespace
    \multirow{2}{*}{50}   & 5.16\text{E-}23 & 3.01\text{E-}22 & 2.39\text{E-}22 & 0.5876 & 0.1761 & 0.0045  \\
    \rowfont{\footnotesize}   & ($\pm$6.20\text{E-}44) & ($\pm$3.27\text{E-}42) & ($\pm$8.46\text{E-}43) & ($\pm$0.6895)  & ($\pm$0.0627) & ($\pm$0.0008)  \\
    \addlinespace
    \bottomrule
    \end{tabu}
\end{table}

For the $\boldsymbol{\eta}$-(un)identifiable cases, assumption \textbf{C1} stated in Theorem \ref{thm: identifiability ODE2 discrete} holds true for all values of $n$ in the identifiable cases, while it is violated across all $n$ in the unidentifiable cases. In the  $\{\boldsymbol{\eta}_i\}_1^p$-(un)identifiable cases, condition \textbf{C2} stated in Theorem \ref{thm: identifiability ODE2 discrete more} is satisfied for all values of $n$ in the identifiable cases, but is found to be violated for all values of $n$ in the unidentifiable cases. These findings provide strong empirical evidence supporting the validity of the identifiability conditions proposed in Theorem \ref{thm: identifiability ODE2 discrete} and Theorem \ref{thm: identifiability ODE2 discrete more}. 

In Appendix \ref{app:more simulations}, we present additional simulation results for higher-dimensional cases, along with simulations that incorporate a variety of ground-truth parameter configurations. These results consistently affirm the validity of our proposed identifiability conditions. For further details, please refer to Appendix \ref{app:more simulations}.

\section{Related work}

\textbf{Identifiability analysis of linear ODE systems.}
Within control theory, extensive research has been conducted on the identifiability analysis of linear dynamical systems governed by ODEs \cite{bellman1970structural, gargash1980necessary, glover1974parametrizations, grewal1976identifiability}. In the applied mathematics area, Stanhope et al. \cite{stanhope2014identifiability} and Qiu et al. \cite{qiu2022identifiability} have systematically investigated the identifiability of linear ODE systems based on a single trajectory. Furthermore, Wang et al. \cite{wang2024identifiability} have extended these findings to scenarios where only discrete observations sampled from a single trajectory are available. However, existing studies primarily concentrate on linear ODE systems with fully observable variables. To the best of our knowledge, our work represents the inaugural endeavor to systematically analyze the identifiability of linear ODE systems in the presence of hidden confounders.

\textbf{Connection between causality and differential equations.} Differential equations provide a natural framework for understanding causality within dynamic systems, particularly in the context of continuous-time processes \cite{aalen2012causality, scholkopf2021toward}. Consequently, significant efforts have been directed towards establishing a theoretical link between causality and differential equations. In the deterministic case, Mooij et al. \cite{mooij2013ordinary} and Rubenstein et al. \cite{rubenstein2018deterministic} have established a mathematical connection between ODEs and Structural Causal Models (SCMs). Wang et al. \cite{wang2024identifiability} have proposed a method for inferring the causal structure of linear ODEs. In the domain of neural ODEs, Aliee et al. \cite{aliee2022sparsity, aliee2021beyond} have applied various regularization techniques to enhance the recovery of the causal relationships. Turning to the stochastic case, 
Hansen et al. \cite{hansen2014causal} and Wang et al. \cite{wang2024generator} have proposed causal interpretations and identifiability analysis of Stochastic Differential Equations (SDEs). Additionally, Bellot et al. \cite{bellot2021neural} have introduced a method for consistently discovering the causal structure of SDE systems using penalized neural ODEs. These works aim to establish a theoretical connection between causality and differential equations in various ways. Our contribution to this scholarly landscape lies in the systematic analysis of the identifiability of linear ODEs, particularly in the presence of hidden confounders.

\section{Conclusion}\label{sec:conclusion}

This paper presents a systematic identifiability analysis of linear ODE systems incorporating hidden confounders. Specifically, we establish a sufficient and necessary identifiability condition for the linear ODE system with independent latent confounders. Additionally, we provide four identifiability conditions for the linear ODE system with causally related latent confounders, wherein the causal structure of the latent confounders adheres to a DAG.

A notable limitation of our work lies in the practical verification of these identifiability conditions, given that the true underlying system parameters are often unavailable in real-world scenarios. However, our study significantly contributes to the understanding of the intrinsic structure of linear ODE systems with hidden confounders. By providing insights into the identifiability aspects, our findings empower practitioners to utilize models that adhere to the proposed conditions (e.g., through constrained parameter estimation) for learning from real-world data while ensuring identifiability. 
\section*{Acknowledgements}
YW was supported by the Australian Government Research Training Program (RTP) Scholarship from the University of Melbourne. BH was supported by NSF DMS-2428058. XG was supported by ARC DE210101352. MG was supported by ARC DE210101624 and ARC DP240102088.

\newpage
\bibliographystyle{abbrv}
\bibliography{references}

\newpage
\appendix
\part{Appendix for "Identifiability Analysis of Linear ODE Systems with Hidden Confounders"}
\section{Summary of notations and proposed identifiability conditions}

\begin{table}[htpb]
    \caption{Summary of notations}
    \label{table-notations}
    \centering
    \begin{tabu}{ll}
    \toprule
     \textbf{Notation} & \textbf{Description} \\
    \midrule
     $\boldsymbol{x/z}$  &  observable/latent variables \\
     $x_i/z_i$  & the $i$-th observable/latent variable \\
     $t$  & time\\
     $t_j$ & the $j$-th time point \\
     $\boldsymbol{x}(t)/\boldsymbol{z}(t)$ & state of observable/latent variable at time $t$\\
     $\boldsymbol{x}_j$ & $\boldsymbol{x}(t_j)$, observable state at time $t_j$\\
     $\boldsymbol{x}_0/ \boldsymbol{z}_0$ & initial condition of observable/latent variable\\
     $\dot{\boldsymbol{x}}(t)$  & first derivative of $\boldsymbol{x}(t)$ w.r.t. time $t$\\
     $d$ & dimension of observable variables\\
     $p$  & dimension of latent variables\\
     $A,B,G$  & constant parameter matrices defined in Eq.\eqref{eq:ODE1} and \eqref{eq:ODE2} \\
     $\boldsymbol{f}(t)$ & Function of time $t$ defined in Eq.\eqref{eq:ODE1} \\
     $\boldsymbol{v}_k$  & constant parameter vector defined in Eq.\eqref{eq:f(t)} \\
     $\{\boldsymbol{v}_k\}_0^r$ & all the $\boldsymbol{v}_k$'s for $k=0,\ldots,r$ \\
     $\boldsymbol{\theta}$ & $:= (\boldsymbol{x}_0, \boldsymbol{z}_0, A, B, \{\boldsymbol{v}_k\}_0^r)$, the system parameter of ODE system \eqref{eq:ODE1} \\
     $\boldsymbol{\beta}$ & a vector defined in Thm.\ref{thm:identifiability ODE1} \textbf{A1}\\
     $\boldsymbol{y}(t)$ & augmented state \\
     $\boldsymbol{y}_0$ & initial condition of augmented variable\\
     $\boldsymbol{\eta}$ & $:= (\boldsymbol{x}_0, \boldsymbol{z}_0, A, B, G)$, the system parameter of ODE system \eqref{eq:ODE2}\\
    $\boldsymbol{\gamma}$ & a vector defined in Thm.\ref{thm: identifiability ODE2 continuous} \textbf{B1} \\
     $\boldsymbol{z}_0^{*}$ & given initial condition of latent variable\\
     $\boldsymbol{z}_0^{*i}$ & the $i$-th given initial condition of latent variable\\
     $\boldsymbol{\eta}_i$ & $:=(\boldsymbol{x}_0, \boldsymbol{z}_0^{*i}, A, B, G)$, the system parameter of ODE system \eqref{eq:ODE2}\\
     $\boldsymbol{\gamma}_i$ & a vector defined in Thm \ref{thm: identifiability ODE2 more} \textbf{B2} \\
     $\boldsymbol{x}_{ij}$ & $:= \boldsymbol{x}(t_j;\boldsymbol{\eta}_i)$, observable state of ODE system \eqref{eq:ODE2} with parameter $\boldsymbol{\eta}_i$ at time $t_j$\\
     $\boldsymbol{y}_{ij}$ & augmented state of $\boldsymbol{x}_{ij}$ at time $t_j$ \\
      $A', \boldsymbol{x}_0', \ldots$  & the alternative counterpart corresponding to $A,\boldsymbol{x}_0, \ldots$ \\ 
    \bottomrule
    \end{tabu}
\end{table}

\begin{table}[htpb]
    \caption{Summary of proposed identifiability conditions}
    \label{table-thms}
    \centering
    \begin{tabu}{cccccc}
    \toprule
     \textbf{ODEs} & \textbf{Conds.}  & \textbf{\# Traj.} & \textbf{Obs.} &  
     \textbf{Def./Thm.} & \textbf{Necessity}  \\
    \midrule
    Eq.\eqref{eq:ODE1}+\eqref{eq:f(t)} & A1 & 1 & continuous & 3.1 & Yes\\
    Eq.\eqref{eq:ODE2} & latent DAG, B1 & 1 & continuous & 4.1 & Yes\\
    Eq.\eqref{eq:ODE2} & latent DAG, C1 & 1 & discrete & 4.2 & Yes\\
    Eq.\eqref{eq:ODE2} & latent DAG, B2, B3, B4 & $p$ & continuous & 4.3 & No\\
    Eq.\eqref{eq:ODE2} & latent DAG, C2, B3, B4 & $p$ & discrete & 4.4 & No\\
    \bottomrule
    \end{tabu}
\end{table}
\newpage
\section{Real world examples}\label{app:real world examples}

In this section,  we present two real-world examples that correspond to the ODE systems \eqref{eq:ODE1} and \eqref{eq:ODE2}. These examples initially assume fully observable systems, with latent variables introduced by us based on prior experience or established physical laws.

\subsection{Damped harmonic oscillators model}
Consider a one-dimensional system comprising $D$ point masses $m_i$ for $i = 1, \ldots, D$ with positions $Q_i(t) \in \mathbb{R}$ and momenta $P_i(t) \in \mathbb{R}$. These masses are interconnected by springs characterized by spring constants $k_i$ and equilibrium lengths $l_i$, and each mass is subject to friction with coefficient $b_i$. The system's boundary conditions are fixed at $Q_0(t) = 0$ and $Q_{D+1}(t) = L$.

The dynamics of this system are described by the following linear ODE system \cite{mooij2013ordinary}:

\begin{equation}
\begin{split}
    \dot{P}_i(t) &= k_i(Q_{i+1}(t) - Q_i(t) -l_i)-k_{i-1}(Q_i(t) -Q_{i-1}(t)-l_{i-1}) - b_i P_i(t)/m_i \\
    \dot{Q}_i(t) &= P_i(t)/m_i
\end{split}    
\end{equation}
where $Q_0(t) = 0$ and $Q_{D+1}(t) = L$ represent the fixed boundary conditions. External forces $F_j(t)$ (e.g., wind force or a varying magnetic field) may influence the entire system of coupled oscillators. These external forces can be modeled here as latent variables with constant derivatives. Consequently, the system can be reformulated as follows:
\begin{equation}
\begin{split}
    &\dot{P}_i(t) = k_i(Q_{i+1}(t) - Q_i(t) -l_i)-k_{i-1}(Q_i(t) -Q_{i-1}(t)-l_{i-1}) - b_i P_i(t)/m_i + \sum_{j}\alpha_{ij} F_j(t) \\
    &\dot{Q}_i(t) = P_i(t)/m_i\\
    &\dot{F}_j(t) = c_j
\end{split}    
\end{equation}

where $\alpha_{ij}$ is a constant determining the effect of the external force $F_j(t)$ on the $i$-th mass, and $c_j$ is the constant rate of change of the external force $F_j(t)$. This model aligns with our ODE system (2), and an illustrative causal structure for this model is provided in Figure \ref{fig:oscillator}.

\begin{figure}[ht]
  \centering
  \includegraphics[width=0.4\textwidth]{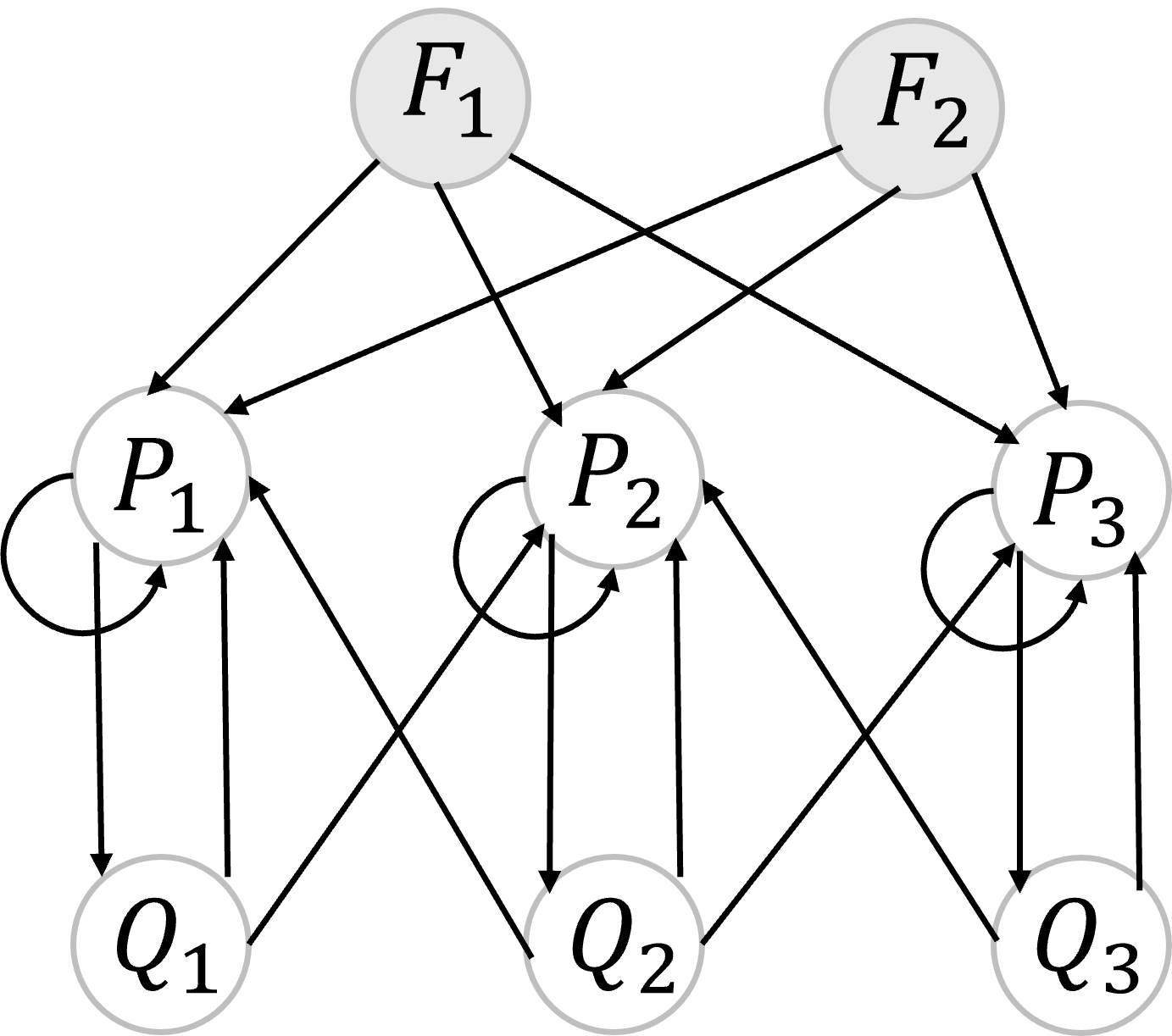}
  \caption{Example causal structure of the damped harmonic oscillators model with $3$ oscillators and $2$ latent variables.}
  \label{fig:oscillator}
\end{figure}

In regions with predictable wind patterns, such as during monsoon seasons or in controlled experimental settings, wind force can be approximated with a constant rate, making this an ideal context for modeling external forces with constant derivatives. Furthermore, constant forces or those represented as polynomial functions of time align well with our ODE system structure. For instance, a uniform magnetic field acting on the system would produce a constant force. These examples demonstrate that various latent factors can effectively fit within our ODE structure.

\subsection{Population model}

The growth of a population $P(t)$ can be described by a linear ODE \cite{anisiu2014lotka}:
\begin{equation*}
    \dot{P}(t) = a P(t),
\end{equation*}
where $a$ is a constant representing the population growth rate. This system may also be influenced by latent variables $L_i$, such as environmental factors or food supply. By incorporating these latent influences, the system can be expressed as:
\begin{equation*}
\begin{split}
    \dot{P}(t) &= a P(t) + b L_1(t) + c L_2 (t)\\
    \dot{L_1}(t) &= l L_2(t)\\
    \dot{L_2}(t) &= m  
\end{split}   
\end{equation*}
where $a, b, c, l$ and $m$ are constants. Here, $L_1(t)$ represents the food supply, which is influenced by the environmental factor $L_2(t)$. $L_2(t)$ corresponds to an environmental factor, such as temperature or pollution level, that changes steadily over time. This model aligns well with our ODE system (3),  and an illustrative causal structure for this model is provided in Figure \ref{fig:population}.

\begin{figure}[ht]
  \centering
  \includegraphics[width=0.2\textwidth]{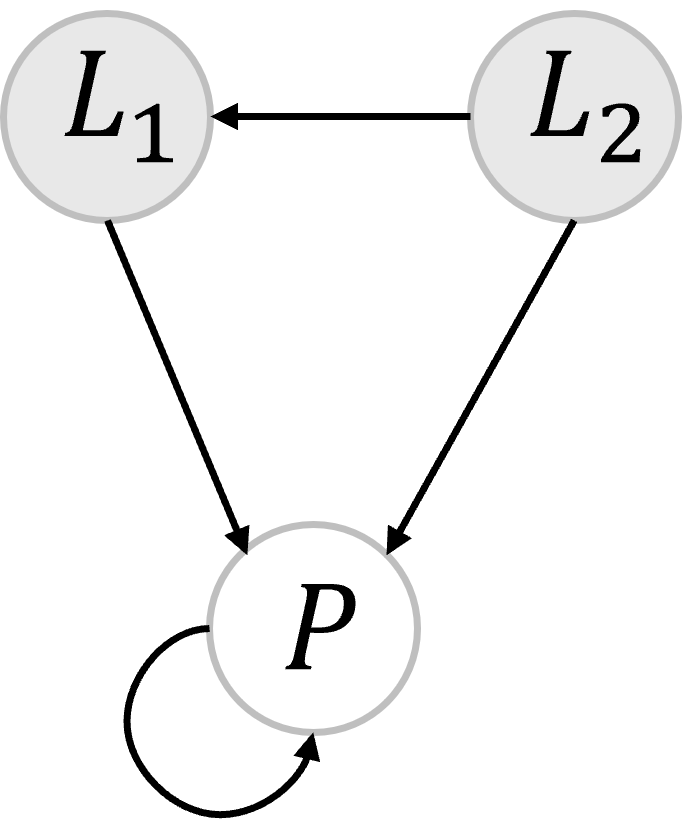}
  \caption{Causal structure of the population model.}
  \label{fig:population}
\end{figure}

An example of an environmental factor changing at a constant rate is pollution from an industrial plant that continuously releases a fixed amount of pollutants, or from a wastewater treatment plant that discharges a specified amount of treated wastewater into a river on an hourly basis.

\newpage
\section{An example of an unidentifiable case of the linear ODE system (\ref{eq:ODE2})}\label{app:unidentifiable example}

Recall that the parameters of the ODE system \eqref{eq:ODE2} are:
\begin{equation*}
    \boldsymbol{x}_0 = \begin{bmatrix}
     1\\
     1
    \end{bmatrix}\,,\ \ 
    \boldsymbol{z}_0 = \begin{bmatrix}
     1\\
     1
    \end{bmatrix}\,, \ \ 
        B = \begin{bmatrix}
        1 & 1\\
        1 & 1
    \end{bmatrix}\,,\ \ 
    G = \begin{bmatrix}
        0 & 1\\
        0 & 0
    \end{bmatrix}\,,
\end{equation*}
\begin{equation*}
        A = \begin{bmatrix}
        1 & 0\\
        0 & 1
    \end{bmatrix}\,,\ \ \ 
    A' = \begin{bmatrix}
    0 & 1\\
    1 & 0
\end{bmatrix}\,, \ \ \ 
    M = \begin{bmatrix}
    A & B\\
    \boldsymbol{0} & G
    \end{bmatrix}\,, \ \ \ 
    M' = \begin{bmatrix}
    A' & B\\
    \boldsymbol{0} & G
    \end{bmatrix}\,.
\end{equation*}

We first calculate the solution of $\boldsymbol{z}(t)$,
\begin{equation*}
\begin{split}
    \boldsymbol{z}(t) &= e^{Gt}\boldsymbol{z}_0   \\
    &= \sum_{k=0}^\infty \cfrac{G^k \boldsymbol{z}_0}{k!}t^k
    = \sum_{k=0}^1 \cfrac{G^k \boldsymbol{z}_0}{k!}t^k
    = \begin{bmatrix}
        1+t \\
        1
    \end{bmatrix}
\end{split}
\end{equation*}

We intervene $x_1(t) = 1$, then under matrix $M$:
\begin{equation*}
\begin{split}
    \dot{x}_2(t) &= x_2(t) + z_1(t) + z_2(t)\\
    &= x_2(t) + t + 2\,.
\end{split}
\end{equation*}

To solve this differential equation, we rewrite it in the standard linear form and multiply both sides by the integrating factor $e^{-t}$,
\begin{equation*}
    e^{-t}\dot{x}_2(t) - e^{-t}x_2(t) = (t + 2)e^{-t}\,.
\end{equation*}
The left-hand side of this equation is the derivative of $e^{-t}x_2(t)$:
\begin{equation*}
    \cfrac{d}{dt}(e^{-t}x_2(t)) = (t + 2)e^{-t}\,.
\end{equation*}
Next, integrate both sides w.r.t. $t$:
\begin{equation*}
 \int  \cfrac{d}{dt}(e^{-t}x_2(t)) dt = \int (t + 2)e^{-t} dt\,.
\end{equation*}
The left-hand side integrates to:
\begin{equation*}
    e^{-t}x_2(t)\,.
\end{equation*}
Next, we use integration by parts to find the integral on the right-hand side:
\begin{equation*}
\begin{split}
    \int (t + 2)e^{-t} dt &= -(t+2)e^{-t} - \int -e^{-t}dt    \\
    &= -(t+2)e^{-t} - e^{-t}\\
    &=  -(t+3)e^{-t}\,.
\end{split}
\end{equation*}
Thus:
\begin{equation*}
     e^{-t}x_2(t) =  -(t+3)e^{-t} + C\,,
\end{equation*}
where $C$ is the constant of the integration.

Multiplying both sides by $e^t$ to solve for $x_2(t)$:
\begin{equation*}
    x_2(t) = -t-3 + Ce^t\,.
\end{equation*}

Now, use the initial condition $x_2(0) =1 $, we get
\begin{equation*}
    C = 4\,.
\end{equation*}
Therefore, 
\begin{equation*}
    x_2(t) = 4e^t -t-3\,.
\end{equation*}

Whereas under matrix $M'$: 
\begin{equation*}
\begin{split}
 \dot{x}_2(t) &= x_1(t) + z_1(t) + z_2(t) \\  
 &= t + 3\,.
\end{split}
\end{equation*}
Simple calculations show that 
\begin{equation*}
    \boldsymbol{x}_2(t) = t^2/2 + 3t + 1\,.
\end{equation*}
\newpage
\section{Detailed proofs}

\subsection{Proof of Theorem \ref{thm:identifiability ODE1}}\label{proof: ODE1}

\begin{proof}
Recall that the first derivative of $\boldsymbol{x}(t)$ can be expressed as:
\begin{equation*}
\begin{split}
    \dot{\boldsymbol{x}}(t) &= A \boldsymbol{x}(t) + B \boldsymbol{z}(t)\\
    &= A \boldsymbol{x}(t) + \sum_{k=0}^{r} \cfrac{B\boldsymbol{v}_k}{k+1} t^{k+1} + B\boldsymbol{z}_0\,.    
\end{split}
\end{equation*}
Set 
\begin{equation*}
    \boldsymbol{y}(t) = \begin{bmatrix}
        \boldsymbol{x}(t)\\
        1\\
        t\\
        t^2\\
        \vdots\\
        t^{r+1}
    \end{bmatrix}\,,
\end{equation*}
we see that $\boldsymbol{y}(t) \in \mathbb{R}^{d+r+2}$, and the first derivative of $\boldsymbol{y}(t)$ w.r.t. time $t$ can be expressed as 
\begin{equation*}
\begin{split}
    \dot{\boldsymbol{y}}(t) &= \begin{bmatrix}
        \dot{\boldsymbol{x}}(t) \\[2pt]
        0\\
        1\\
        2t\\
        \vdots\\
        (r+1)t^{r}
    \end{bmatrix}\\
    &=\underbrace{\begin{bmatrix}
        A & B \boldsymbol{z}_0 & B\boldsymbol{v}_0 & \frac{B\boldsymbol{v}_1} {2} & \ldots & \frac{B \boldsymbol{v}_{r-1}} {r} & \frac{B\boldsymbol{v}_r} {r+1} \\[2pt]
        \boldsymbol{0}_d & 0 & 0 & 0 & \ldots & 0 & 0\\
        \boldsymbol{0}_d & 1 & 0 & 0 & \ldots & 0 & 0\\
        \boldsymbol{0}_d & 0 & 2 & 0 & \ldots & 0 & 0\\
        \vdots & \vdots & \vdots & \vdots & \ddots & \vdots & \vdots\\
        \boldsymbol{0}_d & 0 & 0 & 0 & \ldots & r+1 & 0    
    \end{bmatrix}}_\textrm{denoted as $F$}
    \underbrace{\begin{bmatrix}
        \boldsymbol{x}(t)\\[2pt]
        1\\
        t\\
        t^2\\
        \vdots\\
        t^{r+1}
    \end{bmatrix}}_{\boldsymbol{y}(t)}\,,   
\end{split}
\end{equation*}
where $\boldsymbol{0}_d$ denotes a $d$ dimensional zero row vector. Obviously, 
\begin{equation*}
    \boldsymbol{y}(0) = [\boldsymbol{x}_0^T, 1, 0, 0, \ldots, 0]^{\top}\,,
\end{equation*}
we denote it as $\boldsymbol{y}_0$. Therefore, $\boldsymbol{y}(t)$ follows a homogeneous linear ODE system that can be expressed as:
\begin{equation}\label{eq:ODE y1}
\begin{split}
    \dot{\boldsymbol{y}}(t) &= F\boldsymbol{y}(t)\,,\\
    \boldsymbol{y}(0) &= \boldsymbol{y}_0\,,
\end{split}
\end{equation}
where $F \in \mathbb{R}^{(d+r+2)\times(d+r+2)}$. Worth noting that all state variables in the ODE system \eqref{eq:ODE y1} are observable.
Then according to Lemma \ref{lemma:identifiability ODE}, the identifiability of the dynamical system described by the ODE system \eqref{eq:ODE y1} is contingent upon the linear independence of the vectors $\{\boldsymbol{y}_0, F\boldsymbol{y}_0, F^2 \boldsymbol{y}_0, \ldots, F^{d+r+1}\boldsymbol{y}_0\}$. Specifically, the system is $(\boldsymbol{y}_0, F)$-identifiable if and only if this set of vectors is linearly independent, indicating that the matrix formed by these vectors, denoted by $H$, has a rank of $d+r+2$. In the following, we will elucidate that if and only assumption \textbf{A1} is satisfied, the rank of this matrix $H$ equals $d+r+2$.

Some calculations show that, 
\begin{equation}\label{eq:fky11}
    F^k \boldsymbol{y}_0 = \begin{bmatrix}
        A^{k-1}(A\boldsymbol{x}_0 + B\boldsymbol{z}_0)+ \sum_{j=0}^{k-2}j!A^{k-2-j} B \boldsymbol{v}_j \\
        0\\
        \vdots\\
        0\\
        k!\\
        0\\
        \vdots\\
        0
    \end{bmatrix} \ \ \ \mbox{for } k = 1, 2, \ldots, r+1\,,
\end{equation}
where $k!$ is the $(d+k+1)$-th element. 

And
\begin{equation}\label{eq:fky21}
    F^k \boldsymbol{y}_0 = \begin{bmatrix}
        A^{k-(r+2)}(A^{r+2}\boldsymbol{x}_0 + A^{r+1}B\boldsymbol{z}_0 + \sum_{j=0}^{r}j!A^{r-j} B\boldsymbol{v}_j)\\
        0\\
        \vdots\\
        0
    \end{bmatrix}\ \ \ \mbox{for } k = r+2,\ldots, r+d+1\,.
\end{equation}
According to assumption \textbf{A1} in Theorem \ref{thm:identifiability ODE1},
\begin{equation*}
   \boldsymbol{\beta} = A^{r+2}\boldsymbol{x}_0 + A^{r+1}B\boldsymbol{z}_0 + \sum_{j=0}^{r}j!A^{r-j} B\boldsymbol{v}_j\,,
\end{equation*}
therefore, $F^k\boldsymbol{y}_0$ can also be expressed as
\begin{equation}\label{eq:fky22}
    F^k \boldsymbol{y}_0 = \begin{bmatrix}
        A^{k-(r+2)}\boldsymbol{\beta}\\
        0\\
        \vdots\\
        0
    \end{bmatrix}\ \ \ \mbox{for } k = r+2,\ldots, r+d+1\,.
\end{equation}
We denote the matrix
\begin{equation*}
\begin{split}
        H :&= \begin{bmatrix}
        \boldsymbol{y}_0 & F\boldsymbol{y}_0 & F^2 \boldsymbol{y}_0 & \ldots & F^{r+1}\boldsymbol{y}_0 & F^{r+2} \boldsymbol{y}_0 & \ldots & F^{d+r+1}\boldsymbol{y}_0
    \end{bmatrix}\\
    :&= \begin{bmatrix}
        H_{11} & H_{12}\\
        H_{21} & H_{22}
    \end{bmatrix}
\end{split}
\end{equation*}
as a block matrix. Then, based on Equations \eqref{eq:fky11} and \eqref{eq:fky22}, one obtains that
\begin{equation*}
\begin{split}
    H_{11} &=\colvec{
        \boldsymbol{x}_0    &    A\boldsymbol{x}_0+B\boldsymbol{z}_0  &   A^2\boldsymbol{x}_0+AB\boldsymbol{z}_0+B\boldsymbol{v}_0 & \ldots  & A^{r+1}\boldsymbol{x}_0 + A^rB\boldsymbol{z}_0 + \sum_{j=0}^{r-1}j!A^{r-1-j} B \boldsymbol{v}_j\\
    }   \\
    &\in \mathbb{R}^{d\times (r+2)}\,,\\
    H_{12} &= \colvec{
        \boldsymbol{\beta} & A\boldsymbol{\beta} &\ldots & A^{d-1}\boldsymbol{\beta}\\ 
        }\\
    &\in \mathbb{R}^{d\times d}\,,   \\ 
    H_{21} &=\begin{bmatrix}
        1                   &    0                                    &   0                                                         & \ldots  & 0                  \\
        0                   &    1                                    &   0                                                         & \ldots  & 0                    \\
        0                   &    0                                    &   2!                                                        & \ldots  & 0                   \\
        \vdots              &    \vdots                               &   \vdots                                                    & \ddots  & \vdots                    \\
        0                   &    0                                    &   0                                                         & \ldots  & (r+1)!                    \\
    \end{bmatrix}\in \mathbb{R}^{(r+2)\times (r+2)}\,, \\
    H_{22} &= \boldsymbol{0}_{(r+2)\times d} \in \mathbb{R}^{(r+2)\times d}\,.
\end{split}
\end{equation*}

Some calculations show that 
\begin{equation*}
    \text{rank}(H) = \text{rank}(H_{12}) + \text{rank}(H_{21})\,.
\end{equation*}
It is apparent that 
\begin{equation*}
    \text{rank}(H_{21}) = r+2\,.
\end{equation*}
To achieve $\text{rank}(H) = d+r+2$, the rank of $H_{12}$ must be $d$. The rank of $H_{12}$ equals $d$ if and only if the set of vectors $\{\boldsymbol{\beta}, A\boldsymbol{\beta}, \ldots, A^{d-1}\boldsymbol{\beta} \}$ is linearly independent, that is, assumption \textbf{A1} is satisfied.

Now that we have proved that the ODE system \eqref{eq:ODE y1} is $(\boldsymbol{y}_0, F)$-identifiable if and only if assumption \textbf{A1} is satisfied. That is, under assumption \textbf{A1}, the trajectory 
$\boldsymbol{y}(\cdot; \boldsymbol{y}_0, F)$ uniquely determines both $\boldsymbol{y}_0$ and matrix $F$. Consequently, it also uniquely determines $(\boldsymbol{x}_0, A, B\boldsymbol{z}_0, B\boldsymbol{v}_0, \ldots, B\boldsymbol{v}_r )$, thus establishing that the ODE system \eqref{eq:ODE1} is $\boldsymbol{\theta}$-identifiable if and only if assumption \textbf{A1} is satisfied. 
\end{proof}
\subsection{Proof of Theorem \ref{thm: identifiability ODE2 continuous}}\label{proof: ODE2 continuous}

\begin{proof}
Recall that the first derivative of $\boldsymbol{x}(t)$ can be expressed as:
\begin{equation*}
\begin{split}
    \dot{\boldsymbol{x}}(t) &= A \boldsymbol{x}(t) + B \boldsymbol{z}(t)\\
    &= A \boldsymbol{x}(t) + \sum_{k=0}^{p-1} \cfrac{BG^k\boldsymbol{z}_0}{k!}t^k\,.
\end{split}
\end{equation*}
Set 
\begin{equation*}
    \boldsymbol{y}(t) = \begin{bmatrix}
        \boldsymbol{x}(t)\\
        1\\
        t\\
        t^2\\
        \vdots\\
        t^{p-1}
    \end{bmatrix}\,,
\end{equation*}
we see that $\boldsymbol{y}(t) \in \mathbb{R}^{d+p}$, and the first derivative of $\boldsymbol{y}(t)$ w.r.t. time $t$ can be expressed as 
\begin{equation*}
\begin{split}
    \dot{\boldsymbol{y}}(t) &= \begin{bmatrix}
        \dot{\boldsymbol{x}}(t) \\[2pt]
        0\\
        1\\
        2t\\
        \vdots\\
        (p-1)t^{p-2}
    \end{bmatrix}\\
    &=\underbrace{\begin{bmatrix}
        A & B \boldsymbol{z}_0 & BG\boldsymbol{z}_0 & \frac{BG^2 \boldsymbol{z}_0} {2!} & \ldots & \frac{BG^{p-2} \boldsymbol{z}_0} {(p-2)!} & \frac{BG^{p-1} \boldsymbol{z}_0} {(p-1)!} \\[2pt]
        \boldsymbol{0}_d & 0 & 0 & 0 & \ldots & 0 & 0\\
        \boldsymbol{0}_d & 1 & 0 & 0 & \ldots & 0 & 0\\
        \boldsymbol{0}_d & 0 & 2 & 0 & \ldots & 0 & 0\\
        \vdots & \vdots & \vdots & \vdots & \ddots & \vdots & \vdots\\
        \boldsymbol{0}_d & 0 & 0 & 0 & \ldots & p-1 & 0    
    \end{bmatrix}}_\textrm{denoted as $F$}
    \underbrace{\begin{bmatrix}
        \boldsymbol{x}(t)\\[2pt]
        1\\
        t\\
        t^2\\
        \vdots\\
        t^{p-1}
    \end{bmatrix}}_{\boldsymbol{y}(t)}\,,    
\end{split}
\end{equation*}
where $\boldsymbol{0}_d$ denotes a $d$ dimensional zero row vector. Obviously, 
\begin{equation*}
    \boldsymbol{y}(0) = [\boldsymbol{x}_0^T, 1, 0, 0, \ldots, 0]^{\top}\,,
\end{equation*}
we denote it as $\boldsymbol{y}_0$. Therefore, $\boldsymbol{y}(t)$ follows a homogeneous linear ODE system that can be expressed as:
\begin{equation}\label{eq:ODE y}
\begin{split}
    \dot{\boldsymbol{y}}(t) &= F\boldsymbol{y}(t)\,,\\
    \boldsymbol{y}(0) &= \boldsymbol{y}_0\,,
\end{split}
\end{equation}
where $F \in \mathbb{R}^{(d+p)\times(d+p)}$. Worth noting that all state variables in the ODE system \eqref{eq:ODE y} are observable.
Then according to Lemma \ref{lemma:identifiability ODE}, the identifiability of the dynamical system described by the ODE system \eqref{eq:ODE y} is contingent upon the linear independence of the vectors $\{\boldsymbol{y}_0, F\boldsymbol{y}_0, F^2 \boldsymbol{y}_0, \ldots, F^{d+p-1}\boldsymbol{y}_0\}$. Specifically, the system is $(\boldsymbol{y}_0, F)$-identifiable if and only if this set of vectors is linearly independent, indicating that the matrix formed by these vectors, denoted by $H$, has a rank of $d+p$. In the following, we will elucidate that if and only assumption \textbf{B1} is satisfied, the rank of this matrix $H$ equals $d+p$.

Some calculations show that, 
\begin{equation}\label{eq:fky1}
    F^k \boldsymbol{y}_0 = \begin{bmatrix}
        A^k\boldsymbol{x}_0 + \sum_{j=0}^{k-1}A^{k-1-j} B G^j \boldsymbol{z}_0\\
        0\\
        \vdots\\
        0\\
        k!\\
        0\\
        \vdots\\
        0
    \end{bmatrix} \ \ \ \mbox{for } k = 1, 2, \ldots, p-1\,,
\end{equation}
where $k!$ is the $(d+k+1)$-th element. 

And
\begin{equation}\label{eq:fky2}
    F^k \boldsymbol{y}_0 = \begin{bmatrix}
        A^{k-p}(A^p\boldsymbol{x}_0 + \sum_{j=0}^{p-1}A^{p-1-j} B G^j \boldsymbol{z}_0)\\
        0\\
        \vdots\\
        0
    \end{bmatrix}\ \ \ \mbox{for } k = p, p+1, \ldots, p+d-1\,.
\end{equation}
According to assumption \textbf{B1} in Theorem \ref{thm:identifiability ODE1},
\begin{equation*}
    \boldsymbol{\gamma} = A^p\boldsymbol{x}_0 + \sum_{j=0}^{p-1}A^{p-1-j} B G^j \boldsymbol{z}_0\,,
\end{equation*}
therefore, $F^k\boldsymbol{y}_0$ can also be expressed as
\begin{equation}\label{eq:fky3}
    F^k \boldsymbol{y}_0 = \begin{bmatrix}
        A^{k-p}\boldsymbol{\gamma}\\
        0\\
        \vdots\\
        0
    \end{bmatrix}\ \ \ \mbox{for } k = p, p+1, \ldots, p+d-1\,.
\end{equation}
We denote the matrix
\begin{equation*}
\begin{split}
        H :&= \begin{bmatrix}
        \boldsymbol{y}_0 & F\boldsymbol{y}_0 & F^2 \boldsymbol{y}_0 & \ldots & F^{p-1}\boldsymbol{y}_0 & F^p \boldsymbol{y}_0 & \ldots & F^{p+d-1}\boldsymbol{y}_0
    \end{bmatrix}\\
    :&= \begin{bmatrix}
        H_{11} & H_{12}\\
        H_{21} & H_{22}
    \end{bmatrix}
\end{split}
\end{equation*}
as a block matrix. Then, based on Equations \eqref{eq:fky1} and \eqref{eq:fky3}, one obtains that
\begin{equation*}
\begin{split}
    H_{11} &=\colvec{
        \boldsymbol{x}_0    &    A\boldsymbol{x}_0+B\boldsymbol{z}_0  &   A^2\boldsymbol{x}_0+AB\boldsymbol{z}_0+BG\boldsymbol{z}_0 & \ldots  & A^{p-1}\boldsymbol{x}_0 + \sum_{j=0}^{p-2}A^{p-2-j} B G^j \boldsymbol{z}_0\\
    }   \\
    &\in \mathbb{R}^{d\times p}\,,\\
    H_{12} &= 
     \colvec{
        \boldsymbol{\gamma} & A\boldsymbol{\gamma} &\ldots & A^{d-1}\boldsymbol{\gamma}\\
    }\\
    &\in \mathbb{R}^{d\times d}\,,   \\ 
    H_{21} &=\begin{bmatrix}
        1                   &    0                                    &   0                                                         & \ldots  & 0                  \\
        0                   &    1                                    &   0                                                         & \ldots  & 0                    \\
        0                   &    0                                    &   2!                                                        & \ldots  & 0                   \\
        \vdots              &    \vdots                               &   \vdots                                                    & \ddots  & \vdots                    \\
        0                   &    0                                    &   0                                                         & \ldots  & (p-1)!                    \\
    \end{bmatrix}\in \mathbb{R}^{p\times p}\,, \\
    H_{22} &= \boldsymbol{0}_{p\times d} \in \mathbb{R}^{p\times d}\,.
\end{split}
\end{equation*}

Some calculations show that 
\begin{equation*}
    \text{rank}(H) = \text{rank}(H_{12}) + \text{rank}(H_{21})\,.
\end{equation*}
It is apparent that 
\begin{equation*}
    \text{rank}(H_{21}) = p\,.
\end{equation*}
To achieve $\text{rank}(H) = d+p$, the rank of $H_{12}$ must be $d$. The rank of $H_{12}$ equals $d$ if and only if the set of vectors $\{\boldsymbol{\gamma}, A\boldsymbol{\gamma}, \ldots, A^{d-1}\boldsymbol{\gamma} \}$ is linearly independent, that is, assumption \textbf{B1} is satisfied.

Now that we have proved that the ODE system \eqref{eq:ODE y} is $(\boldsymbol{y}_0, F)$-identifiable if and only if assumption \textbf{B1} is satisfied. That is, under assumption \textbf{B1}, the trajectory $\boldsymbol{y}(\cdot; \boldsymbol{y}_0, F)$ uniquely determines both
$\boldsymbol{y}_0$ and the matrix $F$. Consequently, it also uniquely determines $(\boldsymbol{x}_0, A, B\boldsymbol{z}_0, BG\boldsymbol{z}_0, \ldots, BG^{p-1}\boldsymbol{z}_0 )$, thus establishing that the ODE system \eqref{eq:ODE2} is $\boldsymbol{\eta}$-identifiable if and only if assumption \textbf{B1} is satisfied. 
\end{proof}

\subsection{Proof of Theorem \ref{thm: identifiability ODE2 discrete}}\label{proof: ODE2 discrete}
Before providing the main proof, we first present two lemmas we will use for our proof.
\begin{lemma}\cite[Theorem 3.4]{stanhope2014identifiability}\label{lemma:trajectory}
The ODE system \eqref{eq:ODE} is $(\boldsymbol{x}_0, A)$-identifiable if and only if the trajectory $\boldsymbol{x}(\cdot; \boldsymbol{x}_0, A)$ is not confined to a proper subspace of $\mathbb{R}^d$.
\end{lemma}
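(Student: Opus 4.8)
The plan is to reduce Lemma~\ref{lemma:trajectory} to the purely algebraic criterion already available in Lemma~\ref{lemma:identifiability ODE}, namely condition \textbf{A0}. The bridge I would build is the subspace identity $\mathcal{T}=\mathcal{K}$, where $\mathcal{T}:=\mathrm{span}\{\boldsymbol{x}(t;\boldsymbol{x}_0,A):t\geqslant 0\}$ is the smallest subspace containing the trajectory and $\mathcal{K}:=\mathrm{span}\{\boldsymbol{x}_0,A\boldsymbol{x}_0,A^2\boldsymbol{x}_0,\ldots\}$ is the cyclic (Krylov) subspace generated by $\boldsymbol{x}_0$. Once this identity is in hand the lemma is immediate: by the Cayley--Hamilton theorem $\mathcal{K}=\mathrm{span}\{\boldsymbol{x}_0,A\boldsymbol{x}_0,\ldots,A^{d-1}\boldsymbol{x}_0\}$, so ``the trajectory is not confined to a proper subspace of $\mathbb{R}^d$'' is exactly the statement $\mathcal{K}=\mathbb{R}^d$, which is the linear independence asserted in \textbf{A0}, and Lemma~\ref{lemma:identifiability ODE} already equates \textbf{A0} with $(\boldsymbol{x}_0,A)$-identifiability.

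First I would establish the easy inclusion $\mathcal{T}\subseteq\mathcal{K}$. Writing $\boldsymbol{x}(t)=e^{At}\boldsymbol{x}_0=\sum_{k\geqslant 0}\frac{t^k}{k!}A^k\boldsymbol{x}_0$, every partial sum lies in $\mathcal{K}$; since $\mathcal{K}$ is finite-dimensional and therefore closed, the limit $\boldsymbol{x}(t)$ lies in $\mathcal{K}$ for each $t\geqslant 0$, and hence so does $\mathcal{T}$.

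The reverse inclusion $\mathcal{K}\subseteq\mathcal{T}$ is the substantive step, and I would obtain it by recovering each generator as a derivative of the trajectory at the origin, $A^k\boldsymbol{x}_0=\frac{d^k}{dt^k}e^{At}\boldsymbol{x}_0\big|_{t=0}$. Each such derivative is a limit of finite-difference quotients of trajectory values, all of which lie in the closed finite-dimensional subspace $\mathcal{T}$; hence $A^k\boldsymbol{x}_0\in\mathcal{T}$ for every $k$, giving $\mathcal{K}\subseteq\mathcal{T}$. A variant that avoids limits is to note that $t\mapsto e^{At}\boldsymbol{x}_0$ is real-analytic, truncate the exponential series using the minimal polynomial of $A$ restricted to $\mathcal{K}$, evaluate at $d$ distinct times, and invert the resulting Vandermonde-type system to express each $A^k\boldsymbol{x}_0$ with $0\leqslant k\leqslant d-1$ as a linear combination of the sampled points $\boldsymbol{x}(t_i)$.

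The main obstacle is precisely this reverse inclusion: ruling out that the trajectory lives in a subspace strictly smaller than $\mathcal{K}$. Analyticity of $t\mapsto\boldsymbol{x}(t)$ is exactly what forces this to fail, since if $\mathcal{T}=V\subsetneq\mathcal{K}$ then all derivatives $\boldsymbol{x}^{(k)}(0)=A^k\boldsymbol{x}_0$ would have to lie in $V$, contradicting that these vectors generate $\mathcal{K}$. I would therefore isolate the clean sublemma ``for a real-analytic curve, the span of the curve equals the span of all its derivatives at any single point'' and specialize it at $t=0$, where the derivatives are exactly the Krylov generators. Combining the two inclusions yields $\mathcal{T}=\mathcal{K}$, and the chain of equivalences through \textbf{A0} completes the proof.
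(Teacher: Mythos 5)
Your proof is correct, but note how it sits relative to the paper: the paper does not prove Lemma~\ref{lemma:trajectory} at all---it imports the statement verbatim as \cite[Theorem 3.4]{stanhope2014identifiability} and uses it as a black box in the proof of Theorem~4.3 (together with \cite[Lemma 6.1]{stanhope2014identifiability}). What you have done is supply the standard argument that lies behind the cited result: the identity $\mathcal{T}=\mathcal{K}$ between the span of the trajectory and the Krylov subspace $\mathrm{span}\{\boldsymbol{x}_0,A\boldsymbol{x}_0,\ldots,A^{d-1}\boldsymbol{x}_0\}$ (the smallest $A$-invariant subspace containing $\boldsymbol{x}_0$), followed by the equivalence of $\mathcal{K}=\mathbb{R}^d$ with condition \textbf{A0} via Cayley--Hamilton, and then Lemma~\ref{lemma:identifiability ODE} to convert \textbf{A0} into $(\boldsymbol{x}_0,A)$-identifiability. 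This is logically sound within the paper's structure, since Lemma~\ref{lemma:identifiability ODE} is stated independently (adapted from the Wang et al.\ reference), so no circularity arises; both inclusions are handled correctly, with the closedness of a finite-dimensional subspace justifying passage to limits of partial sums (for $\mathcal{T}\subseteq\mathcal{K}$) and of difference quotients (for $\mathcal{K}\subseteq\mathcal{T}$, by induction on the order of the derivative, using $\boldsymbol{x}^{(k)}(t)=A^k\boldsymbol{x}(t)\in\mathcal{T}$ for every $t$). The one soft spot is your optional ``variant that avoids limits'': after reducing modulo the minimal polynomial, the sampled values take the form $\boldsymbol{x}(t_i)=\sum_k c_k(t_i)A^k\boldsymbol{x}_0$ where the $c_k(t)$ are combinations of terms $t^j e^{\lambda t}$, not monomials, so the evaluation matrix is not literally Vandermonde and its invertibility at suitable times needs an extra argument (linear independence of the analytic functions $c_k$, then a choice of sample points making the generalized Vandermonde nonsingular). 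Since this variant is redundant---your limit-based argument already closes the reverse inclusion rigorously---the proof stands; your approach buys the paper a self-contained derivation of a lemma it otherwise outsources, at the mild cost of routing through Lemma~\ref{lemma:identifiability ODE}, whose proof is likewise external.
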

\begin{lemma}\cite[Lemma 6.1]{stanhope2014identifiability}\label{lemma:discrete}
    Trajectory $\boldsymbol{x}(\cdot; \boldsymbol{x}_0, A)$ is not confined to a proper subspace of $\mathbb{R}^d$ if and only if there exists $t_1, t_2, \ldots, t_d$ such that $\boldsymbol{x}_1, \boldsymbol{x}_2, \ldots, \boldsymbol{x}_d$ are linearly independent.
\end{lemma}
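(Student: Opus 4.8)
The plan is to prove the equivalence by unwinding the phrase ``confined to a proper subspace.'' Setting $V := \text{span}\{\boldsymbol{x}(t; \boldsymbol{x}_0, A) : t \geqslant 0\}$, the trajectory is confined to a proper subspace precisely when $V \neq \mathbb{R}^d$, i.e. $\dim V \leqslant d-1$; hence ``not confined to a proper subspace'' is equivalent to $V = \mathbb{R}^d$. With this reformulation the lemma reduces to the elementary linear-algebra fact that a subset of $\mathbb{R}^d$ spans $\mathbb{R}^d$ if and only if it contains $d$ linearly independent vectors, applied to the point set of the trajectory. I would treat the two implications separately.

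The easy direction is ($\Leftarrow$): if there exist $t_1, \ldots, t_d$ with $\boldsymbol{x}_1, \ldots, \boldsymbol{x}_d$ linearly independent, these $d$ vectors already form a basis of $\mathbb{R}^d$, so their span, and a fortiori $V$, equals $\mathbb{R}^d$. Equivalently, any proper subspace has dimension at most $d-1$ and can therefore contain at most $d-1$ linearly independent vectors, so a trajectory containing $d$ independent points cannot lie inside one.

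For the forward direction ($\Rightarrow$) I would extract the desired time points greedily. Assume the trajectory is not confined to a proper subspace, so $V = \mathbb{R}^d$, and build an independent family inductively. The trajectory is not contained in $\{\boldsymbol{0}\}$, which is a proper subspace for $d \geqslant 1$, so I may choose $t_1$ with $\boldsymbol{x}(t_1) \neq \boldsymbol{0}$. Given $t_1, \ldots, t_k$ with $k < d$ and $\boldsymbol{x}(t_1), \ldots, \boldsymbol{x}(t_k)$ linearly independent, let $W_k$ be their span, a subspace of dimension $k < d$ and hence proper. Because the trajectory is not confined to $W_k$, there exists $t_{k+1}$ with $\boldsymbol{x}(t_{k+1}) \notin W_k$, and appending it preserves linear independence. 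Iterating until $k = d$ yields $t_1, \ldots, t_d$ with $\boldsymbol{x}_1, \ldots, \boldsymbol{x}_d$ linearly independent.

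I expect essentially no deep obstacle here; the content is the standard ``a spanning set contains a basis'' argument specialized to the trajectory. The only point requiring care is justifying each greedy step, namely that at stage $k < d$ the span $W_k$ is a \emph{proper} subspace, so that the hypothesis supplies a trajectory point outside it, together with checking that the base case is non-vacuous. I would also remark that the argument uses nothing about the dynamics beyond the trajectory being a subset of $\mathbb{R}^d$, so it holds verbatim for every $(\boldsymbol{x}_0, A)$.
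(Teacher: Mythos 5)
Your proof is correct. Note that the paper itself offers no proof of this statement: it is imported verbatim as Lemma 6.1 of Stanhope et al.\ \cite{stanhope2014identifiability}, so there is no in-paper argument to compare against. Your argument is the standard one and matches the content of the cited source: after observing that ``not confined to a proper subspace'' is equivalent to the trajectory's points spanning $\mathbb{R}^d$, the backward direction is immediate (a proper subspace contains at most $d-1$ independent vectors), and the forward direction is the greedy extraction of a basis, where the only point needing care---that at each stage $k<d$ the span $W_k$ is proper, so the hypothesis furnishes a trajectory point outside it---is exactly the one you justify. Your closing remark is also accurate and worth keeping: the lemma uses nothing about the ODE dynamics beyond the trajectory being a subset of $\mathbb{R}^d$ (no continuity or analyticity is needed), which is why it transfers without change to the augmented trajectories $\boldsymbol{y}(\cdot;\boldsymbol{y}_0,F)$ in $\mathbb{R}^{d+p}$ where the paper actually applies it in the proof of Theorem \ref{thm: identifiability ODE2 discrete}.
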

\begin{proof}
In the proof of Theorem \ref{thm: identifiability ODE2 continuous}, we demonstrated that the ODE system \eqref{eq:ODE2}, under latent DAG assumption, can be transformed into a fully observable homogeneous linear ODE system \eqref{eq:ODE y}. According to Lemma \ref{lemma:trajectory}, the ODE system \eqref{eq:ODE y} is $(\boldsymbol{y}_0, F)$-identifiable if and only if trajectory $\boldsymbol{y}(\cdot;\boldsymbol{y}_0, F)$ is not confined to a proper subspace of $\mathbb{R}^{d+p}$. Furthermore, based on Lemma \ref{lemma:discrete}, this condition holds if and only if there exists time points $t_1, t_2, \ldots, t_{d+p}$ such that the vectors $\boldsymbol{y}_1, \boldsymbol{y}_2, \ldots, \boldsymbol{y}_{d+p}$ are linearly independent (i.e., assumption \textbf{C1}). Therefore, if and only if assumption \textbf{C1} is satisfied, the trajectory $\boldsymbol{y}(\cdot;\boldsymbol{y}_0, F)$ is not confined to a proper subspace of $\mathbb{R}^{d+p}$, ensuring that the ODE system \eqref{eq:ODE y} is $(\boldsymbol{y}_0, F)$-identifiable. Consequently, the ODE system \eqref{eq:ODE2} is $\boldsymbol{\eta}$-identifiable.
\end{proof}

\subsection{Proof of Theorem \ref{thm: identifiability ODE2 more}}\label{proof: ODE2 more}
\begin{proof}

Under assumption \textbf{B2}, since each $\boldsymbol{z}_0^{*i}$ satisfies assumption \textbf{B1}, Theorem \ref{thm: identifiability ODE2 continuous} implies that the ODE system \eqref{eq:ODE2} is $\boldsymbol{\eta}_i$-identifiable for all $i=1, \ldots, p$. That is, one can identify 
\begin{equation*}
   (\boldsymbol{x}_0, A, B\boldsymbol{z}_0^{*i}, BG\boldsymbol{z}_0^{*i}, \ldots, BG^{p-1}\boldsymbol{z}_0^{*i} ) 
\end{equation*}
 for all $i=1, \ldots, p$.

Next, we will prove that matrix $B$ is identifiable under assumption \textbf{B3}. 

Define the matrix
\begin{equation*}
S :=\begin{bmatrix}
    B\boldsymbol{z}_0^{*1} & B\boldsymbol{z}_0^{*2} & \ldots & B\boldsymbol{z}_0^{*p}
\end{bmatrix}\,,   
\end{equation*}
we know that $S \in \mathbb{R}^{d\times p}$, and $S$ is identifiable. The matrix $S$ can also be expressed as:
\begin{equation*}
\begin{split}
S &= B \begin{bmatrix}
    \boldsymbol{z}_0^{*1} & \boldsymbol{z}_0^{*2} & \ldots & \boldsymbol{z}_0^{*p}
\end{bmatrix}\\
:&= B Z\,,
\end{split}
\end{equation*}
where under assumption \textbf{B3}, the matrix $Z$ is invertible. Therefore, 
\begin{equation*}
   B = SZ^{-1} \,.
\end{equation*}
Since $Z$ is a known matrix, $B$ is identifiable.

Similarly, we can prove that $BG^j$ for $j=1, \ldots, p-1$ is also identifiable.

We now show that, under assumption \textbf{B4}, the matrix $G$ is identifiable.

Define the matrix
\begin{equation*}
    W := \begin{bmatrix}
        B\\
        BG\\
        \vdots \\
        BG^{p-1}
    \end{bmatrix}\,,
\end{equation*}
we know that $W \in \mathbb{R}^{{dp}\times p}$, and $W$ is identifiable.

Since $G$ is a $p \times p $ nilpotent matrix, $G^p = \boldsymbol{0}$, thus $BG^p = \boldsymbol{0}$ . If we define the matrix
\begin{equation*}
    V := \begin{bmatrix}
        BG\\
        BG^2\\
        \vdots \\
        BG^p
    \end{bmatrix}\,,
\end{equation*}
then $V \in \mathbb{R}^{dp\times p}$, and $V$ is identifiable. The matrix $V$ can also be expressed as:
\begin{equation}\label{eq:system of linear equations 3}
    V = \begin{bmatrix}
        B\\
        BG\\
        \vdots \\
        BG^{p-1}
    \end{bmatrix}G = WG\,.
\end{equation}
Under assumption \textbf{B4}, one can find $p$ linearly independent rows in matrix $W$. Denote the matrix composed of these $p$ linearly independent rows as $W_p$, which is invertible. Denote the matrix composed of the corresponding $p$ rows of $V$ as $V_p$, we have
\begin{equation*}
    V_p = W_p G\,.
\end{equation*}
Since $W_p$ is invertible, then 
\begin{equation*}
    G =  W_p^{-1}V_p\,.
\end{equation*}
Because both $V_p$ and $W_p$ are identifiable, $G$ is also identifiable.
\end{proof}

\subsection{Proof of Theorem \ref{thm: identifiability ODE2 discrete more}}\label{proof: ODE2 discrete more}
\begin{proof}
Under assumption \textbf{C2}, for each $i \in \{1, \ldots, p\}$, the corresponding observations satisfy assumption \textbf{C1}. Based on Theorem \ref{thm: identifiability ODE2 discrete}, the ODE system \eqref{eq:ODE2} is $\boldsymbol{\eta}_i$-identifiable for all $i=1, \ldots, p$. This implies that one can identify 
\begin{equation*}
    (\boldsymbol{x}_0, A, B\boldsymbol{z}_0^{*i}, BG\boldsymbol{z}_0^{*i}, \ldots, BG^{p-1}\boldsymbol{z}_0^{*i} )
\end{equation*}
for all $i=1, \ldots, p$.

According to the proof of Theorem \ref{thm: identifiability ODE2 more}, under assumptions \textbf{B3} and \textbf{B4}, matrices $B$ and $G$ are also identifiable.
\end{proof}
\newpage
\section{Identifiability conditions of the linear ODE system (\ref{eq:ODE1}) with other \textit{f(t)}}\label{app:more cases of ODE1}

In this section, we provide identifiability conditions for the linear ODE system \eqref{eq:ODE1} with $\boldsymbol{f}(t) = \boldsymbol{v}e^t$ and $\boldsymbol{f}(t) = \boldsymbol{v}_1sin(t) + \boldsymbol{v}_2cos(t)$. For notational simplicity, we slightly abuse notation by using the same symbols as in Section \ref{sec:ODE1}.

\subsection{When \textit{f(t)} follows an exponenial function of time \textit{t} }
We define $\boldsymbol{f}(t)$ in the ODE system \eqref{eq:ODE1} as:
\begin{equation*}
    \boldsymbol{f}(t) = \boldsymbol{v}e^t\,, \ \ \ \boldsymbol{v}\in \mathbb{R}^p\,.
\end{equation*}
Simple calculations show that
\begin{equation*}
    \boldsymbol{z}(t) = \boldsymbol{v}e^t + \boldsymbol{z}_0 - \boldsymbol{v}\,.
\end{equation*}
Thus,
\begin{equation}\label{eq:dxt et}
\begin{split}
    \dot{\boldsymbol{x}}(t) &= A\boldsymbol{x}(t) + B \boldsymbol{z}(t)  \\
    &= A \boldsymbol{x}(t) + B\boldsymbol{v}e^t + B\boldsymbol{z}_0 - B\boldsymbol{v}\,.
\end{split}
\end{equation}

We denote the unknown parameters of the ODE system \eqref{eq:ODE1} with this $\boldsymbol{f}(t)$ as $\boldsymbol{\theta}$, specifically, $\boldsymbol{\theta}:=(\boldsymbol{x}_0,\boldsymbol{z}_0, A, B, \boldsymbol{v})$. Let $[\boldsymbol{x}^T(t; \boldsymbol{\theta}), \boldsymbol{z}^T(t; \boldsymbol{\theta})]^{T}$ denote the solution of the ODE system \eqref{eq:ODE1}. It is important to note that under our hidden variables setting, only $\boldsymbol{x}(t; \boldsymbol{\theta})$ is observable. Based on Equation \eqref{eq:dxt et}, we present the following identifiability definition.

\begin{definition}\label{def:identifiability ODE1 et}
For $\boldsymbol{x}_0 \in \mathbb{R}^d, \boldsymbol{z}_0 \in \mathbb{R}^p, A\in \mathbb{R}^{d\times d}, B\in \mathbb{R}^{d\times p}$ and $\boldsymbol{v} \in \mathbb{R}^{p}$, for all $\boldsymbol{x}'_0 \in \mathbb{R}^d$, all $\boldsymbol{z}'_0 \in \mathbb{R}^p$, all $A'\in \mathbb{R}^{d\times d}$, all $B' \in \mathbb{R}^{d\times p}$, and all $\boldsymbol{v}' \in \mathbb{R}^{p}$, we denote $\boldsymbol{\theta}':=(\boldsymbol{x}'_0,\boldsymbol{z}'_0, A', B', \boldsymbol{v}')$, we say the ODE system \eqref{eq:ODE1} is
 $\boldsymbol{\theta}$-identifiable:
if $(\boldsymbol{x}_0, A, B\boldsymbol{z}_0, B\boldsymbol{v}) \neq (\boldsymbol{x}'_0, A', B'\boldsymbol{z}'_0, B'\boldsymbol{v}')$, it holds that $\boldsymbol{x}(\cdot; \boldsymbol{\theta})\ {\neq} \ \boldsymbol{x}(\cdot; \boldsymbol{\theta}') $.
\end{definition}

According to Definition \ref{def:identifiability ODE1 et}, if the ODE system \eqref{eq:ODE1} with an exponential $\boldsymbol{f}(t)$ is $\boldsymbol{\theta}$-identifiable, then the trajectory of the system can uniquely determine the values of $(\boldsymbol{x}_0, A, B\boldsymbol{z}_0, B\boldsymbol{v})$. This determination is sufficient to identify the causal relationships between observable variables $\boldsymbol{x}$ as described by Equation \eqref{eq:dxt et}. Consequently, one can safely intervene in the observable variables of the ODE system and make reliable causal inferences, despite the fact that matrix $B$ cannot be identified under this definition.

\begin{theorem}\label{thm:identifiability ODE1 et}
    For $\boldsymbol{x}_0 \in \mathbb{R}^d, \boldsymbol{z}_0 \in \mathbb{R}^p, A\in \mathbb{R}^{d\times d}, B\in \mathbb{R}^{d\times p}$, and $\boldsymbol{v} \in \mathbb{R}^{p}$, the ODE system \eqref{eq:ODE1} is $\boldsymbol{\theta}$-identifiable if and only if assumption \textbf{D1} is satisfied.
    \begin{enumerate}
        \item [\textbf{D1}] the set of vectors $\{\boldsymbol{y}_0, F\boldsymbol{y}_0, \ldots, F^{d+1}\boldsymbol{y}_0 \}$ is linearly independent, where $\boldsymbol{y}_0 =[\boldsymbol{x}_0^T, 1, 1]^T$, and
        \begin{equation*}
          F=  \begin{bmatrix}
        A & B\boldsymbol{v} & B\boldsymbol{z}_0 - B\boldsymbol{v}\\
        \boldsymbol{0}_d & 1 & 0 \\
        \boldsymbol{0}_d & 0 & 0
    \end{bmatrix}\,,
        \end{equation*} $\boldsymbol{0}_d$ denotes a $d$ dimensional zero row vector.
    \end{enumerate}
\end{theorem}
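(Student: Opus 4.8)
The plan is to follow the template of the proofs of Theorems \ref{thm:identifiability ODE1} and \ref{thm: identifiability ODE2 continuous}: lift the confounded system to a fully observable homogeneous linear ODE by augmenting the state, then invoke Lemma \ref{lemma:identifiability ODE}. Concretely, I would set $\boldsymbol{y}(t) = [\boldsymbol{x}^T(t), e^t, 1]^T \in \mathbb{R}^{d+2}$. Differentiating and substituting Equation \eqref{eq:dxt et}, together with $\tfrac{d}{dt}e^t = e^t$ and $\tfrac{d}{dt}1 = 0$, shows that $\dot{\boldsymbol{y}}(t) = F\boldsymbol{y}(t)$ with exactly the matrix $F$ displayed in assumption \textbf{D1} and initial condition $\boldsymbol{y}(0) = [\boldsymbol{x}_0^T, 1, 1]^T = \boldsymbol{y}_0$. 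The crucial observation is that every coordinate of $\boldsymbol{y}(t)$ is observable: the first $d$ coordinates are $\boldsymbol{x}(t)$, while the last two, $e^t$ and $1$, are fixed known functions of $t$. Thus the system \eqref{eq:ODE1} with $\boldsymbol{f}(t) = \boldsymbol{v}e^t$ is recast as a $(d+2)$-dimensional instance of the fully observable system \eqref{eq:ODE}.

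Applying Lemma \ref{lemma:identifiability ODE} to the augmented system, it is $(\boldsymbol{y}_0, F)$-identifiable if and only if the analogue of condition \textbf{A0} in dimension $d+2$ holds, namely that $\{\boldsymbol{y}_0, F\boldsymbol{y}_0, \ldots, F^{d+1}\boldsymbol{y}_0\}$ is linearly independent, which is precisely assumption \textbf{D1}. Unlike the polynomial case, I would \emph{not} try to reduce this Krylov condition to a statement about a single vector lying outside an $A$-invariant proper subspace: the block governing $(e^t, 1)$ has eigenvalues $1$ and $0$, and the eigenvalue $1$ can resonate with the spectrum of $A$, so the clean block-triangular simplification that produced conditions \textbf{A1} and \textbf{B1} is unavailable here. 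Leaving the condition in the raw form of \textbf{D1} is the natural endpoint.

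It then remains to transfer $(\boldsymbol{y}_0, F)$-identifiability of the augmented system to $\boldsymbol{\theta}$-identifiability of the original one. The bridge is twofold: because the two appended coordinates are identical across all parameter choices, equality $\boldsymbol{x}(\cdot;\boldsymbol{\theta}) = \boldsymbol{x}(\cdot;\boldsymbol{\theta}')$ is equivalent to equality of the augmented trajectories; and $(\boldsymbol{y}_0, F)$ stands in bijection with the target tuple $(\boldsymbol{x}_0, A, B\boldsymbol{z}_0, B\boldsymbol{v})$, since the block entries of $F$ are $A$, $B\boldsymbol{v}$, and $B\boldsymbol{z}_0 - B\boldsymbol{v}$, and the pair $(B\boldsymbol{v}, B\boldsymbol{z}_0 - B\boldsymbol{v})$ is recovered from, and recovers, $(B\boldsymbol{z}_0, B\boldsymbol{v})$ by an invertible linear map. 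In the sufficiency direction this is immediate: if \textbf{D1} holds, any competitor with a different target tuple produces a different $(\boldsymbol{y}_0', F')$, hence a different augmented trajectory, hence a different $\boldsymbol{x}$-trajectory.

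The delicate direction, where I expect the main obstacle, is necessity. Lemma \ref{lemma:identifiability ODE} supplies a competing pair $(\boldsymbol{y}_0', F')$ when \textbf{D1} fails, but a priori $F'$ need not respect the structural template of $F$ (fixed last two rows), so it may not arise from any admissible $\boldsymbol{\theta}'$. To close this gap I would argue directly: when \textbf{D1} fails, the trajectory is confined to the proper $F$-invariant Krylov subspace $\mathcal{K} = \mathrm{span}\{\boldsymbol{y}_0, F\boldsymbol{y}_0, \ldots, F^{d+1}\boldsymbol{y}_0\} \subsetneq \mathbb{R}^{d+2}$. Since any nonzero covector annihilating $\mathcal{K}$ can be placed in one of the first $d$ rows, I obtain a nonzero perturbation $\Delta$ with $\Delta|_{\mathcal{K}} = 0$ and zero last two rows; then $F'' = F + \Delta$ is structure-preserving, satisfies $F''|_{\mathcal{K}} = F|_{\mathcal{K}}$ (so generates the identical trajectory), and differs from $F$. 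Reading the perturbed block off $F''$ and realizing the resulting values of $(A, B\boldsymbol{v}, B\boldsymbol{z}_0)$ by an admissible triple $(B'', \boldsymbol{v}'', \boldsymbol{z}_0'')$ exhibits a competitor with the same observable trajectory but a different target tuple, contradicting identifiability. Verifying that the perturbation can always be chosen both structure-preserving and parameter-realizable, especially in low latent dimension $p$, is the one step that genuinely needs care.
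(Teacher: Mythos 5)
Your proposal follows the same core route as the paper's proof: augment the state to $\boldsymbol{y}(t)=[\boldsymbol{x}^T(t), e^t, 1]^T$, verify $\dot{\boldsymbol{y}}(t)=F\boldsymbol{y}(t)$ with $\boldsymbol{y}(0)=\boldsymbol{y}_0$, observe that the augmented system is a fully observable instance of \eqref{eq:ODE}, and invoke Lemma \ref{lemma:identifiability ODE} so that \textbf{D1} is exactly condition \textbf{A0} in dimension $d+2$; your decision to leave the condition in raw Krylov form rather than reducing it to an $A$-invariant-subspace statement also matches the paper, which does precisely that. Where you go beyond the paper is the necessity direction: the paper simply asserts that $(\boldsymbol{y}_0,F)$-unidentifiability of the augmented system transfers back to $\boldsymbol{\theta}$-unidentifiability, without checking that the competing pair supplied by Lemma \ref{lemma:identifiability ODE} respects the structural template of $F$ (fixed last two rows) and is realizable by an admissible $\boldsymbol{\theta}'$. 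Your rank-one patch closes this gap correctly, and the step you flag as delicate does go through: any nonzero covector $\boldsymbol{w}$ annihilating the Krylov subspace $\mathcal{K}$ must have a nonzero entry among its first $d$ coordinates, because the last two coordinates of $\boldsymbol{y}_0$ and $F\boldsymbol{y}_0$ are $(1,1)$ and $(1,0)$, which span $\mathbb{R}^2$; taking $\Delta=\boldsymbol{u}\boldsymbol{w}^T$ with $\boldsymbol{u}$ supported on the first $d$ coordinates then preserves the last two rows, agrees with $F$ on the $F$-invariant subspace $\mathcal{K}$ (hence yields the identical trajectory by uniqueness of solutions), and necessarily changes $A$. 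Realizability of the perturbed last two columns is automatic for $p\geqslant 2$, and even for $p=1$ it suffices to choose $\boldsymbol{u}$ along the column of $B$ (or arbitrarily if $B=0$), so that the perturbed vectors remain collinear and can be written as $B''\boldsymbol{v}''$ and $B''\boldsymbol{z}_0''-B''\boldsymbol{v}''$. In short: identical construction and sufficiency argument, plus a correct and genuinely needed completion of the necessity argument that the paper's proof elides.
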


The proof of Theorem \ref{thm:identifiability ODE1 et} is presented below. Condition \textbf{D1} is both sufficient and necessary, indicating, from a geometric perspective, that the vector $\boldsymbol{y}_0$ is not contained in an $F$-invariant proper subspace of $\mathbb{R}^{d+2}$.

\begin{proof}
Set
\begin{equation*}
    \boldsymbol{y}(t) = \begin{bmatrix}
        \boldsymbol{x}(t)\\
        e^t\\
        1
    \end{bmatrix}\,,
\end{equation*}
we see that $\boldsymbol{y}(t) \in \mathbb{R}^{d+2}$, and the first derivative of $\boldsymbol{y}(t)$ w.r.t. time $t$ can be expressed as
\begin{equation*}
    \dot{\boldsymbol{y}}(t) = \begin{bmatrix}
        \dot{\boldsymbol{x}}(t)\\
        e^t\\
        0
    \end{bmatrix} = \underbrace{\begin{bmatrix}
        A & B\boldsymbol{v} & B\boldsymbol{z}_0 - B\boldsymbol{v}\\
        \boldsymbol{0}_d & 1 & 0 \\
        \boldsymbol{0}_d & 0 & 0
    \end{bmatrix}}_\textrm{$F$} \underbrace{\begin{bmatrix}
        \boldsymbol{x}(t)\\
        e^t\\
        1
    \end{bmatrix}}_{\boldsymbol{y}(t)}\,,
\end{equation*}
where $\boldsymbol{0}_d$ denotes a $d$ dimensional zero row vector. Obviously, 
\begin{equation*}
    \boldsymbol{y}(0) = [\boldsymbol{x}_0^T, 1, 1]^T = \boldsymbol{y}_0\,.
\end{equation*}
Therefore, $\boldsymbol{y}(t)$ follows a homogeneous linear ODE system that can be expressed as:
\begin{equation}\label{eq:ODE y et}
\begin{split}
    \dot{\boldsymbol{y}}(t) &= F\boldsymbol{y}(t)\,,\\
    \boldsymbol{y}(0) &= \boldsymbol{y}_0\,,
\end{split}
\end{equation}
where $F \in \mathbb{R}^{(d+2)\times(d+2)}$. Worth noting that all state variables in the ODE system \eqref{eq:ODE y et} are observable.
Then according to Lemma \ref{lemma:identifiability ODE}, the system \eqref{eq:ODE y et} is $(\boldsymbol{y}_0, F)$-identifiable if and only if condition \textbf{D1} stated in Theorem \ref{thm:identifiability ODE1 et} is satisfied. That is, under assumption \textbf{D1}, the trajectory $\boldsymbol{y}(\cdot; \boldsymbol{y}_0, F)$ uniquely determines both $\boldsymbol{y}_0$ and matrix $F$. Consequently, it also uniquely determines $(\boldsymbol{x}_0, A, B\boldsymbol{z}_0, B\boldsymbol{v})$, thus establishing that the ODE system \eqref{eq:ODE1} is $\boldsymbol{\theta}$-identifiable if and only if assumption \textbf{D1} is satisfied. 
\end{proof}

\subsection{When \textit{f(t)} follows an trigonometric function of time \textit{t} }

We define $\boldsymbol{f}(t)$ in the ODE system \eqref{eq:ODE1} as:
\begin{equation*}
    \boldsymbol{f}(t) = \boldsymbol{v}_1 sin(t) + \boldsymbol{v}_2 cos(t)\,, \ \ \ \boldsymbol{v}_1, \boldsymbol{v}_2\in \mathbb{R}^p\,.
\end{equation*}
Simple calculations show that
\begin{equation*}
    \boldsymbol{z}(t) = \boldsymbol{v}_2 sin(t) -\boldsymbol{v}_1cos(t) + \boldsymbol{z}_0 + \boldsymbol{v}_1\,.
\end{equation*}
Thus,
\begin{equation}\label{eq:dxt sint}
\begin{split}
    \dot{\boldsymbol{x}}(t) &= A\boldsymbol{x}(t) + B \boldsymbol{z}(t)  \\
    &= A \boldsymbol{x}(t) +  B\boldsymbol{v}_2 sin(t) - B\boldsymbol{v}_1cos(t) + B\boldsymbol{z}_0 + B\boldsymbol{v}_1\,.
\end{split}
\end{equation}

We denote the unknown parameters of the ODE system \eqref{eq:ODE1} with this $\boldsymbol{f}(t)$ as $\boldsymbol{\theta}$, specifically, $\boldsymbol{\theta}:=(\boldsymbol{x}_0,\boldsymbol{z}_0, A, B, \boldsymbol{v}_1, \boldsymbol{v}_2)$. Let $[\boldsymbol{x}^T(t; \boldsymbol{\theta}), \boldsymbol{z}^T(t; \boldsymbol{\theta})]^{T}$ denote the solution of the ODE system \eqref{eq:ODE1}. It is important to note that under our hidden variables setting, only $\boldsymbol{x}(t; \boldsymbol{\theta})$ is observable. Based on Equation \eqref{eq:dxt sint}, we present the following identifiability definition.

\begin{definition}\label{def:identifiability ODE1 sint}
For $\boldsymbol{x}_0 \in \mathbb{R}^d, \boldsymbol{z}_0 \in \mathbb{R}^p, A\in \mathbb{R}^{d\times d}, B\in \mathbb{R}^{d\times p}$ and $\boldsymbol{v}_1, \boldsymbol{v}_2 \in \mathbb{R}^{p}$, for all $\boldsymbol{x}'_0 \in \mathbb{R}^d$, all $\boldsymbol{z}'_0 \in \mathbb{R}^p$, all $A'\in \mathbb{R}^{d\times d}$, all $B' \in \mathbb{R}^{d\times p}$, and all $\boldsymbol{v}'_1, \boldsymbol{v}'_2 \in \mathbb{R}^{p}$, we denote $\boldsymbol{\theta}':=(\boldsymbol{x}'_0,\boldsymbol{z}'_0, A', B', \boldsymbol{v}'_1, \boldsymbol{v}'_2)$, we say the ODE system \eqref{eq:ODE1} is
 $\boldsymbol{\theta}$-identifiable:
if $(\boldsymbol{x}_0, A, B\boldsymbol{z}_0, B\boldsymbol{v}_1, B\boldsymbol{v}_2) \neq (\boldsymbol{x}'_0, A', B'\boldsymbol{z}'_0, B'\boldsymbol{v}'_1, B'\boldsymbol{v}'_2)$, it holds that $\boldsymbol{x}(\cdot; \boldsymbol{\theta})\ {\neq} \ \boldsymbol{x}(\cdot; \boldsymbol{\theta}') $.
\end{definition}

According to Definition \ref{def:identifiability ODE1 sint}, if the ODE system \eqref{eq:ODE1} with a trigonometric $\boldsymbol{f}(t)$ is $\boldsymbol{\theta}$-identifiable, then the trajectory of the system can uniquely determine the values of $(\boldsymbol{x}_0, A, B\boldsymbol{z}_0, B\boldsymbol{v}_1, B\boldsymbol{v}_2)$. This determination is sufficient to identify the causal relationships between observable variables $\boldsymbol{x}$ as described by Equation \eqref{eq:dxt sint}. Consequently, one can safely intervene in the observable variables of the ODE system and make reliable causal inferences, despite the fact that matrix $B$ cannot be identified under this definition.

\begin{theorem}\label{thm:identifiability ODE1 sint}
    For $\boldsymbol{x}_0 \in \mathbb{R}^d, \boldsymbol{z}_0 \in \mathbb{R}^p, A\in \mathbb{R}^{d\times d}, B\in \mathbb{R}^{d\times p}$, and $\boldsymbol{v}_1, \boldsymbol{v}_2 \in \mathbb{R}^{p}$, the ODE system \eqref{eq:ODE1} is $\boldsymbol{\theta}$-identifiable if and only if assumption \textbf{E1} is satisfied.
    \begin{enumerate}
        \item [\textbf{E1}] the set of vectors $\{\boldsymbol{y}_0, F\boldsymbol{y}_0, \ldots, F^{d+2}\boldsymbol{y}_0 \}$ is linearly independent, where $\boldsymbol{y}_0 =[\boldsymbol{x}_0^T, 0, 1, 1]^T$, and
        \begin{equation*}
          F=  \begin{bmatrix}
        A & B\boldsymbol{v}_2 & - B\boldsymbol{v}_1 &  B\boldsymbol{z}_0 +  B\boldsymbol{v}_1 \\
        \boldsymbol{0}_d & 0  & 1 & 0 \\
        \boldsymbol{0}_d & -1 & 0 & 0\\
        \boldsymbol{0}_d & 0  & 0 & 0
    \end{bmatrix}\,,
        \end{equation*} $\boldsymbol{0}_d$ denotes a $d$ dimensional zero row vector.
    \end{enumerate}
\end{theorem}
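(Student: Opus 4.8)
The plan is to reproduce the augmentation strategy used in the proofs of Theorems~\ref{thm:identifiability ODE1} and~\ref{thm: identifiability ODE2 continuous}: convert the confounded system into a fully observable homogeneous linear ODE and invoke Lemma~\ref{lemma:identifiability ODE}. First I would introduce the augmented state $\boldsymbol{y}(t) = [\boldsymbol{x}^T(t), \sin(t), \cos(t), 1]^T \in \mathbb{R}^{d+3}$. Combining Equation~\eqref{eq:dxt sint}, namely $\dot{\boldsymbol{x}}(t) = A\boldsymbol{x}(t) + B\boldsymbol{v}_2\sin(t) - B\boldsymbol{v}_1\cos(t) + B\boldsymbol{z}_0 + B\boldsymbol{v}_1$, with the elementary identities $\tfrac{d}{dt}\sin(t)=\cos(t)$, $\tfrac{d}{dt}\cos(t)=-\sin(t)$, and $\tfrac{d}{dt}1=0$, I would verify directly that $\dot{\boldsymbol{y}}(t) = F\boldsymbol{y}(t)$ with the matrix $F$ displayed in condition \textbf{E1} and $\boldsymbol{y}(0)=[\boldsymbol{x}_0^T,0,1,1]^T=\boldsymbol{y}_0$. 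Since this is a system of the form~\eqref{eq:ODE} with every coordinate observable and of dimension $d+3$, Lemma~\ref{lemma:identifiability ODE} yields that it is $(\boldsymbol{y}_0,F)$-identifiable if and only if $\{\boldsymbol{y}_0, F\boldsymbol{y}_0, \ldots, F^{d+2}\boldsymbol{y}_0\}$ is linearly independent, which is precisely \textbf{E1}.

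Next I would transfer $(\boldsymbol{y}_0,F)$-identifiability to $\boldsymbol{\theta}$-identifiability in the sense of Definition~\ref{def:identifiability ODE1 sint}. Two facts furnish the bridge. First, the final three coordinates of $\boldsymbol{y}(t)$ equal the fixed functions $\sin t, \cos t, 1$ irrespective of the parameters, so for any two parameter choices the augmented trajectories coincide exactly when their observable parts $\boldsymbol{x}(\cdot)$ coincide. Second, the map sending $(\boldsymbol{x}_0, A, B\boldsymbol{z}_0, B\boldsymbol{v}_1, B\boldsymbol{v}_2)$ to $(\boldsymbol{y}_0,F)$ is a bijection onto the structured family: one reads off $\boldsymbol{x}_0$ and $A$ directly, recovers $B\boldsymbol{v}_2$ and $B\boldsymbol{v}_1$ from the two middle tail columns of $F$, and recovers $B\boldsymbol{z}_0$ as the last tail column minus $B\boldsymbol{v}_1$. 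Consequently the $\boldsymbol{x}$-trajectory pins down the tuple exactly when the $\boldsymbol{y}$-trajectory pins down $(\boldsymbol{y}_0,F)$. The sufficiency direction ($\textbf{E1}\Rightarrow\boldsymbol{\theta}$-identifiable) is then immediate, because Lemma~\ref{lemma:identifiability ODE} gives uniqueness of $(\boldsymbol{y}_0,F)$ against \emph{all} alternatives, in particular against the structured ones.

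The hard part will be the necessity direction: from the failure of \textbf{E1} I must exhibit a \emph{genuinely structured} and \emph{achievable} competitor, rather than the arbitrary alternative matrix that Lemma~\ref{lemma:identifiability ODE} produces for an unconstrained system. I would handle this by an invariant-subspace construction. When \textbf{E1} fails, the Krylov space $\mathcal{K}=\mathrm{span}\{F^k\boldsymbol{y}_0:k\geq 0\}$ is a proper $F$-invariant subspace containing $\boldsymbol{y}_0$, so there is a nonzero functional $\phi\in\mathcal{K}^{\perp}$. Perturbing only the top $d$ rows of $F$ by a rank-one block $\boldsymbol{e}\phi^T$ with $\boldsymbol{e}\in\mathbb{R}^d$ nonzero produces a matrix $F'$ that agrees with $F$ on $\mathcal{K}$ and keeps the fixed bottom block intact; hence $\mathcal{K}$ is also $F'$-invariant and $F'^k\boldsymbol{y}_0=F^k\boldsymbol{y}_0$ for every $k$, so the observable trajectory is unchanged while the tuple is altered. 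The delicate point is to ensure the perturbed top block still corresponds to an admissible tuple $(A',B'\boldsymbol{z}_0',B'\boldsymbol{v}_1',B'\boldsymbol{v}_2')$: since the three perturbed tail columns all shift by multiples of the single vector $\boldsymbol{e}$, choosing $\boldsymbol{e}\in\mathrm{col}(B)$ lets one keep $B'=B$ and solve for $\boldsymbol{z}_0',\boldsymbol{v}_1',\boldsymbol{v}_2'$, while the cleanest sub-case takes $\phi$ supported on the first $d$ coordinates so that only $A$ changes. Verifying that such a choice is always available (the mild edge cases being $B=0$ or small $p$) is where the real care is needed, and it is what upgrades the mere non-identifiability of the augmented system into non-identifiability within the parametric family, completing the equivalence.
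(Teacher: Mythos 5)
Your first two paragraphs reproduce the paper's own proof essentially verbatim in structure: the paper introduces exactly the augmented state $\boldsymbol{y}(t)=[\boldsymbol{x}^T(t),\sin(t),\cos(t),1]^T\in\mathbb{R}^{d+3}$, verifies $\dot{\boldsymbol{y}}(t)=F\boldsymbol{y}(t)$ with the $F$ of condition \textbf{E1} and $\boldsymbol{y}(0)=\boldsymbol{y}_0$, invokes Lemma \ref{lemma:identifiability ODE} for the fully observable system \eqref{eq:ODE y sint}, and then reads the tuple $(\boldsymbol{x}_0,A,B\boldsymbol{z}_0,B\boldsymbol{v}_1,B\boldsymbol{v}_2)$ off $(\boldsymbol{y}_0,F)$. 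Where you genuinely go beyond the paper is the necessity direction: the paper's proof simply asserts the ``if and only if'' after applying Lemma \ref{lemma:identifiability ODE}, never confronting the point you isolate, namely that when \textbf{E1} fails the lemma only produces an \emph{unstructured} alternative $(\boldsymbol{y}_0',F')$, whereas Definition \ref{def:identifiability ODE1 sint} requires a competitor with the fixed bottom rows, initial tail $(0,1,1)$, and tail columns of the constrained form $B'\boldsymbol{v}_2'$, $-B'\boldsymbol{v}_1'$, $B'\boldsymbol{z}_0'+B'\boldsymbol{v}_1'$. Your invariant-subspace repair is sound and does close this gap: if \textbf{E1} fails, $\mathcal{K}=\mathrm{span}\{F^k\boldsymbol{y}_0:k\geq 0\}$ is proper (Cayley--Hamilton) and $F$-invariant, and with $\phi\in\mathcal{K}^{\perp}\setminus\{0\}$ and $\tilde{\boldsymbol{e}}=[\boldsymbol{e}^T,0,0,0]^T$ the matrix $F'=F+\tilde{\boldsymbol{e}}\phi^T$ satisfies $e^{F't}\boldsymbol{y}_0=e^{Ft}\boldsymbol{y}_0$; choosing $\boldsymbol{e}=B\boldsymbol{u}\in\mathrm{col}(B)\setminus\{0\}$ keeps $B'=B$ and gives the explicit solutions $\boldsymbol{v}_2'=\boldsymbol{v}_2+\phi_{d+1}\boldsymbol{u}$, $\boldsymbol{v}_1'=\boldsymbol{v}_1-\phi_{d+2}\boldsymbol{u}$, $\boldsymbol{z}_0'=\boldsymbol{z}_0+(\phi_{d+2}+\phi_{d+3})\boldsymbol{u}$, and since the map from $(B\boldsymbol{z}_0,B\boldsymbol{v}_1,B\boldsymbol{v}_2)$ to the three tail columns is a linear bijection, $\phi\neq 0$ together with $\boldsymbol{e}\neq 0$ forces the identified tuple to change, either through $A'=A+\boldsymbol{e}\phi_{1:d}^T$ or through the tails. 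The edge case you flag also resolves cleanly: if $B=\boldsymbol{0}$ the original tails vanish and the perturbed tails are all multiples of a single nonzero $\boldsymbol{e}$, so a $B'$ whose column space contains $\boldsymbol{e}$ realizes them for every $p\geq 1$ --- the small-$p$ worry disappears precisely because your construction confines all three new tails to a subspace of dimension at most $p$. In short: same augmentation and same key lemma as the paper, but your proposal is strictly more complete, supplying the structured-counterexample argument that the paper (and its analogous proofs of Theorems \ref{thm:identifiability ODE1} and \ref{thm: identifiability ODE2 continuous}) leaves implicit; the paper's version buys brevity at the cost of exactly the subtlety you spotted.
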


The proof of Theorem \ref{thm:identifiability ODE1 sint} is presented below. Condition \textbf{E1} is both sufficient and necessary, indicating, from a geometric perspective, that the vector $\boldsymbol{y}_0$ is not contained in an $F$-invariant proper subspace of $\mathbb{R}^{d+3}$.

\begin{proof}
Set
\begin{equation*}
    \boldsymbol{y}(t) = \begin{bmatrix}
        \boldsymbol{x}(t)\\
        sin(t)\\
        cos(t)\\
        1
    \end{bmatrix}\,,
\end{equation*}
we see that $\boldsymbol{y}(t) \in \mathbb{R}^{d+3}$, and the first derivative of $\boldsymbol{y}(t)$ w.r.t. time $t$ can be expressed as
\begin{equation*}
    \dot{\boldsymbol{y}}(t) = \begin{bmatrix}
        \dot{\boldsymbol{x}}(t)\\
        cos(t)\\
        -sin(t)\\
        0
    \end{bmatrix} = \underbrace{\begin{bmatrix}
        A & B\boldsymbol{v}_2 & - B\boldsymbol{v}_1 &  B\boldsymbol{z}_0 +  B\boldsymbol{v}_1 \\
        \boldsymbol{0}_d & 0  & 1 & 0 \\
        \boldsymbol{0}_d & -1 & 0 & 0\\
        \boldsymbol{0}_d & 0  & 0 & 0
    \end{bmatrix}}_\textrm{$F$} \underbrace{\begin{bmatrix}
        \boldsymbol{x}(t)\\
        sin(t)\\
        cos(t)\\
        1
    \end{bmatrix}}_{\boldsymbol{y}(t)}\,,
\end{equation*}
where $\boldsymbol{0}_d$ denotes a $d$ dimensional zero row vector. Obviously, 
\begin{equation*}
    \boldsymbol{y}(0) = [\boldsymbol{x}_0^T, 0, 1, 1]^T = \boldsymbol{y}_0\,.
\end{equation*}
Therefore, $\boldsymbol{y}(t)$ follows a homogeneous linear ODE system that can be expressed as:
\begin{equation}\label{eq:ODE y sint}
\begin{split}
    \dot{\boldsymbol{y}}(t) &= F\boldsymbol{y}(t)\,,\\
    \boldsymbol{y}(0) &= \boldsymbol{y}_0\,,
\end{split}
\end{equation}
where $F \in \mathbb{R}^{(d+3)\times(d+3)}$. Worth noting that all state variables in the ODE system \eqref{eq:ODE y sint} are observable.
Then according to Lemma \ref{lemma:identifiability ODE}, the system \eqref{eq:ODE y sint} is $(\boldsymbol{y}_0, F)$-identifiable if and only if condition \textbf{E1} stated in Theorem \ref{thm:identifiability ODE1 sint} is satisfied. That is, under assumption \textbf{E1}, the trajectory $\boldsymbol{y}(\cdot; \boldsymbol{y}_0, F)$ uniquely determines both $\boldsymbol{y}_0$ and matrix $F$. Consequently, it also uniquely determines $(\boldsymbol{x}_0, A, B\boldsymbol{z}_0, B\boldsymbol{v}_1, B\boldsymbol{v}_2)$, thus establishing that the ODE system \eqref{eq:ODE1} is $\boldsymbol{\theta}$-identifiable if and only if assumption \textbf{E1} is satisfied. 
\end{proof}
\newpage
\section{An alternative approach to identifying matrices \textit{B} and \textit{G} in the ODE system (\ref{eq:ODE2})}\label{app:alternative approach}

\subsection{Identifiability condition from \textit{2p} controllable whole trajectories}\label{subsec:iden 2p continous}
Recall that $\boldsymbol{z}_0$ denotes the initial condition of the latent variables in the ODE system \eqref{eq:ODE2}. We further specify the initial condition of the latent variable $z_j$ as $z_{0j}$ for $j=1, \ldots, p$. Assume that it is possible to control the initial condition of each latent variable, $z_{0j}$, independently. Specifically, for each experiment, researchers can intervene in the initial condition of a latent variable, denoted as $z_{0j}^*$. The value of $z_{0j}^*$ is treated as a given value. Under this intervention, the initial conditions of the latent variables are adjusted to $[z_{01}, \ldots, z_{0j}^*, \ldots, z_{0p}]^T$, which we denote as $\tilde{\boldsymbol{z}}_{0j}$. 

To identify matrices $B$ and $G$, it is necessary to have at least two intervened initial conditions for each latent variable, denoted as $z_{0j}^{*1}$ and $z_{0j}^{*2}$ for the latent variable $z_j$. Consequently, the corresponding intervened initial conditions for all latent variables can be represented as $\tilde{\boldsymbol{z}}_{0j}^1$ and $\tilde{\boldsymbol{z}}_{0j}^2$. Under these conditions, we present the definition of the identifiability of the ODE system \eqref{eq:ODE2}.

\begin{definition}\label{def: identifiability ODE2 single intervene}
Given $z_{0j}^{*1}, z_{0j}^{*2} \in \mathbb{R}$ for $j=1, \ldots, p$, for $\boldsymbol{x}_0 \in \mathbb{R}^d, \boldsymbol{z}_0 \in \mathbb{R}^p,  A\in \mathbb{R}^{d\times d}, B\in \mathbb{R}^{d\times p}$ and $G \in \mathbb{R}^{p\times p}$, under the latent DAG assumption, for all $\boldsymbol{x}'_0 \in \mathbb{R}^d$, all $\boldsymbol{z}'_0 \in \mathbb{R}^p$, all $A'\in \mathbb{R}^{d\times d}$, all $B' \in \mathbb{R}^{d\times p}$, and all $G' \in \mathbb{R}^{p\times p}$, we denote $\tilde{\boldsymbol{z}}_{0j}^{i} = [z_{01}, \ldots, z_{0j}^{*i}, \ldots, z_{0p}]^T$ and $(\tilde{\boldsymbol{z}}_{0j}')^{i} = [z_{01}', \ldots, z_{0j}^{*i}, \ldots, z_{0p}']^T$, we further denote $\boldsymbol{\eta}_j^i := (\boldsymbol{x}_0,\tilde{\boldsymbol{z}}_{0j}^{i}, A, B, G) $ and $(\boldsymbol{\eta}'_j)^i := (\boldsymbol{x}'_0,(\tilde{\boldsymbol{z}}_{0j}')^{i}, A', B', G')$ for $i = 1, 2$, we say the ODE system \eqref{eq:ODE2} is $\{\boldsymbol{\eta}_j^{1,2}\}_1^p$-identifiable:
if $(\boldsymbol{x}_0, A, B, G) \neq (\boldsymbol{x}', A', B', G')$, it holds that $\exists i \in \{1,2\}$ and $j \in \{1, \ldots, p\}$ such that $\boldsymbol{x}(\cdot; \boldsymbol{\eta}_j^i)\ $${\neq}$ $\  \boldsymbol{x}(\cdot; (\boldsymbol{\eta}'_j)^i) $.
\end{definition}

Definition \ref{def: identifiability ODE2 single intervene} establishes the identifiability of the ODE system \eqref{eq:ODE2} from $2p$ whole trajectories $\boldsymbol{x}(\cdot;\boldsymbol{\eta}_j^i)$ with $i=1,2$ and $j=1, \ldots, p$. According to this definition, both matrices $B$ and $G$ are identifiable. Based on this definition, we present the identifiability condition.

\begin{theorem}\label{thm: identifiability ODE2 single intervene}
Given $z_{0j}^{*1}, z_{0j}^{*2} \in \mathbb{R}$ with $z_{0j}^{*1}\neq z_{0j}^{*2}$ for $j=1, \ldots, p$, for $\boldsymbol{x}_0 \in \mathbb{R}^d, \boldsymbol{z}_0 \in \mathbb{R}^p, A\in \mathbb{R}^{d\times d}, B\in \mathbb{R}^{d\times p}$ and $G \in \mathbb{R}^{p\times p}$, under the latent DAG assumption, the ODE system \eqref{eq:ODE2} is $\{\boldsymbol{\eta}_j^{1,2}\}_1^p$-identifiable if assumptions $\textbf{B}_5$ and $\textbf{B}_4$ are both satisfied.
\begin{itemize}
    \item [\textbf{B5}:] each $\tilde{\boldsymbol{z}}_{0j}^{i}$ for $i=1, 2$ and $j= 1, \ldots, p$, satisfies assumption \textbf{B1}. That is, if we set $ \boldsymbol{\gamma}_j^i = A^{p}\boldsymbol{x}_0 + \sum_{k=0}^{p-1}A^{p-1-k} B G^k \tilde{\boldsymbol{z}}_{0j}^{i}$, then the set of vectors $\{\boldsymbol{\gamma}_j^i, A\boldsymbol{\gamma}_j^i, \ldots, A^{d-1}\boldsymbol{\gamma}_j^i \}$ is linearly independent for all $i=1, 2$ and $j= 1, \ldots, p$.
\end{itemize}
\end{theorem}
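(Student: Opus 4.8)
The plan is to follow the blueprint of the proof of Theorem \ref{thm: identifiability ODE2 more}, with the global linear-independence hypothesis \textbf{B3} replaced by the single-node structure of the interventions. First I would invoke assumption \textbf{B5}: since each $\tilde{\boldsymbol{z}}_{0j}^{i}$ satisfies \textbf{B1}, Theorem \ref{thm: identifiability ODE2 continuous} guarantees that the system is $\boldsymbol{\eta}_j^i$-identifiable for every $i \in \{1,2\}$ and $j \in \{1,\ldots,p\}$. Arguing by the contrapositive of Definition \ref{def: identifiability ODE2 single intervene}, I assume all trajectories match, $\boldsymbol{x}(\cdot;\boldsymbol{\eta}_j^i) = \boldsymbol{x}(\cdot;(\boldsymbol{\eta}'_j)^i)$, and conclude from $\boldsymbol{\eta}_j^i$-identifiability that $\boldsymbol{x}_0 = \boldsymbol{x}'_0$, $A = A'$, and $BG^k\tilde{\boldsymbol{z}}_{0j}^{i} = B'G'^k(\tilde{\boldsymbol{z}}_{0j}')^{i}$ for all $k = 0,\ldots,p-1$.

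The crux of the argument is a differencing step that isolates individual columns. For a fixed $j$, the two intervened initial conditions $\tilde{\boldsymbol{z}}_{0j}^{1}$ and $\tilde{\boldsymbol{z}}_{0j}^{2}$ agree in every coordinate except the $j$-th, so their difference is $(z_{0j}^{*1}-z_{0j}^{*2})\boldsymbol{e}_j$, where $\boldsymbol{e}_j$ is the $j$-th standard basis vector; the identical cancellation applies to the primed pair $(\tilde{\boldsymbol{z}}_{0j}')^{1},(\tilde{\boldsymbol{z}}_{0j}')^{2}$ because the intervened $j$-th entry is shared between them. Subtracting the two identities $BG^k\tilde{\boldsymbol{z}}_{0j}^{i} = B'G'^k(\tilde{\boldsymbol{z}}_{0j}')^{i}$ over $i=1,2$ and dividing by the nonzero scalar $z_{0j}^{*1}-z_{0j}^{*2}$ then yields $BG^k\boldsymbol{e}_j = B'G'^k\boldsymbol{e}_j$, i.e. the $j$-th columns of $BG^k$ and $B'G'^k$ coincide. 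Ranging over $j = 1,\ldots,p$ gives $BG^k = B'G'^k$ for every $k = 0,\ldots,p-1$; the case $k=0$ in particular yields $B = B'$, so $B$ is identified.

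It remains to recover $G$, and here I would reuse the linear-algebra step from the proof of Theorem \ref{thm: identifiability ODE2 more} verbatim. Stacking the identified blocks gives $W := [B^T, (BG)^T, \ldots, (BG^{p-1})^T]^T$, and since the true $G$ is strictly upper triangular, hence nilpotent with $G^p = \boldsymbol{0}$, one has $BG^p = \boldsymbol{0}$; writing $V := [(BG)^T, \ldots, (BG^p)^T]^T$ we obtain $V = WG$. Assumption \textbf{B4} supplies $p$ linearly independent rows of $W$ forming an invertible submatrix $W_p$, so with the corresponding rows $V_p$ of $V$ we conclude $G = W_p^{-1}V_p$, and $G$ is identified. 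Together these steps show that $(\boldsymbol{x}_0, A, B, G)$ is uniquely determined, which is precisely $\{\boldsymbol{\eta}_j^{1,2}\}_1^p$-identifiability.

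I expect the main obstacle to be the bookkeeping in the differencing step — specifically, verifying that the nuisance coordinates $\{z_{0k}\}_{k\neq j}$ (respectively $\{z'_{0k}\}_{k\neq j}$) cancel cleanly despite being unknown and possibly different between the true and alternative systems. The point is that the subtraction is performed \emph{within} each system (across $i=1,2$), not across the two systems, so in each case the fixed nuisance coordinates drop out and the difference collapses onto the single direction $\boldsymbol{e}_j$; this is exactly what lets the single-node design substitute for \textbf{B3}. Once this isolation is secured, the identification of $B$ and $G$ is a direct transcription of Theorem \ref{thm: identifiability ODE2 more} and introduces no new ideas.
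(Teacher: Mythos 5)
Your proposal is correct and follows essentially the same route as the paper's own proof: invoke \textbf{B5} with Theorem \ref{thm: identifiability ODE2 continuous} to identify $(\boldsymbol{x}_0, A, BG^k\tilde{\boldsymbol{z}}_{0j}^{i})$, difference across $i=1,2$ within each system so the unknown nuisance coordinates cancel and the $j$-th columns of $BG^k$ are isolated, then recover $G$ via $V = WG$ and the invertible submatrix $W_p$ supplied by \textbf{B4}. The only cosmetic difference is that the paper performs the differencing entrywise on $B_{mj}$ (and says ``similarly'' for $BG,\ldots,BG^{p-1}$), whereas you phrase it vectorially through $\boldsymbol{e}_j$ and are slightly more explicit that the primed system's nuisance entries $z'_{0k}$ also cancel because the intervened value $z_{0j}^{*i}$ is shared.
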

The proof of Theorem \ref{thm: identifiability ODE2 single intervene} is presented below. Assumption $\textbf{B5}$ ensures that the ODE system \eqref{eq:ODE2} is $\boldsymbol{\eta}_j^i$-identifiable for all $i=1, 2$ and $j= 1, \ldots, p$. Consequently, $(\boldsymbol{x}_0, A, B\tilde{\boldsymbol{z}}_{0j}^{i}, BG\tilde{\boldsymbol{z}}_{0j}^{i}, \ldots, BG^{p-1}\tilde{\boldsymbol{z}}_{0j}^{i})$ for all $i=1, 2$ and $j= 1, \ldots, p$ is identifiable. Through straightforward calculations, the identifiability of matrix $B$ is established. To identify matrix $G$, assumption \textbf{B4} is required. 

The assumption that the initial condition of each latent variable $z_i$ can be controlled independently is inspired by the "genetic single-node intervention" proposed in \cite{squires2023linear}, where interventions can be made at each latent node individually. This assumption is relatively more relaxed compared to controlling the initial condition of all latent variables $\boldsymbol{z}$ simultaneously, as discussed in Subsection \ref{subsec:iden p continous}. However, this method requires $p$ more trajectories, totalling $2p$ trajectories, to identify matrices $B$ and $G$.

\begin{proof}
Under assumption \textbf{B5}, since each $\tilde{\boldsymbol{z}}_{0j}^{i}$ satisfies assumption \textbf{B1}. By Theorem \ref{thm: identifiability ODE2 continuous}, the ODE system \eqref{eq:ODE2} is $\boldsymbol{\eta}_j^i$-identifiable for all $i=1, 2$ and $j= 1, \ldots, p$. Consequently, 
\begin{equation*}
    (\boldsymbol{x}_0, A, B\tilde{\boldsymbol{z}}_{0j}^{i}, BG\tilde{\boldsymbol{z}}_{0j}^{i}, \ldots, BG^{p-1}\tilde{\boldsymbol{z}}_{0j}^{i})
\end{equation*}
for all $i=1, 2$ and $j= 1, \ldots, p$ is identifiable.

We express $B\tilde{\boldsymbol{z}}_{0j}^{i}$ as 
\begin{equation*}
    B\tilde{\boldsymbol{z}}_{0j}^{i} =  \begin{bmatrix}
        B_{11} & \ldots & B_{1j} & \ldots & B_{1p}\\
        \vdots & \ddots & \vdots & \ddots & \vdots \\
        B_{d1} & \ldots & B_{dj} & \ldots & B_{dp}
    \end{bmatrix} \begin{bmatrix}
        z_{01}\\
        \vdots\\
        z_{0j}^{*i}\\
        \vdots\\
        z_{0p}
    \end{bmatrix}\,.
\end{equation*}
We know that $B\tilde{\boldsymbol{z}}_{0j}^{i} \in \mathbb{R}^{d}$ is identifiable for $i = 1, 2$. Thus, the first entry of $B\tilde{\boldsymbol{z}}_{0j}^{i}$, denoted as $(B\tilde{\boldsymbol{z}}_{0j}^{i})_1$, is identifiable and can be expressed as
\begin{equation*}
\begin{split}
    (B\tilde{\boldsymbol{z}}_{0j}^{1})_1 &= B_{11}z_{01} + \ldots + B_{1j}z_{0j}^{*1} +\ldots + B_{1p}z_{0p}\\
     (B\tilde{\boldsymbol{z}}_{0j}^{2})_1 &= B_{11}z_{01} + \ldots + B_{1j}z_{0j}^{*2} +\ldots + B_{1p}z_{0p}
\end{split}\,.
\end{equation*}
Since $z_{0j}^{*1}$ and $ z_{0j}^{*2}$ are given values, we can easily calculate the value of $B_{1j}$. Similarly, one can calculate the values of $B_{mj}$ for all $m=1, \ldots, d$ and $j=1, \ldots, p$, thereby establishing the identifiability of matrix $B$.

In a similar manner, matrices $BG, BG^2, \ldots, BG^{p-1}$ are also identifiable. Then, according to the proof \ref{proof: ODE2 more} of Theorem \ref{thm: identifiability ODE2 more}, the matrix $G$ is identifiable under assumption \textbf{B4}.
\end{proof}

\subsection{Identifiability condition from discrete observations sampled from \textit{2p} controllable trajectories}
We further extend the identifiability analysis of the ODE system \eqref{eq:ODE2} to cases where only discrete observations from $2p$ controllable trajectories are available.

\begin{definition}\label{def: identifiability ODE2 discrete single intervene}
Given $z_{0j}^{*1}, z_{0j}^{*2} \in \mathbb{R}$ for $j=1, \ldots, p$, for $\boldsymbol{x}_0 \in \mathbb{R}^d, \boldsymbol{z}_0 \in \mathbb{R}^p, A\in \mathbb{R}^{d\times d}, B\in \mathbb{R}^{d\times p}$ and $G \in \mathbb{R}^{p\times p}$. For any $n \geqslant 1$, let $t_k, k = 1, \ldots, n$ be any $n$ time points and $\boldsymbol{x}_{jk}^i : = \boldsymbol{x}(t_k; \boldsymbol{\eta}_j^i)$ be the error-free observation of the trajectory $\boldsymbol{x}(\cdot; \boldsymbol{\eta}_j^i)$ at time $t_k$. Under the latent DAG assumption, we say the ODE system \eqref{eq:ODE2} is $\{\boldsymbol{\eta}_j^{1,2}\}_1^p$-identifiable from $\boldsymbol{x}_{j1}^i, \ldots, \boldsymbol{x}_{jn}^i$, $i = 1, 2$ and $j=1, \ldots, p$, if for all $\boldsymbol{x}'_0 \in \mathbb{R}^d$, all $\boldsymbol{z}'_0 \in \mathbb{R}^p$, all $A'\in \mathbb{R}^{d\times d}$, all $B' \in \mathbb{R}^{d\times p}$, and all $G' \in \mathbb{R}^{p\times p}$ with $(\boldsymbol{x}_0, A, B, G) \neq (\boldsymbol{x}'_0, A', B', G')$, it holds that $\exists i \in \{1,2\}, j \in \{1, \ldots, p\}$ and $k \in \{1, \ldots, n\}$ such that $\boldsymbol{x}(t_k; \boldsymbol{\eta}_j^i){\neq} \boldsymbol{x}(t_k; (\boldsymbol{\eta}'_j)^i)$.
\end{definition}
Based on Definition \ref{def: identifiability ODE2 discrete single intervene} we present the identifiability condition.
\begin{theorem}\label{thm: identifiability ODE2 discrete single intervene}
Given $z_{0j}^{*1}, z_{0j}^{*2} \in \mathbb{R}$ with $z_{0j}^{*1}\neq z_{0j}^{*2}$ for $j=1, \ldots, p$, for $\boldsymbol{x}_0 \in \mathbb{R}^d, \boldsymbol{z}_0 \in \mathbb{R}^p,  A\in \mathbb{R}^{d\times d}, B\in \mathbb{R}^{d\times p}$ and $G \in \mathbb{R}^{p\times p}$. We define new observation $\boldsymbol{y}_{jk}^i := [(\boldsymbol{x}_{jk}^i)^T, 1, t_k, t_k^2,\ldots, t_k^{p-1}]^T\in \mathbb{R}^{d+p}$, for $i =1, 2, j = 1, \ldots, p$ and $k = 1, \ldots, n$. Under the latent DAG assumption, the ODE system \eqref{eq:ODE2} is $\{\boldsymbol{\eta}_j^{1,2}\}_1^p$-identifiable from discrete observations $\boldsymbol{x}_{j1}^i, \ldots, \boldsymbol{x}_{jn}^i$, $i = 1, 2$ and $j=1, \ldots, p$, if assumptions \textbf{C3} and \textbf{B4} are both satisfied.
\begin{itemize}
    \item [\textbf{C3}:] for each $i \in \{1,2\}, j \in \{1, \ldots, p\}$ there exists $(d+p)$ $\boldsymbol{y}_{jk}^i$'s with indexes denoting as $\{k_{j1}^i, k_{j2}^i, \ldots, k_{j,d+p}^i\} \subseteq \{1, 2, \ldots, n\}$, such that the set of vectors $\{\boldsymbol{y}_{jk_{j1}^i}^i, \boldsymbol{y}_{jk_{j2}^i}^i, \ldots, \boldsymbol{y}_{jk_{j,d+p}^i}^i\}$ is linearly independent.
\end{itemize}
\end{theorem}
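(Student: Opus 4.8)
The plan is to reduce this discrete-observation statement to the continuous case (Theorem \ref{thm: identifiability ODE2 single intervene}) by replacing only its first ingredient: instead of invoking condition \textbf{B1} together with Theorem \ref{thm: identifiability ODE2 continuous} to get per-trajectory identifiability, I would invoke condition \textbf{C1} together with Theorem \ref{thm: identifiability ODE2 discrete}. The recovery of $B$ and $G$ is then \emph{verbatim} the argument already used in the proof of Theorem \ref{thm: identifiability ODE2 single intervene}, so the bulk of the work is already done. This is exactly the relationship that Theorem \ref{thm: identifiability ODE2 discrete more} bears to Theorem \ref{thm: identifiability ODE2 more}.

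First I would observe that for each fixed pair $(i,j)$ with $i\in\{1,2\}$ and $j\in\{1,\ldots,p\}$, the augmented vectors $\boldsymbol{y}_{jk}^i=[(\boldsymbol{x}_{jk}^i)^T,1,t_k,\ldots,t_k^{p-1}]^T$ are precisely the augmented observations associated with the single trajectory $\boldsymbol{x}(\cdot;\boldsymbol{\eta}_j^i)$ in the sense of Theorem \ref{thm: identifiability ODE2 discrete}. Hence assumption \textbf{C3}, restricted to this pair $(i,j)$, is nothing but assumption \textbf{C1} for that trajectory. Applying Theorem \ref{thm: identifiability ODE2 discrete} to each $(i,j)$ then yields that the system is $\boldsymbol{\eta}_j^i$-identifiable from the discrete observations $\boldsymbol{x}_{j1}^i,\ldots,\boldsymbol{x}_{jn}^i$, and therefore the tuple $(\boldsymbol{x}_0,A,B\tilde{\boldsymbol{z}}_{0j}^{i},BG\tilde{\boldsymbol{z}}_{0j}^{i},\ldots,BG^{p-1}\tilde{\boldsymbol{z}}_{0j}^{i})$ is identifiable for all $i=1,2$ and $j=1,\ldots,p$.

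Next I would recover $B$ exactly as in the proof of Theorem \ref{thm: identifiability ODE2 single intervene}. For each latent index $j$, the two identifiable vectors $B\tilde{\boldsymbol{z}}_{0j}^{1}$ and $B\tilde{\boldsymbol{z}}_{0j}^{2}$ differ only through the $j$-th coordinate of $\tilde{\boldsymbol{z}}_{0j}^i$, so their difference equals $(z_{0j}^{*1}-z_{0j}^{*2})$ times the $j$-th column of $B$; since $z_{0j}^{*1}\neq z_{0j}^{*2}$ this column, and hence every entry $B_{mj}$, is determined. The same subtraction applied to $BG^k\tilde{\boldsymbol{z}}_{0j}^{i}$ recovers $BG^k$ for $k=1,\ldots,p-1$, so that the stacked matrix $W=[B^T,(BG)^T,\ldots,(BG^{p-1})^T]^T$ is identifiable. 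I would then close with the nilpotency argument from the proof of Theorem \ref{thm: identifiability ODE2 more}: forming $V=[ (BG)^T,\ldots,(BG^p)^T]^T = WG$ (using $BG^p=\boldsymbol 0$) and, under assumption \textbf{B4} which guarantees $\mathrm{rank}(W)=p$, selecting an invertible $p\times p$ submatrix $W_p$ of $W$ and the corresponding rows $V_p$ of $V$ gives $G=W_p^{-1}V_p$, establishing identifiability of $G$.

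There is no genuinely hard step here; the result is a clean composition. The only point requiring care is the bookkeeping in the first paragraph: one must confirm that the dimension $d+p$ and the polynomial degree $p-1$ in the definition of $\boldsymbol{y}_{jk}^i$ match the augmented system \eqref{eq:ODE y} of Theorem \ref{thm: identifiability ODE2 discrete} for \emph{each} intervened trajectory, which holds because all trajectories share the same $A$, $B$, $G$ and differ only in the latent initial condition $\tilde{\boldsymbol{z}}_{0j}^i$, leaving the augmented dimension unchanged. Everything thereafter is the algebraic inversion already carried out in Appendix \ref{proof: ODE2 more} and Appendix \ref{subsec:iden 2p continous}.
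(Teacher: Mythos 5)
Your proposal is correct and follows essentially the same route as the paper: the paper's proof likewise applies assumption \textbf{C3} pairwise as assumption \textbf{C1} to invoke Theorem \ref{thm: identifiability ODE2 discrete} for each $(i,j)$, then cites the $B$-recovery argument from the proof of Theorem \ref{thm: identifiability ODE2 single intervene} (your subtraction giving $(z_{0j}^{*1}-z_{0j}^{*2})$ times the $j$-th column of $B$ is just a cleaner packaging of the paper's entry-wise computation) and the nilpotency/rank argument from the proof of Theorem \ref{thm: identifiability ODE2 more} under assumption \textbf{B4} to get $G = W_p^{-1}V_p$. Your dimension-bookkeeping remark is also the right point to verify, and it holds for exactly the reason you give.
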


The proof of Theorem \ref{thm: identifiability ODE2 discrete single intervene} is presented below. Assumption \textbf{C3} ensures that the ODE system \eqref{eq:ODE2} is $\boldsymbol{\eta}_j^i$-identifiable from discrete observations $\boldsymbol{x}_{j1}^i, \ldots, \boldsymbol{x}_{jn}^i$  for all $i=1, 2$ and $j= 1, \ldots, p$. As in Subsection \ref{subsec:iden 2p continous}, matrix $B$ is identifiable. Then, under assumption \textbf{B4}, matrix $G$ is also identifiable.
\begin{proof}
    Under assumption \textbf{C3}, for each  $i \in \{1,2\}$ and $j \in \{1, \ldots, p\}$, the corresponding observations satisfy assumption \textbf{C1}. Based on Theorem \ref{thm: identifiability ODE2 discrete}, the ODE system \eqref{eq:ODE2} is $\boldsymbol{\eta}_j^i$-identifiable for all $i=1, 2$ and $j= 1, \ldots, p$. Consequently, 
\begin{equation*}
    (\boldsymbol{x}_0, A, B\tilde{\boldsymbol{z}}_{0j}^{i}, BG\tilde{\boldsymbol{z}}_{0j}^{i}, \ldots, BG^{p-1}\tilde{\boldsymbol{z}}_{0j}^{i})
\end{equation*}
for all $i=1, 2$ and $j= 1, \ldots, p$ is identifiable.

Following the proof of Theorem \ref{thm: identifiability ODE2 single intervene}, matrix $B$ is identifiable. Under assumption \textbf{B4}, matrix $G$ is also identifiable.
\end{proof}
\newpage
\section{More simulation results}\label{app:more simulations}
In this section, we present additional simulation results for higher-dimensional cases, along with simulations that incorporate a variety of ground-truth parameter configurations.

\subsection{Higher dimensional cases}
In this subsection, for the $\boldsymbol{\eta}$-(un)identifiable cases of the ODE system \eqref{eq:ODE2}, we provide a case with $d=5$ and $p=5$. The true underlying parameters of the systems are provided below. Initial parameter values are set to the true parameters plus a random value drawn from a uniform distribution $U(-0.14, 0.14)$ for each replication. To ensure reliability in the estimation results, we perform 50 independent random replications for each configuration, reporting the mean and variance of the squared error in Table \ref{table-fixparam-single-d5p5}.
\begin{equation*}
\begin{split}
      &A = \begin{bmatrix}
        2 & -2 & 1 & 1 & 1\\
        -1 & 1 & 0 & 2 & -2\\
        -2 & 2 & 0 & -1 & -2 \\
        -1 & -1 & -2 & -1 & 2 \\
        1 & -2 & 1 & -2 & 0
    \end{bmatrix}\,, \ \ 
    B = \begin{bmatrix}
        1 & -2 & -1 & 1 & 1\\
        1 & -2 & -1 & -1 & -1\\
        -2 & 0 & 2 & 1 & 1\\
        0 & 2 & 0 & -2 & -2\\
        2 & -2 & 2 & -1 & 2
    \end{bmatrix}\,, \\
    &G = \begin{bmatrix}
        0 & 0 & 0 & -2 & -1\\
        0 & 0 & -1 & 1 & 1\\
        0 & 0 & 0 & 1 & 2\\
        0 & 0 & 0 & 0 & 2\\
        0 & 0 & 0 & 0 & 0
    \end{bmatrix}\,, \ \
    A' = \boldsymbol{I}_5\,, \ \  
     \boldsymbol{x}_0 = \begin{bmatrix}
        2\\
        -2\\
        2\\
        1\\
        0
    \end{bmatrix}\,, \ \ \
    \boldsymbol{z}_0 = \begin{bmatrix}
        -2\\
        -1\\
        -1\\
        1\\
        -2
    \end{bmatrix}\,, \\
    & \boldsymbol{\eta}\mbox{-identifiable: } \boldsymbol{\eta} = (\boldsymbol{x}_0, \boldsymbol{z}_0, A, B, G), \mbox{ unidentifiable: } \boldsymbol{\eta} = (\boldsymbol{x}_0, \boldsymbol{z}_0, A', B, G)\,.\\
\end{split}
\end{equation*}
$\boldsymbol{I}_j$ denotes a $j\times j$ identity matrix.

\begin{table}[htbp]
    \caption{MSEs of the $\boldsymbol{\eta}$-(un)identifiable cases of the ODE (3) with $d=5, p=5$}
    \label{table-fixparam-single-d5p5}
    \centering
    \begin{tabu}{cccccccc}
    \toprule
     & $\boldsymbol{n}$ & $A$  & $B\boldsymbol{z}_0$  & $BG\boldsymbol{z}_0$  & $BG^2\boldsymbol{z}_0$ &  $BG^3\boldsymbol{z}_0$ & $BG^4\boldsymbol{z}_0$ \\
    \midrule
    \addlinespace
    \multirow{6}{*}[-2ex]{\rotatebox{90}{\textbf{Identifiable}}} &\multirow{2}{*}{10}  & 0.0148 & 0.3911 & 0.9624 & 0.7316  & 0.1037 & 0.0096  \\
    \rowfont{\footnotesize}  & & ($\pm$0.0006) & ($\pm$0.5989) & ($\pm$3.9249) & ($\pm$1.8971) & ($\pm$0.0374) & ($\pm$0.0003) \\
    \addlinespace
    &\multirow{2}{*}{100}  & 0.0059 & 0.1529  & 0.1726 & 0.2447 & 0.0212 & 0.0012  \\
    \rowfont{\footnotesize}  & & ($\pm$4.01\text{E-}05) & ($\pm$0.0277)  & ($\pm$0.0541) & ($\pm$0.0748) & ($\pm$0.0007) & ($\pm$1.10\text{E-}05)  \\
    \addlinespace
    &\multirow{2}{*}{1000}  & 0.0053 & 0.1394  & 0.1241 & 0.2119 & 0.0164 & 0.0004 \\
    \rowfont{\footnotesize}  & & ($\pm$2.92\text{E-}05) & ($\pm$0.0200)  & ($\pm$0.0251) & ($\pm$0.0479) & ($\pm$0.0004) & ($\pm$6.00\text{E-}07)  \\
    \addlinespace
    \midrule
    \addlinespace
    \multirow{6}{*}[-2ex]{\rotatebox{90}{\textbf{Unidentifiable}}} &\multirow{2}{*}{10}  & 0.0853 & 1.0067  & 3.7422 & 2.7696  & 0.9229 & 0.0508  \\
    \rowfont{\footnotesize}   && ($\pm$0.0075) & ($\pm$1.3518) & ($\pm$55.8402) & ($\pm$24.5043) & ($\pm$2.7959) & ($\pm$0.0111) \\
    \addlinespace
    &\multirow{2}{*}{100}  & 0.0357 & 0.4091  & 1.0428 & 0.9782 & 0.3871 & 0.0256  \\
    \rowfont{\footnotesize}   && ($\pm$0.0019) & ($\pm$0.3812)  & ($\pm$2.1792) & ($\pm$5.3654) & ($\pm$0.6747) & ($\pm$0.0032)  \\
    \addlinespace
    &\multirow{2}{*}{1000}  & 0.0332 & 0.3286  & 0.7123 & 0.9782 & 0.5487 & 0.0393  \\
    \rowfont{\footnotesize}   && ($\pm$0.0017) & ($\pm$0.1824)  & ($\pm$1.8836) & ($\pm$2.3163) & ($\pm$0.9240) & ($\pm$0.0047)  \\
    \addlinespace
    \bottomrule
    \end{tabu}
\end{table}

For $\{\boldsymbol{\eta}_i\}_1^p$-(un)identifiable cases of the ODE system \eqref{eq:ODE2}, we consider a case with $d=10$ and $p=5$. To accelerate estimation, sparsity is introduced in the parameter matrices by randomly setting 70, 35, and 20 entries in matrices $A$, $B$ and $G$, respectively, as zero. The true underlying parameters of the systems are provided below. Initial parameter values are set to the true parameters plus a random value drawn from a uniform distribution $U(-0.1, 0.1)$ for each replication. To ensure reliability in the estimation results, we perform 50 independent random replications for each configuration, reporting the mean and variance of the squared error in Table \ref{table-fixparam-multiple-d10p5}.
\begin{equation*}
\begin{split}
      &A = \begin{bmatrix}
        0 & 0 & -2 & -1 & 1 & 2 & 0 & -2 & -1 & 0\\
        0 & 0 & 0 & 0 & 0 & 2 & 0 & -2 & 2 & 0\\
        0 & 0 & 0 & 0 & 0 & 2 & 0 & 1 & 1 & 0\\
        0 & 0 & 0 & -1 & 0 & 0 & 1 & 0 & -2 & 0\\
        2 & 0 & 0 & -1 & 0 & -2 & 0 & 0 & -1 & 1\\
        2 & 0 & 0 & 0 & 0 & 0 & 0 & 2 & 0 & -2\\
        0 & 2 & 0 & 0 & 0 & 0 & 0 & 0 & 0 & 0\\
        -2 & -1 & 0 & 0 & 0 & 0 & 0 & 0 & 0 & 0\\
        0 & 0 & 0 & -2 & 0 & 0 & 0 & 0 & 0 & -2\\
        0 & 0 & 0 & 0 & -1 & 0 & 0 & 0 & 0 & -1
    \end{bmatrix}\,, \ \ 
    B = \begin{bmatrix}
        -1 & 0 & 0 & 0 & 2\\
        0 & -1 & 0 & 2 & 0\\
        0 & -1 & 0 & 0 & 0\\
        0 & 0 & 0 & 1 & 1\\
        0 & 0 & 0 & 0 & 0\\
        0 & 1 & 0 & 0 & 1\\
        0 & 0 & -1 & 0 & 0 \\
        1 & 0 & 0 & 0 & 0 \\
        1 & 0 & 0 & 0 & -1\\
        -1 & 0 & 0 & 0 & -1
    \end{bmatrix}\,, \\
    &G = \begin{bmatrix}
        0 & 1 & -1 & 0 & 2\\
        0 & 0 & 2 & 0 & 0\\
        0 & 0 & 0 & -1 & 0\\
        0 & 0 & 0 & 0 & 0\\
        0 & 0 & 0 & 0 & 0
    \end{bmatrix}\,, \ \
    A' = \boldsymbol{I}_{10}\,, \\ 
     &\boldsymbol{x}_0 = \begin{bmatrix}
        -2 & 0 & 0 & -2 & 2 & -1 & 1 & 0 & 1 & 1
    \end{bmatrix}^{\top}\,, \ \ \
    \boldsymbol{z}_0^{*i} = \boldsymbol{e}_i\,, \mbox{for } i =1, \ldots, 5\,.\\
    & \{\boldsymbol{\eta}_i\}_1^p\mbox{-identifiable: } \boldsymbol{\eta}_i = (\boldsymbol{x}_0, \boldsymbol{z}_0^{*i}, A, B, G), \mbox{ unidentifiable: } \boldsymbol{\eta}_i = (\boldsymbol{x}_0, \boldsymbol{z}_0^{*i}, A', B, G)\,. \\
\end{split}
\end{equation*}
$\boldsymbol{e}_i$ stands for a $p$-dimensional vector, with the $i$-th entry being $1$ and the other entries being $0$.

\begin{table}[htbp]
    \caption{MSEs of the  $\{\boldsymbol{\eta}_i\}_1^p$-(un)identifiable cases of the ODE (3) with $d=10, p=5$}
    \label{table-fixparam-multiple-d10p5}
    \centering
    \begin{tabu}{ccccccc}
    \toprule
    \multicolumn{1}{c}{\multirow{2}{*}{$\boldsymbol{n}$}} & \multicolumn{3}{c}{\textbf{Identifiable}} & \multicolumn{3}{c}{\textbf{Unidentifiable}}\\
    \cmidrule(lr){2-4} \cmidrule(l){5-7}
       & $A$  & $B$ & $G$ &  $A$ &  $B$  & $G$ \\
    \midrule
    \addlinespace
    \multirow{2}{*}{10}   & 1.53\text{E-}11 & 2.49\text{E-}10  & 3.01\text{E-}10 & 0.8345 & 0.2118 & 0.0037 \\
    \rowfont{\footnotesize}   & ($\pm$2.36\text{E-}21) & ($\pm$6.30\text{E-}19) & ($\pm$9.20\text{E-}19) & ($\pm$0.6268)  & ($\pm$0.0260) & ($\pm$0.0002)  \\
    \addlinespace
    \multirow{2}{*}{30}   & 9.15\text{E-}13 & 1.49\text{E-}11  & 1.80\text{E-}11 & 0.7216 & 0.1952 & 1.25\text{E-}21 \\
    \rowfont{\footnotesize}   & ($\pm$4.45\text{E-}24) & ($\pm$1.18\text{E-}21) & ($\pm$1.73\text{E-}21) & ($\pm$0.4099)  & ($\pm$0.0156) & ($\pm$5.18\text{E-}41)  \\
    \addlinespace
    \multirow{2}{*}{50}   & 9.64\text{E-}14 & 1.57\text{E-}12  & 1.90\text{E-}12 & 0.6510 & 0.2211 & 0.0042 \\
    \rowfont{\footnotesize}   & ($\pm$1.29\text{E-}25) & ($\pm$3.43\text{E-}23) & ($\pm$5.02\text{E-}23) & ($\pm$0.2251)  & ($\pm$0.0278) & ($\pm$0.0003)  \\
    \addlinespace
    \bottomrule
    \end{tabu}
\end{table}

Tables \ref{table-fixparam-single-d5p5} and \ref{table-fixparam-multiple-d10p5} present results similar to those in Tables \ref{table1} and \ref{table2}, providing strong empirical support for the validity of our proposed identifiability conditions.

\subsection{Various true parameters}
To further support our proposed identifiability conditions, we conduct additional simulations incorporating a variety of ground-truth parameter configurations, rather than a fixed underlying parameter set. Specifically, for each simulation run, a unique ground-truth parameter configuration was generated using different random seeds, and we subsequently reported the mean and variance of the squared error across all results. For the low-dimensional $\boldsymbol{\eta}$ and $\{\boldsymbol{\eta}_i\}_1^p$ (un)identifiable cases, we perform 100 replications, while for the higher-dimensional cases, we perform 50 replications. Additionally, in the $\{\boldsymbol{\eta}_i\}_1^p$-(un)identifiable cases, we initialize the parameter values as the true parameters plus a random value drawn from $U(-0.1, 0.1)$ for the $d=3, p=3$ case and from $U(-0.05, 0.05)$ for the $d=10, p=5$ cases. For the $\boldsymbol{\eta}$-(un)identifiable cases, the initialization settings are the same as those used in the fixed-parameter configurations.

The simulation results are presented in Tables \ref{table-varparams-single-d3p3}, \ref{table-varparams-multiple-d3p3}, \ref{table-varparams-single-d5p5}, and \ref{table-varparams-multiple-d10p5}. Across all these tables, parameter estimates in the identifiable cases are notably more accurate than in the unidentifiable cases, providing strong empirical support for the validity of our proposed identifiability conditions.

 It is noteworthy, however, that even in theoretically identifiable cases, certain scenarios emerge where parameter identification is challenging in practice; we refer to these as hard estimate cases. In these instances, estimates may deviate significantly from satisfactory values, similar to challenges encountered in fully observable ODE systems \eqref{eq:ODE} as discussed in \cite{qiu2022identifiability}. Consequently, for identifiable cases with varying true parameter configurations, the results are less precise than those for corresponding fixed-parameter cases, due to the inclusion of some hard estimate instances. Investigating the practical identifiability of the ODE system \eqref{eq:ODE2} remains an intriguing direction for future research.

\begin{table}[htbp]
    \caption{MSEs of the $\boldsymbol{\eta}$-(un)identifiable cases of the ODE \eqref{eq:ODE2} - with various true parameters}
    \label{table-varparams-single-d3p3}
    \centering
    \begin{tabu}{ccccccccc}
    \toprule
    \multicolumn{1}{c}{\multirow{2}{*}{$\boldsymbol{n}$}} & \multicolumn{4}{c}{\textbf{Identifiable}} & \multicolumn{4}{c}{\textbf{Unidentifiable}}\\
    \cmidrule(lr){2-5} \cmidrule(l){6-9}
       & $A$  & $B\boldsymbol{z}_0$  & $BG\boldsymbol{z}_0$  & $BG^2\boldsymbol{z}_0$ & $A$ &  $B\boldsymbol{z}_0$  & $BG\boldsymbol{z}_0$  & $BG^2\boldsymbol{z}_0$\\
    \midrule
    \addlinespace
    \multirow{2}{*}{10}   & 0.0060 & 0.0157  & 0.1698 & 0.2297 & 0.0691 & 0.2720 & 1.3133 & 0.6622 \\
    \rowfont{\scriptsize}   & ($\pm$0.0008) & ($\pm$0.0036) & ($\pm$0.5665) & ($\pm$1.1053)  & ($\pm$0.0203) & ($\pm$0.5914) & ($\pm$7.4471) & ($\pm$8.5348) \\
    \addlinespace
    \addlinespace
    \multirow{2}{*}{100}   & 0.0026 & 0.0108  & 0.0820 & 0.1287 & 0.0283 & 0.1003 & 0.4880 & 0.2649 \\
    \rowfont{\scriptsize}   & ($\pm$9.27\text{E-}05) & ($\pm$0.0022) & ($\pm$0.1159) & ($\pm$0.7042)  & ($\pm$0.0031) & ($\pm$0.0441) & ($\pm$2.6547) & ($\pm$1.6631) \\
    \addlinespace
    \addlinespace
    \multirow{2}{*}{500}   & 0.0020 & 0.0092  & 0.0870 & 0.0705 & 0.0227 & 0.1061 & 0.5015 & 0.2574 \\
    \rowfont{\scriptsize}   & ($\pm$6.48\text{E-}05) & ($\pm$0.0023) & ($\pm$0.1941) & ($\pm$0.1179)  & ($\pm$0.0018) & ($\pm$0.0672) & ($\pm$3.0811) & ($\pm$2.0779) \\
    \addlinespace
    \bottomrule
    \end{tabu}
\end{table}

\begin{table}[htbp]
    \caption{MSEs of the  $\{\boldsymbol{\eta}_i\}_1^p$-(un)identifiable cases of the ODE \eqref{eq:ODE2} - with various true parameters}
    \label{table-varparams-multiple-d3p3}
    \centering
    \begin{tabu}{ccccccc}
    \toprule
    \multicolumn{1}{c}{\multirow{2}{*}{$\boldsymbol{n}$}} & \multicolumn{3}{c}{\textbf{Identifiable}} & \multicolumn{3}{c}{\textbf{Unidentifiable}}\\
    \cmidrule(lr){2-4} \cmidrule(l){5-7}
       & $A$  & $B$ & $G$ &  $A$ &  $B$  & $G$ \\
    \midrule
    \addlinespace
    \multirow{2}{*}{10}   & 0.0006 & 1.89\text{E-}5  &  0.0009 & 0.0861 & 0.0088 & 0.0101 \\
    \rowfont{\footnotesize}   & ($\pm$2.21\text{E-}5) & ($\pm$3.55\text{E-}8) & ($\pm$6.71\text{E-}5) & ($\pm$0.1773)  & ($\pm$0.0020) & ($\pm$0.0045)  \\
    \addlinespace
    \addlinespace
    \multirow{2}{*}{30}   & 0.0006 & 1.87\text{E-}5  &  0.0010 & 0.0789 & 0.0092 & 0.0104 \\
    \rowfont{\footnotesize}   & ($\pm$2.20\text{E-}5) & ($\pm$3.47\text{E-}8) & ($\pm$6.64\text{E-}5) & ($\pm$0.1280)  & ($\pm$0.0028) & ($\pm$0.0046)  \\
    \addlinespace
    \addlinespace
    \multirow{2}{*}{50}   & 0.0006 & 1.88\text{E-}5 & 0.0009  & 0.0503 & 0.0063  & 0.0114  \\
    \rowfont{\footnotesize}   & ($\pm$2.21\text{E-}5) & ($\pm$3.51\text{E-}8) & ($\pm$6.67\text{E-}5) & ($\pm$0.0430)  & ($\pm$0.0006) & ($\pm$0.0047)  \\
    \addlinespace
    \bottomrule
    \end{tabu}
\end{table}

\begin{table}[htbp]
    \caption{MSEs of the $\boldsymbol{\eta}$-(un)identifiable cases of the ODE (3) with $d=5, p=5$ - with various true parameters}
    \label{table-varparams-single-d5p5}
    \centering
    \begin{tabu}{cccccccc}
    \toprule
     & $\boldsymbol{n}$ & $A$  & $B\boldsymbol{z}_0$  & $BG\boldsymbol{z}_0$  & $BG^2\boldsymbol{z}_0$ &  $BG^3\boldsymbol{z}_0$ & $BG^4\boldsymbol{z}_0$ \\
    \midrule
    \addlinespace
    \multirow{6}{*}[-2ex]{\rotatebox{90}{\textbf{Identifiable}}} &\multirow{2}{*}{10}  & 0.0144 & 0.1215 & 1.4643 & 2.1890  & 1.8254 & 0.4826  \\
    \rowfont{\footnotesize}  & & ($\pm$0.0004) & ($\pm$0.0757) & ($\pm$8.3976) & ($\pm$54.9706) & ($\pm$48.7033) & ($\pm$5.7127) \\
    \addlinespace
    &\multirow{2}{*}{100}  & 0.0041 & 0.0395  & 0.2850 & 0.3891 & 0.2078 & 0.0239  \\
    \rowfont{\footnotesize}  & & ($\pm$4.55\text{E-}05) & ($\pm$0.0092)  & ($\pm$0.1739) & ($\pm$0.4936) & ($\pm$0.2950) & ($\pm$0.0024)  \\
    \addlinespace
    &\multirow{2}{*}{1000}  & 0.0032 & 0.0337  & 0.1934 & 0.2242 & 0.1197 & 0.0181 \\
    \rowfont{\footnotesize}  & & ($\pm$3.26\text{E-}05) & ($\pm$0.0049)  & ($\pm$0.0686) & ($\pm$0.2180) & ($\pm$0.0712) & ($\pm$0.0014)  \\
    \addlinespace
    \midrule
    \addlinespace
    \multirow{6}{*}[-2ex]{\rotatebox{90}{\textbf{Unidentifiable}}} &\multirow{2}{*}{10}  & 0.0740 & 0.4599  & 2.8628 & 1.8743  & 0.4834 & 0.0334  \\
    \rowfont{\footnotesize}   && ($\pm$0.0047) & ($\pm$0.4841) & ($\pm$9.5476) & ($\pm$8.6653) & ($\pm$1.2606) & ($\pm$0.0147) \\
    \addlinespace
    &\multirow{2}{*}{100}  & 0.0263 & 0.2142  & 1.1678 & 1.2354 & 0.2878 & 0.0193  \\
    \rowfont{\footnotesize}   && ($\pm$0.0031) & ($\pm$0.1869)  & ($\pm$8.2277) & ($\pm$9.4970) & ($\pm$0.8655) & ($\pm$0.0052)  \\
    \addlinespace
    &\multirow{2}{*}{1000}  & 0.0142 & 0.1389  & 0.6979 & 0.6701 & 0.0732 & 0.0062  \\
    \rowfont{\footnotesize}   && ($\pm$0.0003) & ($\pm$0.0463)  & ($\pm$1.2080) & ($\pm$1.5228) & ($\pm$0.0336) & ($\pm$0.0003)  \\
    \addlinespace
    \bottomrule
    \end{tabu}
\end{table}

\begin{table}[ht]
    \caption{MSEs of the  $\{\boldsymbol{\eta}_i\}_1^p$-(un)identifiable cases of the ODE (3) with $d=10, p=5$ - with various true parameters}
    \label{table-varparams-multiple-d10p5}
    \centering
    \begin{tabu}{ccccccc}
    \toprule
    \multicolumn{1}{c}{\multirow{2}{*}{$\boldsymbol{n}$}} & \multicolumn{3}{c}{\textbf{Identifiable}} & \multicolumn{3}{c}{\textbf{Unidentifiable}}\\
    \cmidrule(lr){2-4} \cmidrule(l){5-7}
       & $A$  & $B$ & $G$ &  $A$ &  $B$  & $G$ \\
    \midrule
    \addlinespace
    \multirow{2}{*}{10}   & 0.0044 & 0.0350  & 0.0287 & 0.6266 & 0.1310 & 0.0054 \\
    \rowfont{\footnotesize}   & ($\pm$0.0001) & ($\pm$0.0098) & ($\pm$0.0053) & ($\pm$0.1524)  & ($\pm$0.0269) & ($\pm$0.0004)  \\
    \addlinespace
    \multirow{2}{*}{30}   & 0.0067 & 0.1258  & 0.0315 & 0.5833 & 0.1058 & 0.0021 \\
    \rowfont{\footnotesize}   & ($\pm$0.0005) & ($\pm$0.5097) & ($\pm$0.0104) & ($\pm$0.2085)  & ($\pm$0.0114) & ($\pm$8.79\text{E-}05)  \\
    \addlinespace
    \multirow{2}{*}{50}   & 0.0033 & 0.0323  & 0.0354 & 0.5193 & 0.1108 & 0.0021 \\
    \rowfont{\footnotesize}   & ($\pm$5.66\text{E-}05) & ($\pm$0.0103) & ($\pm$0.0084) & ($\pm$0.0982)  & ($\pm$0.0146) & ($\pm$9.02\text{E-}05)  \\
    \addlinespace
    \bottomrule
    \end{tabu}
\end{table}


\clearpage
\section*{NeurIPS Paper Checklist}

\begin{enumerate}

\item {\bf Claims}
    \item[] Question: Do the main claims made in the abstract and introduction accurately reflect the contributions and scope of our paper?
    \item[] Answer: \answerYes{}
    \item[] Justification: The main claims presented in our abstract and introduction accurately reflect our paper's contributions and scope.
    \item[] Guidelines:
    \begin{itemize}
        \item The answer NA means that the abstract and introduction do not include the claims made in the paper.
        \item The abstract and/or introduction should clearly state the claims made, including the contributions made in the paper and important assumptions and limitations. A No or NA answer to this question will not be perceived well by the reviewers. 
        \item The claims made should match theoretical and experimental results, and reflect how much the results can be expected to generalize to other settings. 
        \item It is fine to include aspirational goals as motivation as long as it is clear that these goals are not attained by the paper. 
    \end{itemize}

\item {\bf Limitations}
    \item[] Question: Does the paper discuss the limitations of the work performed by the authors?
    \item[] Answer: \answerYes{}
    \item[] Justification: The limitations of our work are discussed in Section \ref{sec:conclusion}.
    \item[] Guidelines:
    \begin{itemize}
        \item The answer NA means that the paper has no limitation while the answer No means that the paper has limitations, but those are not discussed in the paper. 
        \item The authors are encouraged to create a separate "Limitations" section in their paper.
        \item The paper should point out any strong assumptions and how robust the results are to violations of these assumptions (e.g., independence assumptions, noiseless settings, model well-specification, asymptotic approximations only holding locally). The authors should reflect on how these assumptions might be violated in practice and what the implications would be.
        \item The authors should reflect on the scope of the claims made, e.g., if the approach was only tested on a few datasets or with a few runs. In general, empirical results often depend on implicit assumptions, which should be articulated.
        \item The authors should reflect on the factors that influence the performance of the approach. For example, a facial recognition algorithm may perform poorly when image resolution is low or images are taken in low lighting. Or a speech-to-text system might not be used reliably to provide closed captions for online lectures because it fails to handle technical jargon.
        \item The authors should discuss the computational efficiency of the proposed algorithms and how they scale with dataset size.
        \item If applicable, the authors should discuss possible limitations of their approach to address problems of privacy and fairness.
        \item While the authors might fear that complete honesty about limitations might be used by reviewers as grounds for rejection, a worse outcome might be that reviewers discover limitations that aren't acknowledged in the paper. The authors should use their best judgment and recognize that individual actions in favor of transparency play an important role in developing norms that preserve the integrity of the community. Reviewers will be specifically instructed to not penalize honesty concerning limitations.
    \end{itemize}

\item {\bf Theory Assumptions and Proofs}
    \item[] Question: For each theoretical result, does the paper provide the full set of assumptions and a complete (and correct) proof?
    \item[] Answer: \answerYes{}
    \item[] Justification: We provide the full set of assumptions and a complete proof for each theoretical result presented in our paper.
    \item[] Guidelines:
    \begin{itemize}
        \item The answer NA means that the paper does not include theoretical results. 
        \item All the theorems, formulas, and proofs in the paper should be numbered and cross-referenced.
        \item All assumptions should be clearly stated or referenced in the statement of any theorems.
        \item The proofs can either appear in the main paper or the supplemental material, but if they appear in the supplemental material, the authors are encouraged to provide a short proof sketch to provide intuition. 
        \item Inversely, any informal proof provided in the core of the paper should be complemented by formal proofs provided in appendix or supplemental material.
        \item Theorems and Lemmas that the proof relies upon should be properly referenced. 
    \end{itemize}

    \item {\bf Experimental Result Reproducibility}
    \item[] Question: Does the paper fully disclose all the information needed to reproduce the main experimental results of the paper to the extent that it affects the main claims and/or conclusions of the paper (regardless of whether the code and data are provided or not)?
    \item[] Answer: \answerYes{}
    \item[] Justification: We provide all experimental details in Section \ref{sec:simulations} and include the code in the supplemental material.
    \item[] Guidelines:
    \begin{itemize}
        \item The answer NA means that the paper does not include experiments.
        \item If the paper includes experiments, a No answer to this question will not be perceived well by the reviewers: Making the paper reproducible is important, regardless of whether the code and data are provided or not.
        \item If the contribution is a dataset and/or model, the authors should describe the steps taken to make their results reproducible or verifiable. 
        \item Depending on the contribution, reproducibility can be accomplished in various ways. For example, if the contribution is a novel architecture, describing the architecture fully might suffice, or if the contribution is a specific model and empirical evaluation, it may be necessary to either make it possible for others to replicate the model with the same dataset, or provide access to the model. In general. releasing code and data is often one good way to accomplish this, but reproducibility can also be provided via detailed instructions for how to replicate the results, access to a hosted model (e.g., in the case of a large language model), releasing of a model checkpoint, or other means that are appropriate to the research performed.
        \item While NeurIPS does not require releasing code, the conference does require all submissions to provide some reasonable avenue for reproducibility, which may depend on the nature of the contribution. For example
        \begin{enumerate}
            \item If the contribution is primarily a new algorithm, the paper should make it clear how to reproduce that algorithm.
            \item If the contribution is primarily a new model architecture, the paper should describe the architecture clearly and fully.
            \item If the contribution is a new model (e.g., a large language model), then there should either be a way to access this model for reproducing the results or a way to reproduce the model (e.g., with an open-source dataset or instructions for how to construct the dataset).
            \item We recognize that reproducibility may be tricky in some cases, in which case authors are welcome to describe the particular way they provide for reproducibility. In the case of closed-source models, it may be that access to the model is limited in some way (e.g., to registered users), but it should be possible for other researchers to have some path to reproducing or verifying the results.
        \end{enumerate}
    \end{itemize}

\item {\bf Open access to data and code}
    \item[] Question: Does the paper provide open access to the data and code, with sufficient instructions to faithfully reproduce the main experimental results, as described in supplemental material?
    \item[] Answer: \answerYes{}
    \item[] Justification: We provide our code in the supplemental material.
    \item[] Guidelines:
    \begin{itemize}
        \item The answer NA means that paper does not include experiments requiring code.
        \item Please see the NeurIPS code and data submission guidelines (\url{https://nips.cc/public/guides/CodeSubmissionPolicy}) for more details.
        \item While we encourage the release of code and data, we understand that this might not be possible, so “No” is an acceptable answer. Papers cannot be rejected simply for not including code, unless this is central to the contribution (e.g., for a new open-source benchmark).
        \item The instructions should contain the exact command and environment needed to run to reproduce the results. See the NeurIPS code and data submission guidelines (\url{https://nips.cc/public/guides/CodeSubmissionPolicy}) for more details.
        \item The authors should provide instructions on data access and preparation, including how to access the raw data, preprocessed data, intermediate data, and generated data, etc.
        \item The authors should provide scripts to reproduce all experimental results for the new proposed method and baselines. If only a subset of experiments are reproducible, they should state which ones are omitted from the script and why.
        \item At submission time, to preserve anonymity, the authors should release anonymized versions (if applicable).
        \item Providing as much information as possible in supplemental material (appended to the paper) is recommended, but including URLs to data and code is permitted.
    \end{itemize}

\item {\bf Experimental Setting/Details}
    \item[] Question: Does the paper specify all the training and test details (e.g., data splits, hyperparameters, how they were chosen, type of optimizer, etc.) necessary to understand the results?
    \item[] Answer: \answerYes{}
    \item[] Justification: We provide all experimental details in Section \ref{sec:simulations}, and it is noteworthy that our experiments do not require any training phase.
    \item[] Guidelines:
    \begin{itemize}
        \item The answer NA means that the paper does not include experiments.
        \item The experimental setting should be presented in the core of the paper to a level of detail that is necessary to appreciate the results and make sense of them.
        \item The full details can be provided either with the code, in appendix, or as supplemental material.
    \end{itemize}

\item {\bf Experiment Statistical Significance}
    \item[] Question: Does the paper report error bars suitably and correctly defined or other appropriate information about the statistical significance of the experiments?
    \item[] Answer: \answerYes{}
    \item[] Justification: We conduct 100 independent random replications for each configuration and report the mean and variance of the squared error.
    \item[] Guidelines:
    \begin{itemize}
        \item The answer NA means that the paper does not include experiments.
        \item The authors should answer "Yes" if the results are accompanied by error bars, confidence intervals, or statistical significance tests, at least for the experiments that support the main claims of the paper.
        \item The factors of variability that the error bars are capturing should be clearly stated (for example, train/test split, initialization, random drawing of some parameter, or overall run with given experimental conditions).
        \item The method for calculating the error bars should be explained (closed form formula, call to a library function, bootstrap, etc.)
        \item The assumptions made should be given (e.g., Normally distributed errors).
        \item It should be clear whether the error bar is the standard deviation or the standard error of the mean.
        \item It is OK to report 1-sigma error bars, but one should state it. The authors should preferably report a 2-sigma error bar than state that they have a 96\% CI, if the hypothesis of Normality of errors is not verified.
        \item For asymmetric distributions, the authors should be careful not to show in tables or figures symmetric error bars that would yield results that are out of range (e.g. negative error rates).
        \item If error bars are reported in tables or plots, The authors should explain in the text how they were calculated and reference the corresponding figures or tables in the text.
    \end{itemize}

\item {\bf Experiments Compute Resources}
    \item[] Question: For each experiment, does the paper provide sufficient information on the computer resources (type of compute workers, memory, time of execution) needed to reproduce the experiments?
    \item[] Answer: \answerNo{}
    \item[] Justification: Since our experiments solely consist of simulations designed to validate our theoretical findings, the computational resources employed are not a consideration for our research objectives.
    \item[] Guidelines:
    \begin{itemize}
        \item The answer NA means that the paper does not include experiments.
        \item The paper should indicate the type of compute workers CPU or GPU, internal cluster, or cloud provider, including relevant memory and storage.
        \item The paper should provide the amount of compute required for each of the individual experimental runs as well as estimate the total compute. 
        \item The paper should disclose whether the full research project required more compute than the experiments reported in the paper (e.g., preliminary or failed experiments that didn't make it into the paper). 
    \end{itemize}
    
\item {\bf Code Of Ethics}
    \item[] Question: Does the research conducted in the paper conform, in every respect, with the NeurIPS Code of Ethics \url{https://neurips.cc/public/EthicsGuidelines}?
    \item[] Answer: \answerYes{}
    \item[] Justification: Our research conducted in the paper conform, in every respect, with the NeurIPS Code of Ethics.
    \item[] Guidelines: 
    \begin{itemize}
        \item The answer NA means that the authors have not reviewed the NeurIPS Code of Ethics.
        \item If the authors answer No, they should explain the special circumstances that require a deviation from the Code of Ethics.
        \item The authors should make sure to preserve anonymity (e.g., if there is a special consideration due to laws or regulations in their jurisdiction).
    \end{itemize}

\item {\bf Broader Impacts}
    \item[] Question: Does the paper discuss both potential positive societal impacts and negative societal impacts of the work performed?
    \item[] Answer: \answerNA{}.
    \item[] Justification: We claim that this work does not present any foreseeable positive or negative social impact.
    \item[] Guidelines:
    \begin{itemize}
        \item The answer NA means that there is no societal impact of the work performed.
        \item If the authors answer NA or No, they should explain why their work has no societal impact or why the paper does not address societal impact.
        \item Examples of negative societal impacts include potential malicious or unintended uses (e.g., disinformation, generating fake profiles, surveillance), fairness considerations (e.g., deployment of technologies that could make decisions that unfairly impact specific groups), privacy considerations, and security considerations.
        \item The conference expects that many papers will be foundational research and not tied to particular applications, let alone deployments. However, if there is a direct path to any negative applications, the authors should point it out. For example, it is legitimate to point out that an improvement in the quality of generative models could be used to generate deepfakes for disinformation. On the other hand, it is not needed to point out that a generic algorithm for optimizing neural networks could enable people to train models that generate Deepfakes faster.
        \item The authors should consider possible harms that could arise when the technology is being used as intended and functioning correctly, harms that could arise when the technology is being used as intended but gives incorrect results, and harms following from (intentional or unintentional) misuse of the technology.
        \item If there are negative societal impacts, the authors could also discuss possible mitigation strategies (e.g., gated release of models, providing defenses in addition to attacks, mechanisms for monitoring misuse, mechanisms to monitor how a system learns from feedback over time, improving the efficiency and accessibility of ML).
    \end{itemize}
    
\item {\bf Safeguards}
    \item[] Question: Does the paper describe safeguards that have been put in place for responsible release of data or models that have a high risk for misuse (e.g., pretrained language models, image generators, or scraped datasets)?
    \item[] Answer: \answerNA{}.
    \item[] Justification: This paper does not require safeguards.
    \item[] Guidelines:
    \begin{itemize}
        \item The answer NA means that the paper poses no such risks.
        \item Released models that have a high risk for misuse or dual-use should be released with necessary safeguards to allow for controlled use of the model, for example by requiring that users adhere to usage guidelines or restrictions to access the model or implementing safety filters. 
        \item Datasets that have been scraped from the Internet could pose safety risks. The authors should describe how they avoided releasing unsafe images.
        \item We recognize that providing effective safeguards is challenging, and many papers do not require this, but we encourage authors to take this into account and make a best faith effort.
    \end{itemize}

\item {\bf Licenses for existing assets}
    \item[] Question: Are the creators or original owners of assets (e.g., code, data, models), used in the paper, properly credited and are the license and terms of use explicitly mentioned and properly respected?
    \item[] Answer: \answerNA{}.
    \item[] Justification: This paper does not use existing assets.
    \item[] Guidelines:
    \begin{itemize}
        \item The answer NA means that the paper does not use existing assets.
        \item The authors should cite the original paper that produced the code package or dataset.
        \item The authors should state which version of the asset is used and, if possible, include a URL.
        \item The name of the license (e.g., CC-BY 4.0) should be included for each asset.
        \item For scraped data from a particular source (e.g., website), the copyright and terms of service of that source should be provided.
        \item If assets are released, the license, copyright information, and terms of use in the package should be provided. For popular datasets, \url{paperswithcode.com/datasets} has curated licenses for some datasets. Their licensing guide can help determine the license of a dataset.
        \item For existing datasets that are re-packaged, both the original license and the license of the derived asset (if it has changed) should be provided.
        \item If this information is not available online, the authors are encouraged to reach out to the asset's creators.
    \end{itemize}

\item {\bf New Assets}
    \item[] Question: Are new assets introduced in the paper well documented and is the documentation provided alongside the assets?
    \item[] Answer: \answerNA{}
    \item[] Justification: This paper does not release new assets.
    \item[] Guidelines:
    \begin{itemize}
        \item The answer NA means that the paper does not release new assets.
        \item Researchers should communicate the details of the dataset/code/model as part of their submissions via structured templates. This includes details about training, license, limitations, etc. 
        \item The paper should discuss whether and how consent was obtained from people whose asset is used.
        \item At submission time, remember to anonymize your assets (if applicable). You can either create an anonymized URL or include an anonymized zip file.
    \end{itemize}

\item {\bf Crowdsourcing and Research with Human Subjects}
    \item[] Question: For crowdsourcing experiments and research with human subjects, does the paper include the full text of instructions given to participants and screenshots, if applicable, as well as details about compensation (if any)? 
    \item[] Answer: \answerNA{}.
    \item[] Justification: We did not use crowdsourcing or conduct research with human subjects.
    \item[] Guidelines:
    \begin{itemize}
        \item The answer NA means that the paper does not involve crowdsourcing nor research with human subjects.
        \item Including this information in the supplemental material is fine, but if the main contribution of the paper involves human subjects, then as much detail as possible should be included in the main paper. 
        \item According to the NeurIPS Code of Ethics, workers involved in data collection, curation, or other labor should be paid at least the minimum wage in the country of the data collector. 
    \end{itemize}

\item {\bf Institutional Review Board (IRB) Approvals or Equivalent for Research with Human Subjects}
    \item[] Question: Does the paper describe potential risks incurred by study participants, whether such risks were disclosed to the subjects, and whether Institutional Review Board (IRB) approvals (or an equivalent approval/review based on the requirements of your country or institution) were obtained?
    \item[] Answer: \answerNA{}.
    \item[] Justification:  We did not use crowdsourcing or conduct research with human subjects.
    \item[] Guidelines:
    \begin{itemize}
        \item The answer NA means that the paper does not involve crowdsourcing nor research with human subjects.
        \item Depending on the country in which research is conducted, IRB approval (or equivalent) may be required for any human subjects research. If you obtained IRB approval, you should clearly state this in the paper. 
        \item We recognize that the procedures for this may vary significantly between institutions and locations, and we expect authors to adhere to the NeurIPS Code of Ethics and the guidelines for their institution. 
        \item For initial submissions, do not include any information that would break anonymity (if applicable), such as the institution conducting the review.
    \end{itemize}

\end{enumerate}

\end{document}